\DeclareOldFontCommand{\bf}{\normalfont\bfseries}{\mathbf}
\tikzset{%
  standard/.style={%
    draw,circle
  },
  goal/.style={%
    draw,fill=green!20,diamond
  },
  goal level/.style={%
    draw,fill=gray!20
  }
}
\newcommand*{\E}{\mathbb{E}}
\newcommand*{\Geo}{\mathrm{Geo}}
\newcommand*{\truncGeo}{\mathrm{TruncGeo}}
\newcommand*{\Exp}{\mathrm{Exp}}
\newcommand*{\tdfs}{t_{\mathrm{DFS}}}
\newcommand*{\tdfss}{\tilde t^{\mathrm{DFS}}_{\mathrm{SGL}}}
\newcommand*{\tbfss}{t^{\mathrm{BFS}}_{\mathrm{SGL}}}
\newcommand*{\tdfsm}{\tilde t^{\mathrm{DFS}}_{\mathrm{MGL}}}
\newcommand*{\tbfsm}{t^{\mathrm{BFS}}_{\mathrm{MGL}}}
\newcommand*{\tdfsb}{\tilde t^{\mathrm{DFS}}_{\mathrm{BG}}}
\newcommand*{\tdfsbl}{t^{\mathrm{DFS}}_{\mathrm{BGL}}}
\newcommand*{\tdfsbu}{t^{\mathrm{DFS}}_{\mathrm{BGU}}}
\newcommand*{\tbfsb}{t^{\mathrm{BFS}}_{\mathrm{BG}}}
\newcommand*{\tdfsc}{\tilde t^{\mathrm{DFS}}_{\mathrm{CB}}}
\newcommand*{\tdfscl}{t^{\mathrm{DFS}}_{\mathrm{CBL}}}
\newcommand*{\tdfscu}{t^{\mathrm{DFS}}_{\mathrm{CBU}}}
\newcommand*{\tdfsclb}{t^{{\mathrm{DFS}}}_{\mathrm{CBL}}}
\newcommand*{\tdfscub}{t^{{\mathrm{DFS}}}_{\mathrm{CBU}}}
\newcommand*{\tbfsc}{t^{\mathrm{BFS}}_{\mathrm{CB}}}
\newcommand*{\tdfsf}{\tilde t^{\mathrm{DFS}}_{\mathrm{FG}}}
\newcommand*{\tdfsfl}{t^{\mathrm{DFS}}_{\mathrm{FGL}}}
\newcommand*{\tdfsfu}{t^{\mathrm{DFS}}_{\mathrm{FGU}}}
\newcommand*{\tbfsf}{t^{\mathrm{BFS}}_{\mathrm{FG}}}
\newcommand*{\tc}{\mathrm{tc}}
\newcommand*{\descendants}{\mathrm{descendants}}
\newcommand*{\lvl}{\mathrm{level}}
\newcommand*{\lbg}{L^{\mathrm{BG}}}
\newcommand*{\lfg}{L^{\mathrm{FG}}}
\newcommand*{\p}{{\bf p}}
\newcommand*{\pnp}{{\bf p}^{\text{n-puzzle}}}
\newcommand*{\pnpi}{p^{\text{n-puzzle}}}
\newcommand*{\SetR}{\mathbb{R}}
\newcommand*{\SetN}{\mathbb{N}}
\newtheorem{theorem}{Theorem}
\newtheorem{proposition}[theorem]{Proposition}
\newtheorem{corollary}[theorem]{Corollary}
\newtheorem{lemma}[theorem]{Lemma}
\newtheorem{assumption}[theorem]{Assumption}
\theoremstyle{definition}
\newtheorem{definition}[theorem]{Definition}
\newcommand*{\up}{\mathrm{up}}
\newcommand*{\side}{\mathrm{side}}
\newcommand*{\down}{\mathrm{down}}
\newcommand*{\bl}{b}            
\newcommand*{\bg}{\beta}          
\newcommand*{\rextra}{r_{\mathrm{extra}}} 
\newcommand*{\pup}{p_{\mathrm{up}}}       
\newcommand*{\pside}{p_{\mathrm{side}}}       
\newcommand*{\pdown}{p_{\mathrm{down}}}       
\newcommand*{\pdu}{p_{\mathrm{down,up}}}
\newcommand*{\pud}{p_{\mathrm{up,down}}}
\newcommand*{\puu}{p_{\mathrm{up,up}}}
\newcommand*{\pdd}{p_{\mathrm{down,down}}}
\newcommand*{\pus}{p_{\mathrm{up,side}}}
\newcommand*{\pds}{p_{\mathrm{down,side}}}
\newcommand*{\psu}{p_{\mathrm{side,up}}}
\newcommand*{\psd}{p_{\mathrm{side,down}}}
\newcommand*{\pss}{p_{\mathrm{side,side}}}
\newcommand*{\bup}[1][]{\bl_{\mathrm{up}#1}}
\newcommand*{\bside}[1][]{\bl_{\mathrm{side}#1}}
\newcommand*{\bdown}[1][]{\bl_{\mathrm{down}#1}}
\newcommand*{\bdir}[1][]{\bl_{\mathrm{dir}#1}}
\newcommand*{\bgup}[1][]{\bg_{\mathrm{up}#1}}
\newcommand*{\bgside}[1][]{\bg_{\mathrm{side}#1}}
\newcommand*{\bgdown}[1][]{\bg_{\mathrm{down}#1}}
\newcommand*{\bgdir}[1][]{\bg_{\mathrm{dir}#1}}
\newcommand*{\floor}[1]{\lfloor #1 \rfloor}
\newcommand*{\dist}{\mathrm{dist}}
\newcommand*{\Seq}{\mathrm{Seq}}
\newcommand*{\seq}{\mathrm{seq}}
\newcommand*{\dir}{\mathrm{dir}}
\newcommand*{\mh}{\mathrm{mh}}
\newcommand*{\Std}{\mathrm{Std}}
\renewcommand*{\bar}{\overline}
\newenvironment{keywords}%
{\begin{abstract}\noindent}%
{\end{abstract}}
\title{A Topological Approach to Meta-heuristics: Analytical Results on the BFS vs.\
  DFS Algorithm Selection Problem.}
\author{Tom Everitt \\ tom.everitt@anu.edu.au \and Marcus Hutter
  \\marcus.hutter@anu.edu.au }
\begin{document}
\maketitle

\begin{abstract}
Search is a central problem in artificial intelligence, and
breadth-first search (BFS) and depth-first search (DFS) are the two most
fundamental ways to search.
In this paper we derive estimates for average BFS and DFS runtime.
The average runtime estimates can be used to
allocate resources or judge the hardness of a problem.
They can also be used for selecting the best graph representation,
and for selecting the faster algorithm out of BFS and DFS.
They may also form the basis for an analysis of more advanced search
methods.
The paper treats both tree search and graph search.
For tree search, we employ a probabilistic model of goal distribution;
for graph search, the analysis depends on an additional statistic of
path redundancy and average branching factor.
As an application, we use the results to predict BFS and DFS runtime
on two concrete grammar problems and on the N-puzzle.
Experimental verification shows that our analytical
approximations come close to empirical reality.
\end{abstract}

\begin{keywords}
BFS, DFS,
tree search,
graph search,
analytical,
average runtime,
expected runtime,
algorithm selection problem,
meta-heuristics,
probabilistic goal distribution
\end{keywords}


\pagebreak
\tableofcontents




\section{Introduction}

Many problems in artificial intelligence may be viewed as
\emph{search problems}, including planning, learning, problem
solving, and (logical) reasoning.
Search problems can often be formulated as \emph{graph search problems},
and can be solved by exploring a space of
possible solutions in a more or less systematic order
\citep{Russell2010, Edelkamp2012}.
%
%
Information that is useful for deciding how to approach a problem
include:
\begin{itemize}
\item How long is the search expected to take for a given
  graph representation and search method?
\item Which graph representation of the problem
  yields the fastest search?
\item Which algorithm is likely to be the fastest?
\end{itemize}
Such knowledge can be used either by a human controller,
or be incorporated in a meta-algorithm for problem solving.

In this study we analyse
the \emph{expected runtime} of
\emph{breadth-first search (BFS)} and \emph{depth-first search (DFS)}.
We focus on expected (or average) runtime,
since expected performance often is the most
relevant measure when allocating resources,
and when choosing algorithm and graph representation.
We focus on BFS and DFS because they are two of the simplest and
most fundamental ways to search, and also exhibit a nice duality
between \emph{searching near} (BFS) and \emph{searching far} (DFS).
Understanding the basic mechanisms of search is likely
to be helpful both in the construction of new search algorithms,
and in the analysis of existing ones.

Previous results on BFS and DFS
have mainly focused on worst case analysis.
For DFS, \citet{Knuth1975} developed an influential technique
for estimating the size of the search tree.
Assuming the tree had similar branching factor in all branches,
\citeauthor{Knuth1975} estimated the search tree size by
multiplying the observed branching factors on the way down through
the tree.
Despite its simplicity, the technique was practically useful
and was subsequently extended and refined by
\citet{Purdom1978}, \citet{Chen1992}, and \citet{Lelis2013}.
Results relevant to BFS include the
analysis of A* \citep{Nilsson1971}
and the analysis of iteratively deepening A* (IDA*)
developed by \citet{Korf2001} and extended by \citet{Zahavi2010}.
When no heuristic information is available A* reduces to
BFS, and IDA* to a memory efficient but slow version of BFS.
Approaches to algorithm selection \citep{Rice1975} have mostly relied on
machine learning techniques applied to problem features.
Such results often provide limited insight into
\emph{why} a certain approach works better in a certain instance
\citep{Kotthoff2014, Hutter2014,Thompson2011, ArbelaezRodriguez2011}.

To facilitate our analysis, we use a probabilistic model of goal
distribution.
%
Our main contribution is an analysis of expected BFS and DFS runtime
as a function of tree depth, goal level, branching factor, and path
redundancy (\cref{sec:sgl,sec:mgl,sec:cb,sec:grammar}).
Estimation of the required parameters is discussed in \cref{sec:graph-param}.
We analyse both tree search and graph search versions of
BFS and DFS.
Following an informal overview of the results in \cref{sec:informal}
and a broader literature review in \cref{sec:litrev},
technical background and definitions are provided in \cref{sec:preliminaries}.
Our analytical results are verified experimentally in
\cref{sec:experimental-verification}.
Conclusions and outlook come in \cref{sec:conclusions}.
Finally, a list of notation can be found in \cref{sec:list-of-notation}.

Some of the results have previously been published in conference
papers \citep{Everitt2015a, Everitt2015b}. In this paper, we
have added sections on estimation of the graph parameters
and on extensions to heuristic search
(\cref{sec:graph-param,sec:heuristic}),
extended the empirical verification (\cref{sec:experimental-verification}),
and made substantial improvements to especially
DFS graph search theory (\cref{sec:dfs-cb}).
We also
provide additional background, illustrations, and discussion,
and add a more extensive literature survey along with a
statement of our grander vision for this work.

\subsection{Informal Overview of Results}
\label{sec:informal}



This section gives an informal account of our results.
A wide range of problems may be formulated as search
in a graph of \emph{nodes} and \emph{edges}.
The search starts in a (possibly random) \emph{start node},
with the aim of reaching a \emph{goal node} via the edges.
For example, consider the search for a university schedule.
A schedule is a \emph{goal node} if no student and no professor
is scheduled to be at multiple places at the same,
and no two classes are simultaneously held in the same room.
\emph{Neighbouring schedules (nodes)} are schedules that can be
reached by a single swap of teacher, location or time.
Such schedules are connected by an edge in the search graph.
One way to do the search is to commence at a random or empty schedule,
and progress by local modifications (i.e., jumps across edges),
until a goal schedule is reached.


There is an infinitude of ways to perform such graph searches.
BFS and DFS are two simple, natural strategies.
They are opposites in the sense that BFS focuses the
search as near to the start node as possible,
while DFS goes as far from the origin as possible.
From this description, one may already suspect that
BFS should benefit when goals are located close to the
origin, while DFS benefits when goals are far from the
origin.
Indeed, our results verify this intuition in a variety
of settings.

We define \emph{runtime} as the number of nodes that need to
be explored until a goal is found.
Throughout, we assume that the maximum search depth
(the \emph{radius of search}) is bounded.

In the simplest model that we investigate in \cref{sec:sgl},
all goals are
located at a certain distance from the start node
in a \emph{tree} search space
where each node is reachable through one path only.
We derive \emph{average} or \emph{expected runtime} as a function of
(1) the distance of the goal from the origin
(the \emph{goal level}), and
(2) the frequency of goals at this distance.

Some interesting observations can be made already in this simple
model.
First, the point where DFS overtakes BFS
depends both on the goal probability and the goal level.
When the goal probability is high, the goal level break point is
roughly halfway between the origin and the maximum search depth
in binary trees.
Unsurprisingly, BFS has the advantage when goals are closer to
the origin, and vice versa.
More interestingly, BFS benefits relative to DFS when the goal probability
gets smaller.
Our model makes the relation precise, and predicts e.g.\
whether DFS will benefit from an increase $x$ in goal depth
combined with a decrease $y$ in goal frequency.
Such knowledge may be useful when choosing
between BFS and DFS, in decisions of how to model
a problem as a graph,
and in the construction of novel meta-heuristics.

We relax the assumptions of the single goal level model in two steps.
The model of \cref{sec:mgl} keeps the tree assumption,
but permits goals to be distributed
at multiple levels, with one goal frequency for every level of the tree.
This makes the analysis of DFS more challenging,
and somewhat coarser approximations are required to obtain a closed
form expression.
BFS can still be analysed exactly.
As before, we find that BFS benefits from goals closer
to the origin, and that DFS benefits from goals closer to the
maximum search depth.
This more general model also enables us to investigate the
effect of spreading goals over many different levels compared
to concentrating the goals to a few levels.
Experimentally, we find that BFS benefits from a greater
spread compared to DFS.
The result holds when the spread is \emph{balanced fairly}
around a central goal level.
We consider a spread \emph{fair} when the goal-likelihood of
a node $k$ levels above the central goal level
is the same as that of a node $k$ levels below.

The final relaxation in \cref{sec:cb}
removes also the tree assumption on the search graph.
Non-tree graphs vary widely along dimensions such as
connectedness/path-redundancy and
average number of neighbours.
These aspects are captured for our analysis in a collection
of parameters called the \emph{length-to-depth counters}.
The length-to-depth counters essentially measure how many nodes
are reachable at various combinations of distances from the
origin, and can be derived from standard parameters such as the
branching factors.
We find that knowing the length-to-depth counters (in addition to
the goal probabilities described before) permits us to approximate
expected BFS runtime, and to give upper and lower bounds on
both DFS tree search and DFS graph search expected runtime.
The DFS bounds may be uninformative in sparsely connected graphs,
where the tree models are more informative.
However, the bounds do provide revealing predictions in
more connected graphs, such as the N-Puzzle and certain grammar
problems. 

\section{Grander Vision and Literature Review}
\label{sec:litrev}

\subsection{Grander Vision}

The grander vision for future work is to construct search algorithms
that adapt their search strategy based on problem features.
%
A very wide range of search algorithms have been developed,
each with their own strengths and weaknesses.
Most of them do not adapt to features of the problem.
Instead, it is usually up to the user to select algorithm
and parameters for each problem.
An automation of this task packaged in a generally applicable
search algorithm could save both developing time and improve
performance.
Since search is a very common problem in AI, the benefits could
be substantial.

Schematically, the solving of many search problems involves
(at least) the following phases:
\begin{enumerate}
\item Start with a problem description.
  For example a SAT-formula to satisfy, a map of cities to traverse,
  or an engineering specification of a VLSI chip.
\item Find a suitable graph representation of the problem.
  This involves specifying what a state is, which states are
  connected, and possibly algorithm-specific operations such as
  how states can be combined and how random states can be generated.
\item Decide and execute a traversal of the search graph.
  For example BFS, DFS, A*, Simulated Annealing, or a genetic algorithm
  \citep{Aarts2003}.
\end{enumerate}

Features that could be useful for algorithm selection could be
mined at any of these stages.
For example, a local sample of the search graph could give estimates
of connectedness, chromatic number, and other graph properties.
The initial findings along a search trajectory can be used to
estimate problem size and runtime \citep{Knuth1975,Kilby2006}.
The original description could also be used: for example, the
number of clauses in a SAT-formula \citep{Haim2008}.
However, the much greater diversity of description types may make
it challenging to create a generally applicable search algorithm
that uses features based on this first stage of the problem solving
(compare an engineering specification for a VLSI chip with a
map for a travelling salesman problem).
In contrast, the underlying search graphs are often readily
comparable, so graph features form a natural starting point.
Constraints on computational resources such as memory and
CPU time are also likely to be valuable features.

Several kinds of inferences could potentially be made from
available problem features.
Inferences could be made analytically, for example through
mathematical proofs showing that under certain conditions
one strategy is better than another.
Another option is to apply machine learning techniques to
experimental data on algorithm performance.
The output of the analysis could either be
a classifier specifying which algorithm is better in which context,
or be aimed at runtime estimates as a function of problem features.
Of course, runtime estimates indirectly define a classifier
of best algorithm (pick the fastest).

To put our aim into context, we next review relevant works.


\subsection{Literature Review}
\label{sec:lit-rev}

We divide our review of related work into two parts.
The works in the first part assume that a portfolio of
predefined algorithms is given, and only try to predict which
algorithm in the portfolio is better for which problem.
The second part reviews approaches that try to build new
search policies, possibly using a set of basic algorithms as
building blocks.

\paragraph{Feature-based algorithm selection}
For a given problem and a given \emph{portfolio} of algorithms,
the \emph{algorithm selection problem} asks which algorithm
is best to use \citep{Rice1975, Kotthoff2014, Smith-Miles2014}.
Tightly related is the question of inferring the search time of different
search algorithms on the problem, as this information can be used
to select the fastest algorithm.
Both analytical investigations and machine learning techniques applied
to empirical data have been tried.
The latter is sometimes known as \emph{empirical performance models}.
For example, \citet{Haim2008} approach the SAT problem,
and predict search time and best search policy
based on properties of the given
formula (such as the number and the size of clauses).
The most comprehensive surveys are
given by \citet{Hutter2014} and \citet{Kotthoff2014},
and the PhD theses by \citet{Thompson2011} and \citet{ArbelaezRodriguez2011}.

As mentioned in the introduction,
\citet{Knuth1975} and \citet{Korf2001} have developed analytical
approaches to estimating the size of the search tree.
This gives a worst-case bound for search performance, since at most
we can search the entire tree.
\citet{Kilby2006} generalise \citeauthor{Knuth1975}'s method,
and also use it to select search policy for the SAT problem
based on which search policy has the lowest estimated runtime.


Many other approaches to the algorithm selection problem
instead try to infer the best search
policy directly, without the intermediate step of estimating runtime.
\citet{Fink1998} does this for STRIPS-like learning using only the
problem size to infer which method is likely to be more efficient.
Schemes using much wider ranges of problem properties are applied to
CSPs by \citet{Thompson2011, ArbelaezRodriguez2011}, and to
the NP-complete problems SAT, TSP and Mixed integer programming
by \citet{Hutter2014}.
\citet{Smith-Miles2012} review and discuss commonly used features
for the algorithm selection problem, mainly applied to the local search scenario.
They divide features into two main categories: General and problem-specific.
General features usually phrased in terms of the \emph{fitness landscape}
(i.e., the target function and the neighbourhood structure).
A common fitness landscape feature is for example the variability
(\emph{ruggedness}) of the target function.
Another general feature is the performance of a simple,
fast algorithm such as gradient descent.
Problem-specific features are discussed for a range of NP-complete
problems such as TSP and Bin-packing.

\paragraph{Constructing a search policy}
There are also meta-approaches to search that do not rely
on a portfolio pre-defined algorithms.
One early example is explanation-based Learning (EBL)
\citep{Dejong1986, Mitchell1986, Minton1988},
which is a general method for learning from examples and domain knowledge.
In the context of search, the domain knowledge is the neighbourhood
function (or the consequence of applying an `action' to a state).
An example to learn from can be the search trace of an algorithm that
has already tried to solve the problem.
The EBL learner analyses the different decisions represented in
the search trace, judges whether they were good or bad, and tries
to find the \emph{reason} they were good or bad. Once a reason
has been found, the gained understanding can be used to pick similar good
decisions at an earlier point during the next search,
and to avoid similar bad decisions
(decisions leading to paths where no goal will be found).
EBL systems have been applied to STRIPS-like planning scenarios
\citep{Minton1988, Minton1990}.

One characteristic feature of EBL is that it requires only one or
a few training examples (in addition to the domain knowledge).
While attractive, it can also lead to overspecific learning \citep{Minton1988}.
Partial Evaluation (PE) is an alternative learning method that is
more robust in this respect, with less dependency on examples \citep{Etzioni1993}.
\citet{Leckie1998} develop an inductive way to learn search control
knowledge (in contrast to the deductive generalisations performed by EBL and PE),
where plenty of training examples substitute for domain knowledge.

A more modern approach is known as \emph{hyper heuristics} \citep{Burke2003, Burke2013}.
It views the problem of inferring good search policies
more abstractly. Rather than interacting with the neighbourhood
structure/graph problem directly, the hyper heuristic only has access
to a set of search policies for the original graph problem.
The search policies are known as \emph{low-level heuristics} in
this literature (not to be confused with \emph{heuristic functions}).
The goal of the hyper heuristic is to find a good policy for
when to apply which low-level heuristic.
Hyper heuristic approaches differs from algorithm selection
in that a new choice of low-level algorithm is made repeatedly,
rather than just once initially.

One example of a hyper heuristic was constructed by \citet{Ross2002}, who
used Genetic Algorithms to learn which
bin-packing heuristic to apply in which type of state in a bin-packing problem.
The learned hyper heuristic outperformed all the provided low-level
heuristics used by themselves.
In applications of hyper heuristics, the low-level heuristics are typically
simple search policies provided
by the human programmers, although nothing prevents them from
being arbitrarily advanced meta-heuristics.
Some research is also being done on automatic construction
of low-level heuristics (see \citep{Burke2013} for references).

Other work on choosing between heuristics include
\citet{domshlak2012,thayer2011,tolpin2013,tolpin2014}.
A related approach directed at programming in general is
\emph{programming by optimisation} \citep{Hoos2012},
where machine learning techniques are used to find the best algorithm
in a space of programs delineated by the human programmer.

\subsection{Our Contribution}

The vast majority of the algorithms described above rely on
machine learning techniques being applied to a set of easily
computable problem features.
This often provides only minimal insight into why a certain technique
works better in a certain context.

To complement previous efforts, this work focuses solely on
analytical insights and expected runtime.
As a starting point we focus
on BFS and DFS expected runtime based on analytically
tractable problem features.
While less immediately applicable, we hope that these kinds
of analyses will ultimately prove valuable in the construction
of flexible search algorithms that make use of a wide range of
problem features.

\section{Preliminaries}
\label{sec:preliminaries}

This section provides various background on material
that will be important for the development of the rest
of the paper.

\paragraph{Graphs and Trees}

A \emph{(directed) graph} is a set $V$ of nodes together
with a set $E$ of \emph{edges}, where
$E\subseteq\{(v_1,v_2):v_1,v_2\in V, v_1\not= v_2\}$.
Throughout we always assume that graphs are directed,
and that edges are represented by ordered pairs $(v_1,v_2)$.
There is a \emph{path from $v_1$ to $v_3$} if there
either is an edge from $v_1$ to $v_3$, or if there is a
node $v_2$ such that there is a path from $v_1$ to $v_2$
and a path from $v_2$ to $v_3$.
When there is a path from $v_1$ to $v_2$, we also say
that $v_1$ and $v_2$ are \emph{connected}, and that
$v_2$ is a \emph{descendant of $v_1$}.
The \emph{length} of a path is the number of edges it contains,
and the \emph{distance} between two nodes is the length of the shortest
path between them (if one exists).
An \emph{undirected graph} is a directed graph where
$(v_2,v_1)$ is an edge whenever $(v_1,v_2)$ is, for any $v_1,v_2\in V$.

A \emph{rooted tree} is a graph with a \emph{root $v_0$}, and
where for every node $v$, there is exactly one path from $v_0$ to $v$.
The \emph{level} of a node $v$ is the distance from the root $v_0$ to $v$.
The \emph{depth $d$} is the length of a
longest path starting from $v_0$.
If every node on level less than $D\in\SetN$ has exactly $b$ children,
and nodes on level $D$ are \emph{leafs} (have no children), then the tree
is \emph{complete with branching factor $b$ and depth $D$}.
Such a tree will have $b^D$ leaves and $(b^{D+1}-1)/(b-1)$ nodes.
In particular, \emph{complete binary trees} (with branching factor 2)
have $2^D$ leaves and $2^{D+1}-1$ nodes.


\subsection{Search Problems}
\label{sec:graph-problems}

A common feature of many search problems is that there are a set of
operations for cheaply modifying a proposed solution into similar
proposed solutions. This makes it natural to view the problem as a
\emph{graph search problem}, where proposed solutions are \emph{states}
or \emph{nodes}.
The modification operations induce \emph{directed edges}.
Sometimes the goal is to find a path to a solution state; sometimes the
solution state itself suffices.
Our results apply to any search problem that fits into
this abstract framework.

Most practical search problems fit into either of the following
two kinds of graph search problems.

\begin{definition}[Constructive graph search problem]
  A \emph{constructive graph search problem} consists of a
  \emph{state space $S$}, a
  starting state $s_0\in S$, and the following efficiently computable
  functions:
  \begin{enumerate}
    \setlength{\parskip}{0pt}
  \item Neighbourhood $N: S \to 2^S$
  \item Goal check $C: S \to \{0,1\}$
  \item Edge cost: $\mathit{EC}:(S \times S) \to \SetR^+$\label{it:edge-cost}
  \end{enumerate}
  A \emph{constructive solution} is a path $s_0,\dots,s_n$ from the starting
  state $s_0$ to a goal state $s_n$ with $C(s_n)=1$.
  The \emph{solution quality} of the path $s_0,\dots,s_n$ is
  $\sum_{i=0}^{n-1}\mathit{EC}(s_i, s_{i+1})$.
  Sometimes a heuristic $g: S \to \SetR^+$ is available to guide
  the search, though we only consider this situation briefly in
  \cref{sec:heuristic}.
\end{definition}

For instance, planning problems are naturally formalised as constructive
graph search problems.
A solution is a \emph{plan} (a sequence of actions) that
transforms the starting state into a goal state.
The neighbourhood function gives a list of states reachable by a single
action from a state.
The goal check indicates whether a state is a goal,
and the edge cost indicates how costly it is to use a certain action
(how it affects the solution quality).
In this work we will assume that the edge cost is 1 for all edges.

A heuristic may give an estimate of how close the given state is to a goal
state (in terms of edge cost).
In this paper, we disregard the additional complexities arising
from the use of heuristic functions
(for details, see \citet{Pearl1984,Russell2010,Edelkamp2012}).

A second kind of graph search problems are
problems where only the final solution matters, and not the path of how
to get there.
These problems are sometimes called local search problems:

\begin{definition}[Local graph search problem]
A \emph{local graph search problem} consists of a \emph{state space $S$}
together with the following efficiently computable functions:
\begin{enumerate}
\setlength{\parskip}{0pt}
\item Neighbourhood $N: S \to 2^S$
\item Constraint $C: S \to \{0, 1\}$
\item Objective function $Q: S\to \SetR$
\end{enumerate}
A \emph{local solution} is a state $s\in S$, $C(s)=1$, and its
\emph{solution quality} is $Q(s)$.
\end{definition}
In local graph search problems, the goal is to find a $v\in S$ that satisfies the
constraints $C$ and achieves as high objective value as possible.
The search for an optimal circuit layout is one example of a problem
that naturally formalises as a local graph search problems.
Neighbours are reached by modifying the current layout (changing one
connection), and
the objective function incorporates the component cost and
the energy efficiency of the layout.
The constraint disqualifies circuits that fail the specifications.

Any constructive search problem $G_1=\langle S_1,N_1,C_1, \mathit{EC}\rangle$
may be formulated as local search
problem $G_2 = \langle S_2, N_2, C_2, Q\rangle$, by letting
\begin{itemize}
\setlength{\parskip}{0pt}
\item the state space $S_2$ be the set of paths in the original problem $G_1$,
\item the objective $Q$ be to minimise the sum of the path cost,
\item the constraint $C_2$ check whether the last node of the path is a goal node, and
\item the neighbourhood function $N_2$ extend or contract a path by
  adding or removing a final node according to $N_1$
  (better choices of $N_2$ may be available).
\end{itemize}
For example, the travelling salesman problem can be viewed as either a
\emph{constructive} problem where a path is built step-by-step, or
as a \emph{local} problem where a full path is modified by swapping edges, and the objective function equals the summed edge cost.
Some potentially useful structure may be lost in the conversion
from a constructive to a local problem, however.

Although mixtures of local and constructive search problems are
possible (e.g., combining an objective function with
a constructive solution and edge cost),
most practical graph search problems naturally
formalise as either a constructive or a local graph search problem.
%
%
%
In this paper, we will focus solely on problems with a
binary distinction between goal and non-goal.
Both constructive and local search problems can get binary
goal predicates by choosing a threshold for maximum
total edge cost or minimum solution quality.

\subsection{Basic Search Algorithms}
\label{sec:graph-algorithms}


A \emph{search algorithm} is an algorithm that returns a solution
(a state or a path) to a graph search problem, given oracle access to
the functions $N$ and $C$, and possibly either to $\mathit{EC}$ and $h$,
or to $Q$
(depending on the type of the search problem).

\emph{Uninformed search} refers to the case where
neither a heuristic function nor an objective function is used
to guide the search.
The two standard methods for exploring a graph in this case
are BFS and DFS.
BFS searches a successively growing neighbourhood around
the the start node, while DFS follows a single path as long as
possible, and \emph{backtracks} when stuck.
Depending on the positions of the goals in the graph, BFS and DFS may
have substantially different performance.
The search orders are illustrated in \cref{fig:bfs-vs-dfs}
(and \cref{fig:bfs-vs-dfs-graph} on page \pageref{fig:bfs-vs-dfs-graph} below).

\begin{algorithm}
  \begin{algorithmic}
    \State Q $\gets$ emtpyQueue
    \State Discovered $\gets$ emptySet
    \State Q.add(start-node)
    \State Discovered.add(start-node)
    \While{Q not empty}
    \State $u\gets $Q.pop()
    \If{$C(u)$}
    \Return $u$\Comment{$u$ is goal}
    \EndIf
    \For{$v$ in $N(u)$}
    \If{tree search \textbf{or} not $v\in$ Discovered}
    \State Q.add($v$)
    \If{graph search}
    \State Discovered.add($v$)
    \EndIf
    \EndIf
    \EndFor
    \EndWhile
  \end{algorithmic}
  \caption{Pseudo-code for BFS (tree search or graph search)}
  \label{alg:bfs}
\end{algorithm}


\begin{algorithm}
  \begin{algorithmic}
    \State path $\gets$ empty list
    \State \Call{DFS-tree-rec}{$N$, $C$, start node, path, radius}
    \State
    \Function{DFS-tree-rec}{$N, C, u$, path, radius}
    \State path.append($u$)
    \If{$C(u)$}
    \Return $u$ \Comment{$u$ is goal}
    \EndIf
    \If{length(path) $<$ radius}
    \For{$v$ in $N(u)\setminus$path}
    \State\Call{DFS-tree-rec}{$N, C, v$, path, radius}
    \EndFor
    \EndIf
    \EndFunction
  \end{algorithmic}
  \caption{Depth-bounded DFS tree search}
  \label{alg:dfs-tree}
\end{algorithm}

\begin{algorithm}
  \begin{algorithmic}
    \State path $\gets$ empty list
    \State visited $\gets$ empty set
    \State \Call{DFS-graph-rec}{$N$, $C$, start node, path, radius, visited}
    \State

    \Function{DFS-graph-rec}{$N, C, u$, path, radius, visited}
    \State visited.add($u$)
    \State path.append($u$)
    \If{$C(u)$}
    \Return $u$ \Comment{$u$ is goal}
    \EndIf
    \If{length(path) $<$ radius}
    \For{$v$ in $N(u)\setminus$visited}
    \State \Call{DFS-graph-rec}{$N, C, v$, path, radius, visited}
    \EndFor
    \EndIf
    \EndFunction
  \end{algorithmic}
  \caption{Depth-bounded DFS graph search}
  \label{alg:dfs-graph}
\end{algorithm}
\todo{Change C to isGoal?}

\begin{figure}
  \centering
  \begin{subfigure}[l]{0.49\textwidth}
    \centering
    \includegraphics{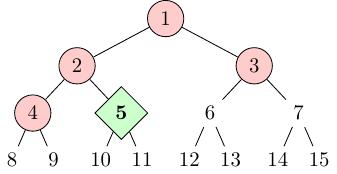}
    \caption{BFS}
  \end{subfigure}
  \begin{subfigure}[r]{0.49\textwidth}
    \centering
    \includegraphics{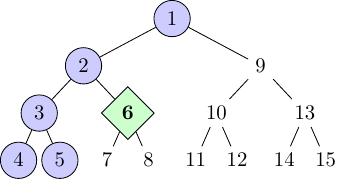}
    \caption{DFS}
  \end{subfigure}
  \caption{%
    The difference between BFS (left) and DFS (right) in
    a complete binary tree where a goal (diamond) is placed in the second
    position on level 2 (the third row).
    The numbers indicate traversal order.
    Circled nodes are explored before the goal is found.
    Note how BFS and DFS explore different parts of the tree.
    In bigger trees, this may lead to substantial differences
    in search performance.
    }
  \label{fig:bfs-vs-dfs}
\end{figure}

\paragraph{Tree search and graph search}
BFS and DFS come in two flavors, depending on whether they keep
track of visited nodes or not.
The \emph{tree search} variants do not keep track of visited nodes,
while the \emph{graph search} variants do.
In trees (where each node can only be reached through one path),
nothing is gained by keeping track of visited nodes.
In contrast,
keeping track of visited nodes can benefit search performance greatly
in multiply connected graphs,
although especially for DFS
the additional memory consumption may sometimes be prohibitive.
\Cref{alg:bfs} describes BFS tree search and graph search.
DFS tree search (\cref{alg:dfs-tree}) is substantially more
memory-efficient than DFS graph search (\cref{alg:dfs-tree})
and BFS: $O(d)$ compared to $O(b^d)$.
However, BFS can be emulated by \emph{iterative deepening DFS} (ID-DFS).
ID-DFS uses the same amount of memory as DFS tree search,
and only has a slightly longer runtime than BFS in most graphs\footnote{
  Assuming exponentially growing neighbourhoods and
  unit edge cost}
\citep[Sec.~3.4.5]{Russell2010}.

For general graphs, we consider DFS with bounded search depth.
Without a bound, a single path may span the entire or a very large
portion of the search space,
giving the search more the characteristics of a random walk than of
search with backtrack.
An unbounded DFS tree search may require as much memory as a BFS search.
This justifies the study of depth-bounded DFS tree search (\cref{alg:dfs-tree}).
A depth-bounded DFS graph search may be analysed with almost the same method,
and is interesting for comparison.
Unfortunately,  depth-bounded DFS graph search is not a \emph{complete}
search method in general graphs, as the search might cut itself off from
regions of the search space. (See \cref{fig:dfs-cut-off} for an example.)
In trees, the search strategies of DFS tree search and DFS graph search
are indistinguishable.


\begin{figure}
  \centering

  \includegraphics{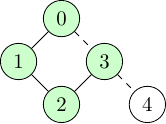}
  \caption{
    Depth-bounded DFS graph search with radius 3 cutting itself off from node 4.
    After node 3 has been visited, the search backtracks to the root node 0.
    While node 4 could originally have been visited via 0--3--4, this
    path is now blocked since node 3 already has been visited.
    DFS tree search does not have this problem.
  }
  \label{fig:dfs-cut-off}
\end{figure}



\subsection{Algorithm performance}

Performance on a single problem may be defined in terms of:
\begin{enumerate}
\setlength{\parskip}{0pt}
\item Solution quality.
\item The number of \emph{explored states} (a state $s$ is
  \emph{explored} if either $N(s)$ or $C(s)$ has been called).
\item The running time of the algorithm.
\item The memory consumption of the algorithm (typically measured by
  the maximum number of states kept in memory).
\end{enumerate}
(Asymptotic) average or worst-case analysis may be used when
measuring performance on a class of problems.

In this work, we will measure performance by the average number of
explored states until a goal is found; that is, item 2
and assuming only the first satisfactory goal matters.
In many cases the number of
explored states is proportional to the actual runtime (item 3),
since state expansion often is the dominant operation
during search.
We therefore permit ourselves to refer to
the number of nodes explored until a first goal is found
as the \emph{runtime} or \emph{search time}.
For example, the runtime of BFS is 5 and the runtime of DFS is
6 in \cref{fig:bfs-vs-dfs}.
If no goal exists,
the search method will explore all nodes before halting. In this case,
we define the runtime as the number of nodes in the search problem plus 1%
\ (i.e., $2^{D+1}$ in the case of a binary tree of depth $D$).\footnote{%
It may have seem more justified to set the non-goal case to
the exact number of nodes instead of adding 1.
However, adding 1 makes most expressions slightly more elegant,
and does not affect the results in any substantial way.
}

\subsection{Probability Theory}

The random variables $X_1,\dots,X_n$ are
\emph{independent and identically distributed (iid)} if
for all $i\in\{1,\dots,n\}$ and any outcome $x$,
$P(X_i\leq x)=P(X_1\leq x)$,
and the probability of any joint outcome $x_1,\dots,x_n$ satisfies
$P(X_1\leq x_1,\dots,X_n\leq x_n)=\prod_{i=1}^n P(X_i\leq x_i)$.

A random variable $X$ is \emph{geometrically distributed $\Geo(p)$} if
$P(X=k) = (1-p)^{k-1}p$ for $k\in\{1,2,\dots\}$.
The interpretation of
$X$ is the number of trials until the first success
when each trial succeeds with iid probability $p$.
Its cumulative distribution function (CDF) is $P(X\leq k) = 1-(1-p)^k$,
and its \emph{average} or \emph{expected value} is $\E[X]=1/p$.
When success is guaranteed to occur within the first $m$
trials, a \emph{truncated geometric distribution} arises.
A random variable $Y$ is
\emph{truncated geometrically distributed $X\sim\truncGeo(p, m)$} if
$Y = (X\mid X\leq m)$ for $X\sim\Geo(p)$, which gives
\begin{align*}
  P(Y=k)
    &= \begin{cases}
         \frac{(1-p)^kp}{1-(1-p)^m} & \text{for }k\in\{1,\dots, m\}\\
         0 & \text{otherwise.}
       \end{cases}\\
 \E[Y]=\tc(p,m)
    &= \E[X\mid X\leq m] = \frac{1-(1-p)^m(pm+1)}{p(1-(1-p)^m)}.
\end{align*}
When $p\gg \frac{1}{m}$, $Y$ is approximately $\Geo(p)$, and $\tc(p,m)\approx\frac{1}{p}$.
When $p\ll \frac{1}{m}$, $Y$ becomes approximately uniform on
$\{1,\dots,m\}$ and $\tc(p,m)\approx\frac{m}{2}$.

A random variable $Z$ is \emph{exponentially distributed $\Exp(\lambda)$}
if $P(Z\leq z) = 1 - e^{-\lambda z}$ for $z\geq 0$.
The expected value of $Z$ is $1/\lambda$, and the
probability density function of $Z$ is $\lambda e^{-\lambda z}$.
An exponential distribution with parameter $\lambda = -\ln(1-p)$
can be viewed as the continuous counterpart of a $\Geo(p)$
distribution.
We will use this approximation in \cref{sec:mgl}.
\begin{lemma}[Exponential approximation]\label{le:exp-geo-approx}
Let $Z\sim\Exp(-\ln(1-p))$ and $X\sim\Geo(p)$.
Then the CDFs for $X$ and $Z$ agree for integers $k$,
$P(Z\leq k)  = P(X \leq k)$.
The expectations of $Z$ and $X$ are also similar in the sense that
$0\leq \E[X]-\E[Z] \leq 1$.
\end{lemma}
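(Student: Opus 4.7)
The plan has two independent parts: CDF agreement and expectation bounds.

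For the CDF claim, I would simply substitute the definition of $\lambda$. Since $\lambda=-\ln(1-p)$, we have $e^{-\lambda}=1-p$, so for any integer $k\ge 0$,
\[
P(Z\le k)=1-e^{-\lambda k}=1-(1-p)^k,
\]
which is exactly the CDF of $X\sim\Geo(p)$ stated in \autoref{sec:preliminaries}. This step is essentially a one-line verification, which is why the exponential with rate $-\ln(1-p)$ is the natural continuous analogue of $\Geo(p)$.

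For the expectation bounds, I would start from the explicit difference
\[
\E[X]-\E[Z]=\frac{1}{p}-\frac{1}{-\ln(1-p)}=\frac{1}{p}+\frac{1}{\ln(1-p)},
\]
and reduce each inequality to a standard elementary bound on the exponential. The lower bound $\E[X]\ge\E[Z]$ is equivalent to $-\ln(1-p)\ge p$, i.e.\ $1-p\le e^{-p}$, which is the textbook inequality $e^{-x}\ge 1-x$ for $x\in[0,1)$. For the upper bound, I would substitute $q=-\ln(1-p)$ (so that $p=1-e^{-q}$ and $q>0$), rewrite $\E[X]-\E[Z]\le 1$ as
\[
\frac{1}{1-e^{-q}}-\frac{1}{q}\le 1,
\]
and clear denominators to obtain $(q+1)e^{-q}\le 1$, i.e.\ $e^q\ge q+1$, which is again a standard convexity bound on the exponential.

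I do not foresee a real obstacle: both halves are calculations with elementary inequalities once the substitution $q=-\ln(1-p)$ is made. The only place requiring care is checking that the algebraic manipulation is an equivalence (the denominators $p$, $\ln(1-p)$, $1-e^{-q}$, $q$ all have definite signs for $p\in(0,1)$, so multiplying through does not flip directions). If I wanted to be slightly more informative, I would also note the limiting values $\E[X]-\E[Z]\to 1/2$ as $p\to 0^+$ and $\E[X]-\E[Z]\to 1$ as $p\to 1^-$, which show the upper bound is tight and the lower bound is not, but this is not required by the statement.
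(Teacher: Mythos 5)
Your CDF argument is exactly the paper's: substitute $\lambda=-\ln(1-p)$ to get $P(Z\le k)=1-(1-p)^k=P(X\le k)$. For the expectation bounds, however, you take a genuinely different route. The paper never computes $\E[Z]$ explicitly; instead it extends the CDF comparison to non-integer $z$, observing $P(X\le z)\le P(Z\le z)<P(X-1\le z)$, and then invokes stochastic dominance to conclude $\E[X]\ge\E[Z]>\E[X]-1$ directly. You compute $\E[X]-\E[Z]=\tfrac1p+\tfrac{1}{\ln(1-p)}$ and reduce each inequality, via the substitution $q=-\ln(1-p)$, to the elementary bounds $e^{-x}\ge 1-x$ and $e^q\ge 1+q$; I checked the algebra and the sign bookkeeping when clearing denominators, and it is correct. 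The paper's argument is slicker in that it recycles the CDF comparison already established in the first half and needs no closed form for $\E[Z]$ (so it would generalize to settings where the mean is not available in closed form); your computation is more self-contained at the level of single-variable inequalities and, as a bonus, identifies the endpoint behaviour $\E[X]-\E[Z]\to\tfrac12$ as $p\to 0^+$ and $\to 1$ as $p\to 1^-$, showing the upper bound is asymptotically tight while the lower bound is not attained. Both proofs are valid.
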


\begin{proof}
For $z>0$, $P(Z\leq z) = 1-\exp(z \ln(1-p)) = 1-(1-p)^z$,
and $P(X \leq z) = 1-(1-p)^{\lfloor z \rfloor}$. Thus,
for integers $k>0$, $P(Z \leq k) = P(X\leq k)$ which proves the
first statement.
Further, $1-(1-p)^{\lfloor z\rfloor} \leq 1-(1-p)^{z} < 1-(1-p)^{\lfloor z+1 \rfloor} $,
so $P(X\leq z)\leq P(Z\leq z) < P(X-1 \leq z)$.
Hence $\E[X]\geq \E[Z] > E[X-1]=\E[X]-1$, which proves the
second statement.
\end{proof}

We will occasionally make use of the convention $0\cdot \mathit{undefined}=0$.

Let $\Omega$ be a \emph{sample space}, i.e.\ a set of possible outcomes.
The Law of Total Expectation allows us
to expand expectations by conditioning on disjoint events:

\begin{lemma}\label{le:expand-expectation}
Let $X$ be a random variable and let the sample space
$\Omega=\dot\bigcup_{i\in I} C_i$ be
partitioned by mutually disjoint events $C_i$.
Then $\E[X] = \E[\E[X\mid C_i]]=\sum_{i\in I} P(C_i)E[X\mid C_i]$.
\end{lemma}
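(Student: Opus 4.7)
The plan is to prove this by rewriting each term $P(C_i)\E[X\mid C_i]$ as $\E[X\,\mathbf{1}_{C_i}]$ (the expectation of $X$ restricted to $C_i$), and then using linearity of expectation together with the fact that the indicators $\mathbf{1}_{C_i}$ sum to $\mathbf{1}_\Omega$.

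First I would recall the defining identity of conditional expectation: for any event $C$ with $P(C)>0$,
\[
  \E[X\mid C] \;=\; \frac{\E[X\,\mathbf{1}_{C}]}{P(C)},
  \qquad\text{hence}\qquad P(C)\,\E[X\mid C] \;=\; \E[X\,\mathbf{1}_{C}].
\]
For any $i\in I$ with $P(C_i)=0$, the quantity $\E[X\mid C_i]$ is undefined, but the convention $0\cdot\mathit{undefined}=0$ announced just before the lemma makes the left-hand factor $P(C_i)\,\E[X\mid C_i]$ equal to $0$, which also agrees with $\E[X\,\mathbf{1}_{C_i}]=0$ in that case. So the identity $P(C_i)\,\E[X\mid C_i]=\E[X\,\mathbf{1}_{C_i}]$ holds uniformly in $i$.

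Next I would sum this identity over $i\in I$ and pull the sum inside the expectation by linearity:
\[
  \sum_{i\in I} P(C_i)\,\E[X\mid C_i]
  \;=\; \sum_{i\in I}\E[X\,\mathbf{1}_{C_i}]
  \;=\; \E\!\left[X\sum_{i\in I}\mathbf{1}_{C_i}\right].
\]
Since the $C_i$ partition $\Omega$, every outcome lies in exactly one $C_i$, so $\sum_{i\in I}\mathbf{1}_{C_i}=\mathbf{1}_\Omega\equiv 1$. Therefore the right-hand side equals $\E[X]$, which is what was to be shown.

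The only subtlety I foresee is the exchange of summation and expectation when $I$ is infinite; this is what I would flag as the main obstacle. It can be justified either by monotone convergence applied separately to the positive and negative parts of $X$ (so that integrability of $X$ transfers to absolute summability of the series), or more pragmatically by noting that throughout this paper $X$ takes values in a bounded range (bounded by the total tree size) and $I$ is finite, so the exchange is immediate. The ``convention'' step is purely notational and causes no real difficulty once $\E[X\,\mathbf{1}_{C_i}]$ is used as the canonical form.
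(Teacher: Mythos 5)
Your proof is correct. The paper itself states this lemma without proof, treating it as the standard law of total expectation, so there is no in-paper argument to compare against; your indicator-function derivation, $P(C_i)\,\E[X\mid C_i]=\E[X\,\mathbf{1}_{C_i}]$ followed by linearity and $\sum_i\mathbf{1}_{C_i}=1$, is exactly the canonical proof one would supply. You also correctly handle the two genuine subtleties: the $P(C_i)=0$ terms via the paper's stated convention $0\cdot\mathit{undefined}=0$, and the interchange of sum and expectation for infinite $I$ (which, as you note, is harmless here since in all of the paper's applications the index set is finite, namely the levels $0,\dots,D+1$ or the sets $T_{-1},\dots,T_D$, and $X$ is bounded by the total node count).
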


This concludes the background section, and the stage is
now set for the analysis proper.

\section{Tree with a Single Goal Level}
\label{sec:sgl}

We start with analysing expected BFS and DFS runtime in trees.
The results apply when the search graph is a tree, and when
tree search versions of BFS and DFS are used
(in which case any graph ``looks like'' a tree, as discussed
in \cref{sec:graph-algorithms}).
This section assumes that all goals are located on a single level
of the tree; i.e., all goals have the same distance from the start
node.
This is usually unrealistic, but makes the analysis easier.
The next section relaxes the assumption of a single goal level.

Our aim throughout is to derive closed-form approximations
for BFS and DFS expected search time.
\cref{fig:bfs-vs-dfs} illustrates
the different search strategies BFS and DFS,
and how they initially focus the search on different areas of the tree: BFS stays close to the root while DFS goes directly to the bottom.
In this section only, the comparison between BFS and DFS
expected search time yields an elegant decision boundary
between which method is better in expectation.

As a concrete example,
consider the problem of solving a Rubik's cube.
\citet{Rokicki2013} did a thorough analysis of this problem,
and found that there is an upper bound to how many moves
it can take to reach the goal, and
that most goals are located around level 17 ($\pm 2$ levels).
If we consider search algorithms that do not remember where they have been,
the search space becomes a complete tree with fixed branching factor 18
(or 13.3 on average, if we cannot immediately return to the preceding state)
\citep{Edelkamp1998}.
What would be the expected BFS and DFS search time for this problem?
Which one would be faster?



\paragraph{The model}
Our single goal level model is defined by the following and
illustrated in \cref{fig:sgl}.
In a binary tree of depth $D$, let solutions be distributed
on a single \emph{goal level $g\in \{0,\dots,D\}$}.
At the goal level, any node is a goal with iid
probability $p_g\in [0,1]$.
We will refer to these kinds of problems as
\emph{(single goal level) complete binary trees with
depth $D$, goal level $g$ and goal probability $p_g$}.

\begin{figure}
  \centering
  \begin{subfigure}{0.49\textwidth}
    \centering
    \includegraphics{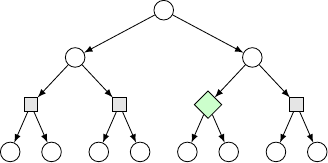}
  \end{subfigure}
  \begin{subfigure}{0.49\textwidth}
    \centering
    \includegraphics{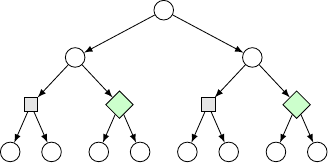}
  \end{subfigure}
  \caption{Two possible outcomes of goal distribution in a single goal
    level problem with max depth $D=3$, goal level
    $g=2$ (boxes) with nodes being goals (diamonds)
    with iid probability $p_g=1/3$.
    Depending on goal locations, BFS and DFS performance will
    differ. We are interested in \emph{expected} performance.
  }
  \label{fig:sgl}
\end{figure}

Note that there may be several or zero goals.
Denote with $\Gamma$ the event that a goal exists,
and $\bar \Gamma$ the event that no goal exists.
It will be useful later to also define
$\Gamma_k$ as the event that a goal exists on level $k$,
and $\bar\Gamma_k$ as its complement.
The probability that a goal exists is $P(\Gamma) =P(\Gamma_g)= 1-(1-p_g)^{2^g}$.
If a goal exists, let $Y\in\{1,\dots,2^g\}$
be the position of the first goal at level $g$.
Conditioned on a goal existing,
$Y$ is a truncated geometric variable $Y\sim\truncGeo(p_g, 2^g)$.
When $p_g\gg 2^{-g}$ the goal position $Y$ is approximately $\Geo(p_g)$,
which makes most expressions slightly more elegant.
This is often a realistic assumption since we usually
expect the problem to have a solution.
If $p\not\gg 2^{-g}$, then the likelihood of no goal is large.
Our analysis does not require that a goal exists.

\paragraph{Runtime estimates}
The following two propositions give runtime estimates
for BFS and DFS by following the counting schemes illustrated
in \cref{fig:sgl-proofs}.
The BFS result is particularly simple.
Throughout the paper, we use $t^{\mathit{alg}}_{{\textit{problem type}}}$ to denote
expected search time for algorithm \emph{alg} on the subscripted problem
type. A tilde on top denotes rough approximation.

\begin{proposition}[BFS runtime Single Goal Level]\label{pr:tbfs-sgl}
  Let the problem be a complete binary tree with depth $D$,
  goal level $g$ and goal probability $p_g$.
  When a goal exists and has position $Y$ on the goal level,
  the BFS search time is
  \begin{align}
    \tbfss(g, p_g, Y)         &= 2^g-1+Y \text{, with expectation}\label{eq:bfs1}\\
    \tbfss(g, p_g\mid \Gamma_g)  &= 2^g-1+\tc(p_g,2^g) \approx 2^g-1+\frac{1}{p_g}\label{eq:bfs-sgl-goal}.
  \end{align}
  In general, when a goal does not necessarily exist,
  the expected BFS search time is
  \begin{equation}\label{eq:bfs-sgl-no-goal}
    \tbfss(g, p_g) = P(\Gamma)\cdot(2^g-1+\tc(p_g,2^g)) + P(\bar\Gamma)\cdot 2^{D+1}
    \approx 2^g-1+\frac{1}{p_g}.
  \end{equation}
  The right hand approximations of \eqref{eq:bfs-sgl-goal}
  and \eqref{eq:bfs-sgl-no-goal} are close when $p_g\gg 2^{-g}$
  and $D\not\gg g$.
\end{proposition}

\begin{figure}
  \centering
  \begin{subfigure}[l]{0.48\textwidth}
    \centering
    \includegraphics{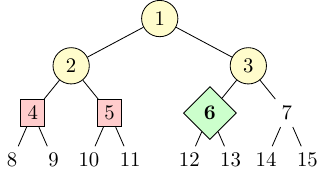}
    \caption{BFS count}
    \label{fig:sgl-proof-bfs}
  \end{subfigure}
  \begin{subfigure}[r]{0.51\textwidth}
    \centering
    \includegraphics{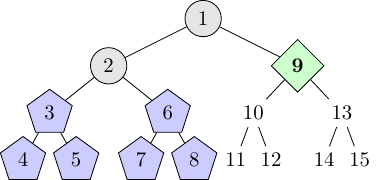}
    \caption{DFS count}
    \label{fig:sgl-proof-dfs}
  \end{subfigure}
  \caption{The counting schemes used in \cref{pr:tdfs-sgl,pr:tbfs-sgl},
    illustrated in a tree of depth 3.
    The BFS count is illustrated with a goal (diamond)
    on the third position on level 2,
    while the DFS count is illustrated with a goal on level 2.
  }
  \label{fig:sgl-proofs}
\end{figure}

\begin{proof}
When a goal exists,
BFS will first explore all of the top of the tree until depth $g-1$:
The $2^{(g-1)+1}=2^g$ nodes that are circles in \cref{fig:sgl-proof-bfs}.
BFS will then search $Y$ nodes on level $g$
(boxes and diamond in \cref{fig:sgl-proof-bfs}).
The total search time is thus $\tbfss(D, g, p_g, Y) = 2^g-1+Y$,
with expected value $2^g-1+\tc(p_g,2^g)$. 

In the general case when a goal does not necessarily exist,
the expected value of the
search time $X$ expands as
\begin{align*}
\E[X] &= P(\Gamma)\cdot \E[X\mid \Gamma] + P(\bar\Gamma)\cdot \E[X\mid \bar\Gamma]\\
      &= P(\Gamma)\cdot\tbfss(D,p,p_g\mid \Gamma_g) + P(\bar\Gamma)\cdot 2^{D+1}\\
      &= P(\Gamma)\cdot(2^g-1+\tc(p_g,2^g)) + P(\bar\Gamma)\cdot 2^{D+1}.
\end{align*}
When $p_g\gg 2^{-g}$, then $Y\approx \Geo(p)$,
and $P(\Gamma)\approx 1$ and $P(\bar\Gamma)\approx 0$.
Further, $2^D\not\gg 2^g$ since $D\not\gg g$, so
the term $P(\bar\Gamma)2^D$ cannot significantly affect the expectation.
This justifies the $(1/p_g-1)2^{D-g+1}$ approximation.
\end{proof}

\label{page:Korf}
\cref{pr:tbfs-sgl} can be compared with the more general result
for IDA* by \citet{Korf2001}.
A memory-efficient tree-search variant of BFS can be implemented as
iterative deepening DFS (ID-DFS).
The runtime of ID-DFS is about twice the runtime of BFS.
\citeauthor{Korf2001}'s bound comes out as
$\tbfss(g)\approx 2^{g+2}$, which corresponds to a doubling of
the \emph{worst case}\footnote{%
To be precise, $\tbfss(g)\approx 2^{g+2}$ is obtained from
\citet[Th.\ 1]{Korf2001} by setting:
The heuristic $h=0$, the number of $i$-level nodes $N_i=2^i$,
the equilibrium distribution $P(x)=1$, the edge cost $=1$,
and the cost bound $c$ equal to our max depth $D$.
Their bound then comes out as 
$\tbfss(g)\approx 2^{g+2}$ after iteration over all levels $\leq g$.}
of $Y=2^g$ in \eqref{eq:bfs1}. 
The doubling is correct since ID-DFS is twice as slow as
BFS in the worst case.

We next turn to analyse DFS in a similar manner.

\begin{proposition}[DFS runtime Single Goal Level]\label{pr:tdfs-sgl}
  Consider a complete binary tree with depth $D$, goal level $g$ and
  goal probability $p_g$.
  When a goal exists and has position $Y$ on the goal level,
  the DFS search time is approximately
  \begin{align}
    \tdfss(D, g, p_g, Y)
    &:= (Y-1)2^{D-g+1}+2\text{, with expectation}\nonumber\\
    \tdfss(D, g, p_g\mid \Gamma_g)
    &:=\left(\tc(p_g,2^g)-1\right)2^{D-g+1}+2 \approx
      \left(\frac{1}{p_g}-1\right)2^{D-g+1}+2.\label{eq:sgl-goal}
  \end{align}
  The expected DFS search time when a goal does not necessarily
  exist is approximately
  \begin{equation}\label{eq:sgl-no-goal}
    \tdfss(D, g, p_g)  :=
    P(\Gamma)((\tc(p_g, 2^g)-1)2^{D-g+1}+2) + P(\bar\Gamma)2^{D+1}
    \!\approx\!\! \left(\frac{1}{p_g}\!-\!1\!\right)\!2^{D-g+1}\!\!.
  \end{equation}
  The right hand approximations in \eqref{eq:sgl-goal}
  and \eqref{eq:sgl-no-goal} are valid when $p_g\gg 2^{-g}$.
\end{proposition}

\begin{proof}
One way to count the nodes explored by DFS when a goal exists
is the following.
To the left of the first goal on level $g$, DFS will explore
$2(Y-1)$ subtrees rooted at level $g+1$
(pentagons in \cref{fig:sgl-proof-dfs}).
These subtrees will have depth
$D-(g+1)$, and contain $2^{D-g}-1$ nodes each.
DFS will also explore
$Y$ nodes on level $g$ and their parents, which amounts to
about $2Y$ nodes (circles in \cref{fig:sgl-proof-dfs}). \todo{Make precise}
Summing the contributions gives the DFS search time approximation
$\tdfss(D,g,p_g,Y)=2(Y-1)\cdot(2^{D-g}-1)+2Y = (Y-1)2^{D-g+1}+2$.

By \cref{le:expand-expectation}, the expected value of the search time $X$
when a goal does not necessarily exist
expands as
\begin{align*}
\E[X] &= P(\Gamma)\cdot\E[X\mid \Gamma] + P(\bar\Gamma)\cdot\E[X\mid \bar\Gamma]\\
      &= P(\Gamma)\cdot\E[\tdfss(D,g,p_g,Y)\mid \Gamma] + P(\bar\Gamma)\cdot 2^{D+1}\\
      &= P(\Gamma)\cdot((\tc(p_g, 2^g)-1)2^{D-g+1}+2) + P(\bar\Gamma)\cdot 2^{D+1}
\end{align*}
where the last step uses that $(Y\mid\Gamma)\sim\truncGeo(p_g,2^g)$.
When $p_g\gg 2^{-g}$, then
$\Gamma\approx 1$, $\bar\Gamma\approx 0$ and $Y\approx \Geo(p_g)$
which justifies the approximation.
\end{proof}


%

\cref{pr:tdfs-sgl,pr:tbfs-sgl} provide expected runtime estimates
as a function of the parameters $D$, $g$, and $p_g$.
\cref{fig:search-time} in \cref{sec:experimental-verification}
plots the runtime estimates as functions of the goal level $g$.
As expected, one observation that can be made from these results is that
BFS benefits
when goals are close to the start node, and DFS benefits
when goals are close to the maximum search depth of the tree.
This can be seen from positive $g$ exponent in \cref{pr:tbfs-sgl}
for BFS,
and the negative $g$ exponent in \cref{pr:tdfs-sgl} for DFS.
The runtimes are also plotted as a function of $g$ in \cref{fig:search-time}.
Although the model is unrealistic, these results provide
important building blocks for the more general models in subsequent
sections.

\paragraph{Decision boundary}
An interesting point to analyse is the crossover
where DFS overtakes BFS in performance.
This crossover occurs where the difference $\tbfss-\tdfss$ between
BFS and DFS runtimes shifts sign.
It turns out that this crossover has an elegant expression:

\begin{proposition}[Decision boundary for single goal level binary tree]
\label{pr:bfs-vs-dfs-sgl}
Let $\gamma_{p_g} = \log_2\left(\tc(p_g, 2^g)-1\right)/2
\approx \log_2\left(\frac{1-p_g}{p_g}\right)/2$.
Given the approximation of DFS runtime of \cref{pr:tdfs-sgl}, BFS wins in
expectation in a complete binary tree with depth $D$, goal level $g$ and
goal probability $p_g$ when
\[g < \frac{D}{2} + \gamma_{p_g}\]
and DFS wins in expectation when
$g > \frac{D}{2} + \gamma_{p_g} + \frac{1}{2}$.
\end{proposition}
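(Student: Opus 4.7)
The plan is to compare the two expected runtimes derived in Propositions \ref{pr:tbfs-sgl} and \ref{pr:tdfs-sgl} directly. Condition on the event $\Gamma_g$ that a goal exists (where the approximations are valid and the $P(\bar\Gamma)\cdot 2^{D+1}$ term drops out), and write $T := \tc(p_g, 2^g)$ for brevity. Then BFS beats DFS exactly when
\[
2^g - 1 + T \;<\; (T-1)\cdot 2^{D-g+1} + 2.
\]
Abbreviating in this way reduces the whole claim to an elementary inequality in $g$, $D$, and $T$.

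Next I would isolate the dominant terms. Under the working regime $p_g \gg 2^{-g}$, the quantity $T \approx 1/p_g$ is much smaller than $2^g$, so the inequality above is driven by its two dominant pieces $2^g$ on the left and $(T-1)\,2^{D-g+1}$ on the right. Moving these to opposite sides and taking $\log_2$ gives $2g - 1 < D + \log_2(T-1)$, i.e.
\[
g \;<\; \tfrac{D+1}{2} + \tfrac{1}{2}\log_2(T-1) \;=\; \tfrac{D}{2} + \tfrac{1}{2} + \gamma_{p_g}.
\]
Reversing the inequality gives the symmetric DFS-win condition. The approximation $\tc(p_g,2^g) \approx 1/p_g$ (valid when $p_g \gg 2^{-g}$, by \autoref{pr:tbfs-sgl}) then yields $\log_2(T-1) \approx \log_2((1-p_g)/p_g)$, which is exactly the approximate form of $\gamma_{p_g}$ stated in the proposition.

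The last step is to account for the lower-order additive terms $(-1+T)$ on the BFS side and $(+2)$ on the DFS side, which shift the true crossover by a bounded amount around $D/2 + 1/2 + \gamma_{p_g}$. This is precisely why the proposition states two one-sided conditions separated by a $1/2$-wide gap: strengthening the BFS-win condition to $g < D/2 + \gamma_{p_g}$ supplies enough margin to swallow these corrections, and symmetrically for DFS at $g > D/2 + \gamma_{p_g} + 1/2$. The plan is therefore to check that in each half-plane the sign of $\tbfss - \tdfss$ is determined by the leading term, by bounding the corrections $T - 3 + 2^{D-g+1}$ against the gap $|2^g - (T-1)2^{D-g+1}|$.

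The only mildly delicate point, and the main obstacle, is verifying that the $\frac{1}{2}$-wide buffer indeed absorbs both (i) the additive constants above and (ii) the error in approximating $\tc(p_g,2^g)$ by $1/p_g$ used to pass from the exact $\gamma_{p_g}$ to its closed form. Everything else reduces to arithmetic on the two explicit expressions already supplied by Propositions \ref{pr:tbfs-sgl} and \ref{pr:tdfs-sgl}, so no new probabilistic argument is needed.
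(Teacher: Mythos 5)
Your proposal is correct and follows essentially the same route as the paper: the paper sandwiches the BFS expectation $2^g-1+\tc(p_g,2^g)$ between $2^g$ and $2^{g+1}$ (using $1\le \tc(p_g,2^g)\le 2^g$) and compares each bound to $(\tc(p_g,2^g)-1)2^{D-g+1}$ via $\log_2$, which is exactly your ``dominant term plus absorbed correction'' computation, with the $\tfrac{1}{2}$-wide gap being the $\log_2(2)/2$ that comes from this factor-2 sandwich. The only cosmetic difference is that the bound you actually need, $T\le 2^g$, holds unconditionally for a truncated geometric variable supported on $\{1,\dots,2^g\}$, so no appeal to the regime $p_g\gg 2^{-g}$ is required for that step.
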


The approximation $\gamma_{p_g} \approx \log_2\left(\frac{1-p_g}{p_g}\right)/2$
is valid when $p_g\gg 2^{-g}$.
The proposition holds regardless of this assumption.

\begin{proof} 
When no goal exists, BFS and DFS will perform the same.
When the tree contains at least one goal node,
BFS will find the goal somewhere on its sweep across level $g$,
so the BFS runtime is bounded between $2^{g}\leq \tbfss(g,p_g)\leq 2^{g+1}$.

The upper bound for $\tbfss(g,p_g)$ gives that
$\tbfss(g,p_g) < \tdfs(D,g,p_g)$ when
$2^{g+1} < \left(\tc(p_g, 2^g)-1\right)2^{D-g+1}$.
Taking the binary logarithm of both sides yields
\begin{align*}
g+1 &< \log_2\left(\tc(p_g, 2^g)-1\right) + D -g + 1.\\
\intertext{Collecting the $g$'s on one side and dividing by 2 gives the desired bound}
g &< \frac{\log_2(\tc(p_g, 2^g)-1)}{2} + \frac{D}{2}= \frac{D}{2} + \gamma_{p_g}.
\end{align*}

Similar calculations with the lower bound for $\tbfss(g,p_g)$
gives the condition for $\tdfss(D,g,p_g)<\tbfss(g,p_g)$ when
$g>\frac{D}{2}+\gamma_{p_g}+\frac{1}{2}$.
\end{proof}

The term $\gamma_{p_g}$ is in the range $[-1,1]$ when $p_g\in[0.2,0.75]$,
$g\geq 2$, in which case
\cref{pr:bfs-vs-dfs-sgl} roughly says that BFS wins (in expectation)
when the goal level $g$ is located higher than the middle of the tree.
That the decision boundary is halfway between top and bottom
is somewhat surprising given the different natures of the
explored areas of BFS and DFS.
While BFS exhaustively explores one subtree at the top,
DFS typically exhaustively explores several lower subtrees next
to the bottom (see \cref{fig:sgl-proofs}).
Note that the goal probability $p_g$ needs to be quite large for this
balance to occur.

For smaller, more realistic $p_g$, BFS benefits with the boundary being
shifted $\gamma_{p_g}\approx k$ levels from the middle
for $p_g\approx 2^{-2k}$.
In other words, DFS benefits to a greater degree than BFS
from a high goal probability.
The reason is that when the goal probability is very high,
the best search strategy is to follow an arbitrary path down the
tree.
With high probability the path will hit a goal.
When the goal probability is smaller, substantial backtracking will
be required.
%
\Cref{fig:decision-boundaries} on page \pageref{fig:decision-boundaries}
illustrates the decision boundary as a
function of goal depth and tree depth for a fixed probability $p_g=0.07$,
and shows that \cref{pr:bfs-vs-dfs-sgl} can be used to accurately
predict whether BFS or DFS will be faster.

It is straightforward to generalise the calculations to
arbitrary branching factor $b$ by substituting the 2 in the base
of $\tbfss$ and $\tdfss$ for $b$.
In \cref{pr:bfs-vs-dfs-sgl},
the change only affects the base of the logarithm in $\gamma_{p_g}$:

\begin{corollary}[Decision boundary general]\label{co:bfs-vs-dfs-sgl}
Given the above approximations to BFS and DFS runtime, BFS wins in
expectation
in a complete tree with integer branching factor $b\geq 2$,
depth $D$, goal level $g$, and goal probability $p_g$ when
$g < \frac{D}{2} + \gamma_{b,p_g}$,
and DFS wins in expectation when
$g > \frac{D}{2} + \gamma_{b,p_g} + \frac{1}{2}$,
where $\gamma_{b,p_g} = \log_b\left(\tc(p_g, b^g)-1\right)/2 \approx  \log_b(\frac{1-p_g}{p_g})/2$.
\end{corollary}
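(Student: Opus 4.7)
The plan is to transcribe the proof of \autoref{pr:bfs-vs-dfs-sgl} with the base $2$ systematically replaced by $b$. First, I would generalise \autoref{pr:tbfs-sgl} and \autoref{pr:tdfs-sgl}: on a complete $b$-ary tree the number of nodes above level $g$ is $(b^g-1)/(b-1)$ and the number of nodes on level $g$ itself is $b^g$, so BFS explores $\tbfss(g, p_g, Y) = (b^g - 1)/(b-1) + Y$ nodes before finding the first goal, with leading-order term $b^g$. For DFS, the same bookkeeping as in \autoref{pr:tdfs-sgl} gives $b(Y-1)$ full subtrees of size $(b^{D-g}-1)/(b-1)$ rooted at level $g+1$, yielding the approximation $\tdfss(D, g, p_g \mid \Gamma_g) \approx (\tc(p_g, b^g) - 1)\, b^{D-g+1}$ after taking expectations and dropping the lower-order ancestor-path contribution.

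Next I would mirror the comparison step of \autoref{pr:bfs-vs-dfs-sgl}. Conditioning on a goal existing, the BFS runtime is at leading order sandwiched between $b^g$ and $b^{g+1}$, so inserting the upper bound into $\tbfss < \tdfss$ gives
\[
  b^{g+1} < (\tc(p_g, b^g) - 1)\, b^{D-g+1}.
\]
Taking $\log_b$ of both sides, collecting the $g$'s on one side and dividing by $2$ produces
\[
  g < \frac{D}{2} + \frac{\log_b(\tc(p_g, b^g) - 1)}{2} = \frac{D}{2} + \gamma_{b, p_g},
\]
which is the BFS-wins condition. The DFS-wins condition $g > D/2 + \gamma_{b, p_g} + 1/2$ then follows symmetrically from the lower BFS bound $b^g$. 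The approximation $\tc(p_g, b^g) \approx 1/p_g$, valid when $p_g \gg b^{-g}$ as noted after the definition of $\tc$, yields $\gamma_{b, p_g} \approx \log_b((1-p_g)/p_g)/2$ as stated.

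The only real obstacle — if it deserves the name — is checking that the $1/(b-1)$ factors hidden inside the geometric sums for $\tbfss$ and $\tdfss$, together with the ancestor-path term in the DFS count, remain lower order relative to the dominant powers $b^{g+1}$ and $b^{D-g+1}$. For any integer $b \ge 2$ these corrections are bounded by a constant factor times the leading term, so they are absorbed into the coarse bracket $[b^g, b^{g+1}]$ on $\tbfss$ and do not perturb the logarithmic decision boundary; hence the entire argument of \autoref{pr:bfs-vs-dfs-sgl} carries through with $2$ replaced by $b$ throughout.
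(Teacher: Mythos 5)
Your proposal is correct in substance and follows exactly the route the paper intends: the paper gives no explicit proof of this corollary, remarking only that one substitutes $b$ for $2$ in the expressions of \autoref{pr:tbfs-sgl}, \ref{pr:tdfs-sgl} and \ref{pr:bfs-vs-dfs-sgl}, which is precisely what you carry out. One quantitative caveat on the DFS-wins half: the lower bound $b^g \le \tbfss$ that you (and implicitly the paper) invoke is only valid for $b=2$; for $b\ge 3$ the ancestor count is $(b^g-1)/(b-1) < b^g$, so the crude bracketing argument only yields $g > \frac{D}{2} + \gamma_{b,p_g} + 1$ rather than $+\frac{1}{2}$. Since the whole statement is an approximate decision boundary this does not alter the qualitative conclusion, but the constant $\frac{1}{2}$ is not literally delivered by the sandwich $[b^g, b^{g+1}]$ once $b\ge 3$, and your closing remark that the $1/(b-1)$ corrections are "absorbed" glosses over exactly this point.
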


The approximation $\gamma_{b,p_g} \approx \log_2\left(\frac{1-p_g}{p_g}\right)/2$
is valid when $p_g\gg b^{-g}$, but the result does not otherwise
depend on this assumption.
The clean results obtained in this section are encouraging.
The next section relaxes the arguably unrealistic assumption of
a single goal level.

\section{Tree with Multiple Goal Levels}
\label{sec:mgl}

We now generalise the model developed in the previous section to
problems that can have goals on any number of levels.
Approximate expected runtime results are obtained for both
BFS and DFS.
The BFS analysis is a straightforward generalisation
of the techniques in the previous section.
The DFS analysis requires a bit more work and an
additional approximation of the distribution of
the position of the first goal on a level.

The model is the following.
For each level
$k\in\{0,\dots,D\}$, let $p_k$ be the associated \emph{goal probability}.
Not every $p_k$ should be equal to 0.
Nodes are goals or not independently of each other.
Nodes on level $k$ have probability $p_k$ of being goals.
Let $Y_k$ be the position of the first goal on level $k$ if such
a goal exists.
We will refer to these kinds of problems as
\emph{(multi goal level) complete binary trees with depth $D$ and
  goal probabilities $\p=[p_0,\dots,p_D]$.}
An example is depicted in \cref{fig:mgl}.

\begin{figure}
  \centering
  \includegraphics{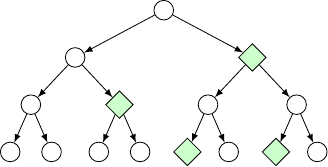}
  \caption{Tree with Multiple Goal Levels model of depth $D=3$
  and goal generated from the goal probability vector $\p=[0,\frac{1}{3},\frac{1}{3},\frac{1}{3}]$}
  \label{fig:mgl}
\end{figure}

Permitting goals on any level with different probability for each level
makes the model significantly more realistic,
as in most cases goals are not located on a single
goal level.
A major open question that remains is how to estimate the goal
probability vector in practice. We discuss this further in
\cref{sec:conclusions}.

\paragraph{Notation}
Let $\Gamma_i$ be the event that level $i$ has a goal,
with $P(\Gamma_i) = 1-(1-p_i)^{2^i}$.
As before, let $\Gamma=\bigcup_i \Gamma_i$ be the event that a
goal exists, and let $\bar \Gamma$ and $\bar\Gamma_i$ be their complements.

\subsection{DFS Analysis}
\label{sec:mgl-dfs}

To find an approximation of goal DFS expected runtime in trees with multiple
goal levels, we
approximate the geometric distribution used in \cref{pr:tdfs-sgl}
with an exponential distribution (its continuous approximation by \cref{le:exp-geo-approx}).

\begin{proposition}[DFS runtime for multiple goal levels]\label{pr:dfs-mgl}
Consider a complete binary tree of depth $D$ with goal probabilities
$\p=[p_0,\dots,p_D]\in [0,1)^{D+1}$.
If for all $k$, $p_k\ll 1$ and $Y_k\sim\Geo(p_k)$,
then the expected number of nodes DFS will search is approximately
\[\tdfsm(D, \p)
  := P(\Gamma)/\sum_{k=0}^D\ln(1-p_k)^{-1}2^{-(D-k+1)} + P(\bar\Gamma)2^{D+1}.\]
\end{proposition}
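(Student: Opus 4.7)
The plan is to reduce the multi-level problem to the single-level result of \autoref{pr:tdfs-sgl} on each level separately, and then take the minimum across levels via the exponential approximation of \autoref{le:exp-geo-approx}, which has a convenient memoryless structure for computing minima.

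First I would fix a level $k$ and re-examine the counting argument from \autoref{pr:tdfs-sgl}. If the only possible goal level were $k$, and if $Y_k$ denotes the DFS-order position of the first goal on level $k$, then DFS explores approximately $(Y_k-1)\cdot 2^{D-k+1}+O(Y_k)$ nodes before hitting it, with $Y_k\sim\Geo(p_k)$ unconditionally (ignoring for the moment that the level has only $2^k$ slots, which is valid under $p_k\gg 2^{-k}$). Now I would observe that, since the goal indicators on different levels are independent, the DFS runtime when goals may appear on every level is well approximated by the minimum
\[
T \;\approx\; \min_{0\leq k\leq D} \,(Y_k-1)\cdot 2^{D-k+1}
\]
of the level-wise first-hit times, because DFS halts as soon as it encounters any goal, and the ``cost per level-$k$ slot'' in DFS order is essentially $2^{D-k+1}$ regardless of what happens on other levels.

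Next I would apply \autoref{le:exp-geo-approx} to replace each $Y_k$ by an exponential $Z_k\sim\Exp(-\ln(1-p_k))$; by the scaling property of exponentials, $W_k := Z_k\cdot 2^{D-k+1}\sim\Exp(\lambda_k)$ with
\[
\lambda_k \;=\; \ln(1-p_k)^{-1}\cdot 2^{-(D-k+1)}.
\]
Using independence across $k$, the minimum of independent exponentials is again exponential with rate equal to the sum of rates, so $\min_k W_k\sim\Exp\bigl(\sum_k\lambda_k\bigr)$, and its expectation is
\[
\E\!\left[\min_k W_k\right] \;=\; \Bigl(\sum_{k=0}^{D}\ln(1-p_k)^{-1}\,2^{-(D-k+1)}\Bigr)^{-1},
\]
which is exactly $\tdfsm(D,\p)$. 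The ``$-1$'' in $(Y_k-1)$ and the additive $O(Y_k)$ term are absorbed, justified by $p_k\ll 1$ (so $-\ln(1-p_k)\approx p_k$ and the lower-order corrections are negligible), while the assumption $p_j\gg 2^{-j}$ for some $j$ ensures that with overwhelming probability at least one goal exists so the ``no-goal'' case $P(\bar\Gamma)2^{D+1}$ does not contribute.

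The main obstacle will be justifying the reduction to the minimum rigorously rather than heuristically: the variables $(Y_k-1)\cdot 2^{D-k+1}$ correspond to different hypothetical single-level runs, whereas the actual DFS traversal visits a single shared tree, so one must argue that (a) the goal placements at different levels are genuinely independent and (b) the DFS order of visits at each level spaces out at approximately $2^{D-k+1}$ nodes per slot even when earlier goals at other levels might truncate the search. Point (a) follows directly from the iid goal model; point (b) is where the approximation lives, and I would handle it by noting that conditional on not yet having terminated, the DFS continuation statistics at each level match those of the single-level scenario up to the next goal encounter, which is what the memoryless exponential model captures cleanly.
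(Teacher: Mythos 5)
Your proposal is correct and follows essentially the same route as the paper's own proof: approximate each level's first-goal position by an exponential via \autoref{le:exp-geo-approx}, scale by the per-slot cost $2^{D-k+1}$ from \autoref{pr:tdfs-sgl}, and take the minimum of independent exponentials to sum the rates. The only difference is cosmetic ordering (you take the minimum before passing to exponentials, the paper does the reverse), plus your added commentary on where the approximation genuinely lives, which matches the paper's implicit treatment.
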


The assumption $Y_k\sim\Geo(p_k)$ is approximately true
when $p_k\gg 2^{-k}$.
If some level $k$ has a smaller $p_k$,
then the probability that the search encounters a goal
at this level is small.
Thus, we expect the result to be approximately true even if
$Y_k\sim\Geo(p_k)$ only for some levels.
Empirical results in \cref{sec:experimental-verification} verify the validity of
the approximations.

The proof constructs for each level $k$ an exponential
random variable $X_k$ that approximates
the search time before a goal is found on level $k$ (disregarding goals on
other levels).
The minimum of all $X_k$ then becomes an approximation of the search
time to find a goal on some level.
The approximations use exponential variables for easy minimisation.

\begin{proof}[Proof of \cref{pr:dfs-mgl}]
The second term $P(\bar\Gamma)2^{D+1}$ covers the case of
no goal being present, and follows immediately from
\cref{le:expand-expectation} and the search time being $2^{D+1}$
when no goal exists.

For the more interesting case of a goal existing,
the proof uses two approximations.
First approximate the position of the first goal on level $k$ with
$Y_k\approx\Exp(\lambda_k)$, where $\lambda_k=-\ln(1-p_k)$.
The approximation is justifiable by \cref{le:exp-geo-approx},
since we assumed $Y_k\sim\Geo(p_k)$.

Second, disregarding goals on levels other than $k$,
the total number of nodes that DFS needs to search
before reaching a goal on level $k$ is approximately
$X_k\sim \Exp(\lambda_k2^{-(D-k+1)})$.
This follows from an approximation of \cref{pr:tdfs-sgl}:
The number of nodes DFS needs to search to find a goal on level $k$
is
\[\tdfss(D, k, p_k, Y_k) = (Y_k-1)2^{D-k+1}+2 \approx Y_k\cdot2^{D-k+1}.\]
(This is a reasonable estimate if $Y_k$ is large, which is likely
given that $p_k\ll 1$ by assumption.)
So $X_k$ is approximately a multiple $2^{D-k+1}$ of $Y_k$.
For any exponential random variable $Z$ with parameter $\lambda$,
the scaled variable $m\cdot Z$ is $\Exp(\lambda/m)$.
This completes the justification of the second approximation.

The result now follows by a standard minimisation of exponential variables.
Since $X_k$ is the number of nodes searched before finding a goal
on level $k$,
the number of nodes searched before finding a goal on any level
is $X=\min_kX_k$.
The CDF for $X$ is approximately
\begin{align*}
  P(X\leq y) &= 1-\prod_{k=0}^D P(X_k>y)\\
             &= 1- \prod_{k=0}^D\exp(-\lambda_k2^{-(D-k+1)}y)\\
             &= 1 -\exp(-y\sum_{k=0}^D\lambda_k2^{-(D-k+1)}).
\end{align*}
(The minimum of exponential variables $Z_k\sim\Exp(\xi_k)$
is again an exponential variable $\Exp(\sum\xi_k)$.)

Thus the search time when a goal exists is
$X\sim\Exp(\sum_{k=0}^D\lambda_k2^{-(D-k+1)}))$,
so the expected search time is
$1/\sum_{k=0}^D\lambda_k2^{-(D-k+1)})$.
This completes the analysis of the case where a goal exists.
Finally multiplying with the probability $P(\Gamma)$ that a goal exists
justifies the first term in the approximation
(compare \cref{le:expand-expectation}).
\end{proof}

In the special case of a single goal level $j$ with $p_j\gg 2^{-j}$,
the result
of \cref{pr:dfs-mgl} is similar to the approximation in \cref{pr:tdfs-sgl}.
When $\p$ only has a single element $p_j\not=0$ and $p_j\gg 2^{-j}$,
the expression $\tdfsm$ simplifies to
\[
  \tdfsm(D,\p)
  = P(\Gamma)\frac{1}{\lambda_j}2^{D-j+1} + P(\bar\Gamma)2^{D+1}
  \approx \frac{1}{\lambda_j}2^{D-j+1}
  = -\frac{1}{\ln(1-p_j)}2^{D-j+1}.
\]
For $p_j$ not close to 1, the factor $-1/\ln(1-p_j)$ is approximately
the same as the corresponding factor $1/p_j-1$ in \cref{pr:tdfs-sgl}
(the \emph{Laurent expansion} is $-1/\ln(1-p_j)=1/p_j-1/2+O(p_j)$).

The DFS runtime result can be adapted to the case where at least
one goal must be present.
Simply replace $P(\Gamma)$ with 1, and remove the second term
$P(\bar\Gamma)2^{D+1}$.

\subsection{BFS Analysis}
\label{sec:mgl-bfs}

The corresponding expected search time $\tbfsm(D,\p)$ for BFS
requires less insight and
can be calculated exactly by conditioning on which level the first goal is.
The resulting formula is less elegant, however.
The same technique cannot be used for DFS, since DFS does
not exhaust levels one by one.

To develop the reduction to the single goal level case, some
extra notation needs to be introduced.
Let $F_k=\Gamma_k\cap(\bigcap_{i=0}^{k-1}\bar\Gamma_i)$ be the event
that level $k$ has the \emph{first} goal.
The probability that level $k$ has the first goal is
$P(F_k)= P(\Gamma_k)\prod_{j=0}^{k-1}P(\bar\Gamma_j)$.
The expected BFS search time gets a more uniform expression by
the introduction of an extra \emph{hypothetical level $D+1$}
where all nodes are goals.
That is, regardless of the goal probabilities of the
problem, we assume that level $D+1$ has goal probability $p_{D+1}=1$ and
$P(F_{D+1})=P(\bar\Gamma) = 1-\sum_{k=0}^DP(F_k)$.

\begin{proposition}[BFS runtime for multiple goal levels]\label{pr:bfs-mgl}
The expected number of nodes $\tbfsm(p)$ that BFS needs to search
to find a goal
in a complete binary tree of depth $D$ with goal probabilities
$\p=[p_0,\dots,p_D]$, is
\[
\tbfsm(\p)
   =\sum_{k=0}^{D+1} P(F_k)\tbfss(k,p_k\mid \Gamma_k)
   \approx \sum_{k=0}^{D+1}P(F_k)\left(2^{k}+\frac{1}{p_k}\right)
\]
\end{proposition}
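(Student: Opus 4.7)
The plan is to apply the law of total expectation (\autoref{le:expand-expectation}) with the sample space partitioned by the disjoint events $F_0, F_1, \dots, F_D, F_{D+1}$, where $F_k$ for $k\leq D$ is the event that level $k$ carries the first goal, and $F_{D+1}$ absorbs the residual $\bar\Gamma$ event via the hypothetical ``all‑goal'' level $D{+}1$ trick described just before the proposition. Since these events are mutually disjoint and exhaust $\Omega$, writing $X$ for the BFS runtime gives $\E[X] = \sum_{k=0}^{D+1} P(F_k)\,\E[X\mid F_k]$.

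The core step is to identify the conditional expectation $\E[X\mid F_k]$ with the single‑goal‑level quantity $\tbfss(k,p_k\mid \Gamma_k)$ from \autoref{pr:tbfs-sgl}. The key observation is that BFS processes the tree level‑by‑level: on the event $F_k$ all levels $j<k$ have been certified to contain no goals, so BFS exhausts exactly $1+2+\dots+2^{k-1}=2^k-1$ nodes above level $k$, and then scans level $k$ until it hits the first goal. Conditional on $F_k$, the iid‑Bernoulli$(p_k)$ structure on level $k$ together with the existence of a goal there makes the position $Y$ of the first goal a $\truncGeo(p_k,2^k)$ variable, exactly as in \autoref{pr:tbfs-sgl}. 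Hence $\E[X\mid F_k] = 2^k - 1 + \tc(p_k,2^k) = \tbfss(k,p_k\mid \Gamma_k)$. For the boundary term $k=D+1$, the convention $p_{D+1}=1$ makes $\tc(1,2^{D+1})=1$, so the formula $\tbfss(D+1,1\mid\Gamma_{D+1})=2^{D+1}$ agrees with the runtime convention of \autoref{sec:preliminaries} that BFS returns $2^{D+1}$ when no goal exists.

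Combining these two ingredients yields the exact identity in the proposition. The approximation then follows by invoking the $p_k\gg 2^{-k}$ regime of \autoref{pr:tbfs-sgl}, under which $\tc(p_k,2^k)\approx 1/p_k$ and therefore $\tbfss(k,p_k\mid\Gamma_k)\approx 2^k + 1/p_k$; plugging into the weighted sum gives the stated approximation.

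There is no real obstacle here beyond a bookkeeping issue: one must verify that the joint conditioning on $F_k = \Gamma_k \cap \bigcap_{j<k}\bar\Gamma_j$ does not disturb the $\truncGeo$ distribution of the first‑goal position on level $k$. This follows immediately from the independence of goal placements across levels in the model, so the only thing to be careful about is cleanly separating the ``above level $k$'' count ($2^k-1$ deterministic nodes) from the ``on level $k$'' count (random $Y$), and handling the degenerate $k=D+1$ term so that the exact and approximate expressions both reduce to $2^{D+1}$ on the no‑goal event.
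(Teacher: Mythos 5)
Your proof is correct and takes essentially the same route as the paper's: partition the expectation over the disjoint events $F_0,\dots,F_{D+1}$ via \autoref{le:expand-expectation} and identify $\E[X\mid F_k]$ with the single-goal-level expression $\tbfss(k,p_k\mid\Gamma_k)$ from \autoref{pr:tbfs-sgl}. Your explicit checks of the $k=D+1$ boundary term and of the independence across levels (so that conditioning on $F_k$ leaves the first-goal position $\truncGeo(p_k,2^k)$) only spell out what the paper's terser argument leaves implicit.
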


For $p_k=0$, the expression $\tbfsc(k,p_k)$ and $1/p_k$ will be undefined,
but this only occurs when $P(F_k)$ is also 0.
The BFS runtime estimate can easily be modified to the situation where at
least one goal must exist.
Simply drop the $(D+1)$st term in the sum, and
renormalise the probabilities $P(F_0),\dots,P(F_D)$.

\begin{proof}
To BFS, the event $F_k$ that level $k$ has a goal
is equivalent to the single goal level model of \cref{sec:sgl}.
Let $X$ be BFS search time, and let $(X\mid F_k)$ be the number of nodes
that BFS needs to search when $k$ is the first level with a goal.
Then $(X \mid F_k) = \tbfss(k, p_k, X-(2^{k}-1)\mid\Gamma_k)$,
and $\E[X \mid F_k] = \tbfss(k, p_k\mid \Gamma_k)$.
The result follows by expanding $\E[X]$ over $F_0,\dots,F_{D+1}$
as in \cref{le:expand-expectation}.
\end{proof}

The approximation $\sum_{k=0}^{D+1}P(F_k)\left(2^{k}+\frac{1}{p_k}\right)$
tends to be within a factor 2 of the correct
expression,\footnote{
  Assume $p_k$ approaches 0 for some $k$.
  The difference between $2^k+1/p_k$ and $\tbfss(k,p_k\mid \Gamma_k) = 2^k + \tc(p_k, 2^k)$
  is $1/p_k-\tc(p_k, 2^k) \leq 1/p_k$.
  This difference is multiplied with the probability
  $P(F_k) \leq P(\Gamma_k) = 1-q_k^{2^k}$ where $q_k=1-p_k$.
  Multiplying the probability and the difference  gives $(1-q_k^{2^k})/p_k = (1-q_k^{2^k})/(1-q_k)
  = \sum_{i=0}^{2^k-1}q_k^i\to 2^k$ as $p_k\to 0$ and $q_k\to 1$.
  Thus, the overestimation with $2^k + 1/p_k$ instead of $2^k + \tc(p_k, 2^k)$
  will not exceed a factor 2.
} even when $p_k< 2^{-k}$ for some or all $p_k\in \p$.
The reason is that the corresponding $P(F_k)$'s are small when
the geometric approximation is inaccurate.

\paragraph{Discussion}
\cref{pr:dfs-mgl,pr:bfs-mgl} provide closed-form approximations
for expected runtime of DFS and BFS in graphs with goals
on any number of levels and with essentially any combination
of goal probabilities.
Given knowledge of the goal probabilities, expected BFS and
DFS search time can easily be computed.
Such knowledge is useful when estimating the amount of resources that
will be required to solve a problem, and when deciding whether the problem
is approachable at all.
Expected runtime is often more relevant than worst case runtime,
as most realistic problems may be significantly easier than the
worst ones.

We have not managed to derive a similarly elegant
closed-form decision boundary
as for the single goal level case (\cref{pr:bfs-vs-dfs-sgl}).
However, a simple computer program can still easily compare the
runtime estimates of \cref{pr:dfs-mgl,pr:bfs-mgl} for
a given goal probability vector.
The comparison can be used
to predict the  BFS vs.\ DFS winner.
The decision boundary from this prediction
is plotted for a concrete set of
goal probability vectors in \cref{fig:decision-boundaries}.

The open question of estimating the goal probability vector
is discussed further in \cref{sec:conclusions}.
Both \cref{pr:dfs-mgl,pr:bfs-mgl} naturally generalise
to arbitrary branching factor $b$.


\section{Graph Search}
\label{sec:collide}
\label{sec:cb}


In this section, we explore general graphs.
In addition to analysing the performance of graph search BFS and DFS
(that do remember visited nodes)
we also analyse the performance of tree search DFS in general graphs.
Graph search can significantly improve performance, but in return
requires more memory.
For DFS, the difference is exponential; for BFS only minor.
\cref{fig:bfs-vs-dfs-graph} gives an idea of BFS and
DFS graph search behaviour.

\paragraph{The model}
General, non-tree graphs exhibit significantly more variability
than trees. Graphs vary along dimensions such as
connectivity and path-redundancy, as well as average number
of neighbours.
We capture this variability in what we call a \emph{length-to-depth counter $L$}:

\begin{figure}
  \centering
  \begin{subfigure}[l]{0.49\textwidth}
    \includegraphics{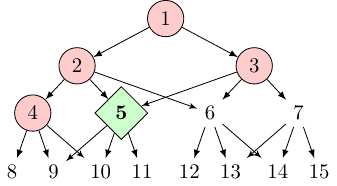}
    \caption{BFS}
  \end{subfigure}
  \begin{subfigure}[r]{0.49\textwidth}
    \includegraphics{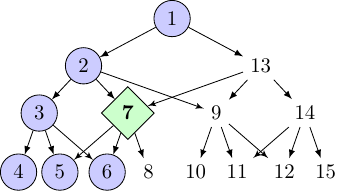}
    \caption{DFS}
  \end{subfigure}
  \caption{The difference between BFS and DFS in
  multiply connected graphs. Note how DFS is additionally concentrated
  at the bottom of the graph compared to \cref{fig:bfs-vs-dfs}.}
  \label{fig:bfs-vs-dfs-graph}
\end{figure}

\begin{definition}[Distance, level, and length-to-depth counter]\label{def:level}
  Let the \emph{distance} $\dist(u, v)$ be the shortest path between $u$ and $v$.
  Let the \emph{level} of a node $v$, $\lvl(v) =\dist(v_0, v)$,
  be the distance from the start node to $v$.
  Let $D=\max_v\lvl(v)$ be the \emph{maximum depth},
  and let $D'$ be the radius of search for DFS.
  Let $\delta_n$ be the first node to which DFS has
  travelled $n$ steps, $0\leq n\leq D'$.

  The \emph{level-to-depth counter $L$} plays a central role in the analysis.
  For a given search problem, let
  \[
    L(n,d)=\E\big[|\{v:\lvl(v)=d, \dist(\delta_n,v)<D'-n\}|\big]
  \]
  be the expected number of nodes on
  level $d$ reachable from $\delta_n$ within the remaining path length $D'-n$.
  Let $\bar L(n, d)$ be the same quantity, but with nodes counted with
  repetition if they can be reached through multiple paths.
\end{definition}

For example, if $D'=2$ then $L(1, 2)$ is the expected number of neighbours
on level 2 after having taken the first search step.
The length-to-depth counter plays the role of a \emph{sufficient statistic}
for search time for graphs.
Although many different graphs have identical length-to-depth counters,
our results below imply that any two graphs with identical length-to-depth counters
will have the same expected search time.
In many graphs, the length-to-depth counter can be connected to
the branching factor (\cref{sec:graph-param}).
As in the previous section, we assume that goals are
distributed by level in an iid manner
according to a goal probability vector $\p$.
We will also assume that the probability of DFS finding a goal before
finding $\delta_D$ is negligible.
We will refer to these kinds of problems as
\emph{search problems with depth $D$, goal probabilities $\p$ and
level-to-depth counter $L$}.
The rest of this section justifies the following proposition.

\begin{proposition}\label{pr:main-cb}
The DFS and BFS runtime of a search problem can be roughly
estimated from the level-to-depth counters $L$ and $\bar L$,
the depth $D$, and the goal probabilities $\p=[p_0,\dots,p_D]$ when
the probability of finding a goal before $\delta_D$ is negligible.\footnote{
  A more careful analysis could relax the assumption of negligible
  probability of finding a goal before $\delta_D$ by
  combining the depth distribution
  $P_n(d)$ defined in \cref{sec:length-to-depth} below with the
  goal probabilities $\p$ and the likelihood of
  an early backtrack.
  These parameters could be used to estimate the probability
  of a goal being found before $\delta_D$, as well as how fast this
  goal would likely be found.
}
\end{proposition}

The assumption of DFS having a negligible probability of finding a
goal before $\delta_D$ is satisfied in problems where
\begin{itemize}
\item nodes typically have several neighbours, so that premature
  backtracking before the full radius is reached usually is not necessary, and
\item no level $k$ has goal probability $p_k$ close to 1.
\end{itemize}
These assumptions are satisfied in a wide range of practical
problems,
including most of the instances investigated in
\cref{sec:experimental-verification}.


\subsection{DFS Analysis}
\label{sec:dfs-cb}

We analyse both DFS tree search and DFS graph search
(\cref{alg:dfs-tree,alg:dfs-graph} on \cpageref{alg:dfs-tree,alg:dfs-graph} above).
Although the analysis in \cref{sec:sgl,sec:mgl} can be used to analyse
DFS tree search in graphs, such an analysis would require an
interpretation of level as path length (as interpreted in \cref{alg:dfs-tree})
rather than shortest distance.
The analysis performed in this section compares nicely with the
corresponding BFS analysis.

\paragraph{Sets of nodes}

Recall that $\delta_n$ is the first node to which DFS has travelled $n$ steps,
and that $D'$ is the radius of search for DFS.
Unless DFS has been forced to backtrack, $\delta_n$ will be the $n$th
node expanded.
We will assume that $\delta_{D'}$ is reached in roughly $D'$ steps.
The nodes $\delta_0,\dots,\delta_{D'}$ play a central role in the
analysis, since the descendants of $\delta_{n+1}$ will be
explored before the descendants of $\delta_n$
(possibly excluding the $\delta_{n+1}$ descendants).
We say that \emph{DFS explores from $\delta_n$} after DFS has explored all
descendants of $\delta_{n+1}$ and until all descendants of $\delta_n$ have
been explored.
The general idea of the DFS analysis will be to count the number of nodes
under each $\delta_n$, and to compute the probability that any of these
nodes is a goal.

Some notation for this (see \cref{fig:cb-notation} for illustration):

\begin{figure}
  \centering
  \includegraphics{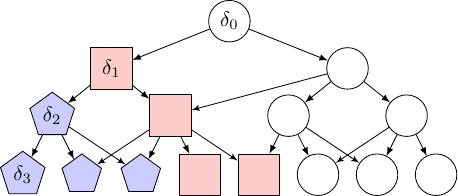}
  \caption{The notation for the DFS graph analysis. Each node $\delta_n$ is
    the first node explored by DFS on level $n$.
    Pentagons denote the $\delta_2$ subgraph $S_2$, and
    boxes the $\delta_1$ explorables $T_1$. The length-to-depth counter $L_{1,3}=5$,
    since 5 nodes on level 3 are reachable from $\delta_1$, and $A_{1,3}=2$
    since 2 nodes on level 3 were not explored before $\delta_1$.
  }
  \label{fig:cb-notation}
\end{figure}

\begin{itemize}
\item Let the \emph{$\delta_n$-subgraph}
  $S_n=\{v:v\in\descendants(\delta_n)\}$ be the set of nodes
  reachable from $\delta_n$,
  and let $\bar S_n = \{v:v\in\overline{\descendants}(\delta_n)\}$
  be the multiset of nodes reachable from $\delta_n$ including repetitions.
  Their expected cardinalities are
  $|S_n|=\sum_{i=0}^DL(n,i)$ and
  $|\bar S_n|=\sum_{i=0}^D\bar L(n,i)$,
  $0\leq n\leq D'$.
  Let $S_{D'+1}=\bar S_{D'+1} = \emptyset$ and let
  $S_{-1}=S_0$ and $\bar S_{-1} = \bar S_0$.
\item Let the \emph{$\delta_n$-explorables} $T_n=S_n\setminus S_{n+1}$
  be the nodes explored from $\delta_n$.
\item Let the \emph{number of level-d $\delta_n$-explorables}
  $A_{n,d}=\max\{0, L(n,d)-L(n+1,d)\}$ be the expected number of level
  $d$ descendants of $\delta_n$ that are not descendants of $\delta_{n+1}$
  for $0\leq d\leq D$ and $0\leq n\leq D'$.
  The relation between $T_n$ and $A_{n,d}$ is the following:
  $|T_n|=\sum_{i=n}^{D'}A_{n,i}$.
\end{itemize}
Let $q_k=1-p_k$ for $0\leq k\leq D$.

\paragraph{DFS search time}
The following lemma establishes the probabilities of finding a goal
under a given $\delta_n$, and is central to \cref{pr:dfs-cb} of
DFS search time.

\begin{lemma}[DFS goal probabilities]\label{le:tn-goal}
  Consider a search problem with depth $D$, goal probabilities $\p$, and
  length-to-depth counter $L$.
  The probability that the $\delta_n$-explorables $T_n$ contains a goal is
  approximately $\tau_n:=1-\prod_{k=0}^Dq_k^{A_{n,k}}$,
  and the probability that $T_n$ contains the first goal is approximately
  $\psi_n:=\tau_n \prod_{i=n+1}^{D'}(1-\tau_i)$.
\end{lemma}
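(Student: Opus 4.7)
The plan is to prove the two statements in sequence, relying on the iid goal assumption and the DFS exploration order established in the preamble to this section.

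For the first claim, I would argue directly from the definition of $A_{n,k}$. By construction, $T_n$ contains exactly $A_{n,k}$ nodes of level $k$ for each $k\in\{0,\dots,D\}$. Since goals are placed independently and each level-$k$ node is a goal with probability $p_k=1-q_k$, the events ``node $v$ is not a goal'' are jointly independent for $v\in T_n$. Multiplying these probabilities level by level gives $P(T_n\text{ contains no goal})=\prod_{k=0}^D q_k^{A_{n,k}}$, and taking the complement yields $\tau_n$.

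For the second claim, the key observation is that the exploration order established just before the lemma tells us DFS processes the explorable sets in the order $T_D, T_{D-1},\dots, T_0$: DFS first drills down to $\delta_D$ and exhausts its descendants (which is $T_D=S_D$ since $S_{D+1}=\emptyset$), then backtracks to $\delta_{D-1}$ and finishes the remaining descendants (namely $T_{D-1}=S_{D-1}\setminus S_D$), and so on. Consequently, the first goal lies in $T_n$ if and only if (a) none of $T_D, T_{D-1},\dots, T_{n+1}$ contains a goal, and (b) $T_n$ itself contains a goal. Because the sets $T_{D},\dots,T_n$ are pairwise disjoint (they partition $S_n$ minus $S_{D+1}$) and goal placement is iid across nodes, the events in (a) and (b) are jointly independent. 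Using the first part of the lemma for each $T_i$, $P(T_i\text{ has no goal})=1-\tau_i$, and multiplying out gives $\phi_n=\tau_n\prod_{i=n+1}^D(1-\tau_i)$.

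The only mildly subtle point, and the main thing I would make sure to articulate cleanly, is the justification that the ordering of processed $T_i$'s really is $i=D,D-1,\dots,0$. This follows from the definition ``DFS explores from $\delta_n$ after DFS has explored all descendants of $\delta_{n+1}$'' quoted above the lemma, combined with the assumption stated in this section that DFS reaches $\delta_D$ without terminating early (we assume the probability of finding a goal before $\delta_D$ is negligible). Once this ordering is in hand, the disjointness of the $T_i$'s is immediate from $T_n=S_n\setminus S_{n+1}$, the independence claim is automatic, and both formulas drop out by elementary probability.
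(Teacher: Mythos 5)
Your proposal is correct and follows essentially the same route as the paper: the paper's own proof is a single sentence justifying $\tau_n$ via the $A_{n,k}$ independent probes per level, and leaves the $\phi_n$ formula implicit. Your additional argument for $\phi_n$ --- DFS processes $T_D,T_{D-1},\dots,T_0$ in that order, the $T_i$ are pairwise disjoint, and iid goal placement then gives independence of the events ``$T_i$ has a goal'' --- is exactly the reasoning the paper relies on without spelling it out.
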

\begin{proof}
  $\tau_n$ is 1 minus the
  probability of \emph{not} hitting a goal at any level $d$, $0\leq d\leq D$,
  since at each level $d$,
  an expected $A_{n,d}$ nodes are visited when exploring from $\delta_n$.
\end{proof}
The probability is not exact, since we disregard the few nodes
explored before $\delta_n$. This slightly affects $A_{n,d}$.

\begin{proposition}[DFS graph search runtime in general graphs]\label{pr:dfs-cb}
  Let $\psi_n$ be the probability of $T_n$ containing the first goal.
  Then the expected DFS search time $t^{\mathrm{DFS}}_{\mathrm{CB}}(D;,\p,L)$
  in a search problem with depth $D$, goal probabilities $\p$,
  and length-to-depth counter $L$ is bounded by
  \begin{equation*}
    \tdfscl(D',\p,L):=\sum_{n=-1}^{D'} |S_{n+1}|\psi_n \leq t^{\mathrm{DFS}}_{\mathrm{CB}}(D',\p,L)
    \leq \sum_{n=-1}^{D'} |S_n| \psi_n:=\tdfscu(D',\p,L)
  \end{equation*}
  where $\psi_{-1} = \bar\Gamma = 1-\sum_{n=0}^{D'}\psi_n$ is the probability that
  no goal exists.
\end{proposition}

The arithmetic mean
$\tdfsc(D',\p,L):=(\tdfscl(D',\p,L)+\tdfscu(D',\p,L))/2$
between the bounds can be used for a single runtime estimate.

\begin{proof}[Proof of \cref{pr:dfs-cb}]
  Let $X$ be the DFS search time in a search problem with the features described
  above.
  The expectation of $X$ may be decomposed as
  \begin{equation}\label{eq:etdfs-informal}
    \E[X]=P(\bar\Gamma)\E[X\mid \bar\Gamma]+\sum_{n=0}^{D'}P(\text{first goal in }T_n)\cdot \E[ X \mid\text{first goal in }T_n].
  \end{equation}

  The conditional search time ($X\mid $ first goal in $T_n$) is bounded by
  $|S_{n+1}|\leq ( X \mid\text{first goal in }T_n)\leq |S_n|$ for $0\leq n\leq {D'}$,
  since to find a goal DFS will search the entire $\delta_{n+1}$-subgraph
  $S_{n+1}$ before finding it when searching the $\delta_n$-explorables $T_n$,
  but will not need to search more than the $\delta_n$-subgraph $S_n = S_{n+1}\cup T_n$
  (assuming no goal is found `on the way down to' $\delta_n$ (i.e.\ to $T_n$)).
  The same bounds also hold with $S_0$ and $S_{-1}$ when no goal exists
  (recall that $|S_{-1}|:=|S_0|+1$).
  Therefore the conditional expectation satisfies
  \begin{equation}\label{eq:nodes-probed}
    |S_{n+1}|\leq \E[ X \mid\text{first goal in }T_n ]\leq |S_n|
  \end{equation}
  for $-1\leq n\leq {D'}$.
  By \cref{le:tn-goal}, the probability that the first goal is among
  the $\delta_n$-explorables $T_n$ is $\psi_n$,
  and the probability $P(\bar\Gamma)$ that no goal exists is $\psi_{-1}$ by definition.

  Substituting $\psi_n$ and
  \eqref{eq:nodes-probed} into \eqref{eq:etdfs-informal}
  gives the desired bounds for expected DFS search time $\tdfsc({D'},\p,L)=\E[X]$.
\end{proof}

\begin{proposition}[DFS tree search runtime in general graphs]\label{pr:dfs-ts}
The expected DFS search time $t^{\mathrm{DFS}}_{\mathrm{CB}}({D'},\p,L)$ in a search problem with
depth $D$, goal probabilities $\p$, and length-to-depth counters $L$ and $\bar L$ is
bounded by
\begin{equation*}
\sum_{n=-1}^{{D'}} |\bar S_{n+1}|\psi_n \leq t^{\mathrm{DFS}}_{\mathrm{CB}}({D'},\p,L, \bar L)
\leq \sum_{n=-1}^{D'} |\bar S_n| \psi_n
\end{equation*}
where $\psi_{-1} = \bar\Gamma = 1-\sum_{n=0}^{D'}\psi_n$ is the probability that
no goal exists.
\end{proposition}

\begin{proof}
  Identical to \cref{pr:dfs-cb}, except nodes may be revisited so
  $|\bar S|$ replaces $|S|$.
  For the chance of finding a goal, the unique count $A_{n,d}$ is still
  the relevant one, so the same $\psi_n$ probability should still be used.
\end{proof}

To refer to the upper and lower bounds of \cref{pr:dfs-ts}, we will use
the notation
\[\tdfsclb({D'},\p,L,\bar L):=\sum_{n=-1}^{{D'}} |\bar S_{n+1}|\psi_n
  \text{ and }\tdfscub({D'},\p,L,\bar L):=\sum_{n=-1}^{D'} |\bar S_n| \psi_n\]
The extra argument $\bar L$ distinguishes the DFS tree search estimates
from the DFS graph search estimates.
As for DFS graph search,
the arithmetic mean
$\tdfsc(D',\p,L, \bar L):=(\tdfscl(D',\p,L, \bar L)+\tdfscu(D',\p,L, \bar L))/2$
between the bounds can be used for a single runtime estimate.
Both the DFS graph search and DFS tree search
runtime estimates are easily modified to the situation where at
least one goal must exist.
Simply drop the $n=-1$ term in the sums, and
renormalise the probabilities $\psi_0,\dots,\psi_{D'}$.

The informativeness of the bounds of \cref{pr:dfs-cb,pr:dfs-ts} depend on
the dispersion of nodes between the different $T_n$'s.
If most nodes belong to one or a few sets $T_n$, the bounds may be
almost completely uninformative.
This happens in the special case of complete trees with branching factor $b$,
where a fraction $(b-1)/b$ of the nodes will be in $T_0$.
The previous section derives techniques for these cases.
The analysis in \cref{sec:grammar,sec:n-puzzle} below
show that the bounds of \cref{pr:dfs-cb,pr:dfs-ts}
may be relevant in more connected graphs.

\subsection{BFS Analysis}

The analysis of BFS only requires the length-to-depth counter $L(0,\cdot)$
with the first argument set to 0, and follows the same structure
as \cref{sec:mgl-bfs}.
In contrast to the DFS bounds above,
this analysis gives a precise expression for the expected runtime.
The idea is to count the number of nodes in the upper $k$ levels of the
tree (derived from $L(0,0),\dots,L(0,k)$), and to compute the probability that
they contain a goal.
Let the \emph{upper subgraph $U_k=\sum_{i=0}^{k-1}L(0,i)$} be the number of
nodes above level $k$.
When there is only a single goal level,
\cref{pr:tbfs-sgl} naturally generalises to the more general setting
of this section:

\begin{lemma}[BFS runtime in graphs with single goal level]\label{le:bfs-sgl-cb}
For a search problem with depth $D$ and
length-to-depth counter $L$, assume that
the problem has a single goal level $g$ with goal probability
$p_g$, and that $p_j=0$ for $j\not=g$.
When a goal exists and has position $Y$ on the goal level,
the BFS search time is:
\begin{align*}
\tbfsc(g,p_g,L, Y) &= U_g + Y \text{, with expected value}\\
\tbfsc(g,p_g, L\mid \Gamma_g) &= U_g + \tc(p_g, L(0,g))
\end{align*}
\end{lemma}

\begin{proof}
When a goal exists,
BFS will explore all of the top of the tree until depth $g-1$
(that is, $U_g$ nodes)
and $Y$ nodes on level $g$ before finding the first goal.
The expected value of $Y$ is $\tc(p_g, L(0,g))$.
\end{proof}

\cref{le:bfs-sgl-cb} generalises to multiple goal levels
analogously to the generalisation made from single goal level
to multiple goal levels in trees.
First note that
the probability that level $k$ has a goal is
$P(\Gamma_k) = 1 - q_k^{L(0,k)}$,
and the probability that level $k$ has the first goal is
$P(F_k) = P(\Gamma_k) \prod_{i=0}^{k-1}P(\bar\Gamma_i)$.
By the same argument that was used in the proof of \cref{pr:bfs-mgl},
the following proposition holds.

\begin{proposition}[BFS runtime in general graphs]\label{pr:bfs-cb}
The expected number of nodes that BFS needs to search
to find a goal in a search problem with depth $D$, goal probabilities
$\p=[p_0,\dots,p_D]$, $\p\not=\mathbf{0}$, and length-to-depth counter $L$ is
\[
\tbfsc(\p, L)
   =\sum_{k=0}^{D+1} P(F_k)\tbfsc(k,p_k, L\mid \Gamma_k)
\]
where the goal probabilities have been extended with an extra element $p_{D+1}=1$,
and $F_{D+1}=\bar\Gamma$ is the event that no goal exists.
\end{proposition}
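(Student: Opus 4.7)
The plan is to mirror the proof of \autoref{pr:bfs-mgl}: decompose the expected BFS search time by conditioning on which level contains the first goal, and reduce each conditional expectation to the single goal level case handled by \autoref{pr:bfs-sgl-cb}.

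Let $X$ denote the BFS search time. First I would observe that the events $F_0,F_1,\dots,F_D,\bar\Gamma$ form a partition of the sample space, where $F_{D+1}:=\bar\Gamma$ by the stated convention. The plan is to apply \autoref{le:expand-expectation} to write
\[
\E[X]=\sum_{k=0}^{D+1}P(F_k)\,\E[X\mid F_k]
\]
and then evaluate each conditional expectation separately.

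Next I would argue that, for each $k\in\{0,\dots,D\}$, the distribution of $X$ conditional on $F_k$ is precisely the distribution handled by \autoref{pr:bfs-sgl-cb} with goal level $g=k$ and goal probability $p_k$. Since BFS sweeps the graph level-by-level, on $F_k$ it first explores the entire upper subgraph of $U_k$ nodes (none of which are goals under $\bigcap_{i<k}\bar\Gamma_i$) and then traverses the $L(0,k)$ nodes on level $k$ until it finds the first goal. Because goals on different levels are placed independently, conditioning additionally on $\bigcap_{i<k}\bar\Gamma_i$ does not alter the conditional distribution of goal positions on level $k$ given $\Gamma_k$. Hence the position $Y$ of the first level-$k$ goal has the truncated geometric distribution $\truncGeo(p_k,L(0,k))$ as in \autoref{pr:bfs-sgl-cb}, giving $\E[X\mid F_k]=\tbfsc(k,p_k,L\mid \Gamma_k)$.

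Finally I would handle the $k=D+1$ term, which corresponds to the no-goal case. By the runtime convention of \autoref{sec:preliminaries}, when no goal exists BFS explores every reachable node and then halts, contributing $U_{D+1}+1$ to the total. This is exactly what the formula in \autoref{pr:bfs-sgl-cb} yields when evaluated with the hypothetical parameters $p_{D+1}=1$ and $L(0,D+1)=1$, since then $\tc(1,1)=1$ and $\tbfsc(D+1,p_{D+1},L\mid\Gamma_{D+1})=U_{D+1}+1$. Substituting these conditional expectations back into the partition sum yields the stated expression. The only delicate step is the independence argument in the second paragraph—confirming that conditioning on the absence of goals at higher levels leaves the level-$k$ goal distribution unchanged—but this follows immediately from the iid goal model assumed throughout \autoref{sec:cb}.
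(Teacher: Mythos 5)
Your proposal is correct and follows essentially the same route as the paper, which proves this proposition by the identical argument used for \autoref{pr:bfs-mgl}: partition on the events $F_0,\dots,F_{D+1}$ via \autoref{le:expand-expectation} and reduce each conditional expectation to the single-goal-level formula of \autoref{pr:bfs-sgl-cb}. Your explicit treatment of the independence of goal placements across levels and of the hypothetical level $D+1$ is a welcome elaboration of details the paper leaves implicit.
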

For $p_k=0$, $\tbfsc$ will be undefined,
but this only occurs when $P(F_k)$ is also 0.
The runtime estimate is easily modified to the situation where at
least one goal must exist.
Simply drop the $(D+1)$st term in the sum, and
renormalise the probabilities $P(F_0),\dots,P(F_D)$

\paragraph{Discussion}
\cref{pr:dfs-cb,pr:bfs-cb} give (rough) estimates of average
BFS and DFS graph search time given the goal distribution $\p$ and
the structure parameter $L$.
The results apply to a very wide range of situations
(where the assumptions are satisfied and the length-to-depth counter
and the goal probability vector can be inferred).
However, the abstract nature of \cref{pr:dfs-cb,pr:bfs-cb}
makes it hard to directly assess their applicability.
This is partially remedied by the concrete examples
in the next section.

\section{Estimating Graph Parameters}
\label{sec:graph-param}

In this section we show that the length-to-depth counters $L$ and $\bar L$
can be estimated from a local sample in graphs with a sufficiently
uniform structure.
In \cref{sec:n-puzzle} we use the techniques developed here to
obtain estimates of the length-to-depth counters for the N-puzzle
and verify the results empirically.

\subsection{Branching Factors}
Our runtime estimates will be based on average \emph{local}
and \emph{global} branching factors $\bup$, $\bside$, $\bdown$
and $\bgup$, $\bgside$, $\bgdown$.
%
Although we will generally assume that graphs are rather uniform in
their properties, a common situation is that graphs consist of a few
different types of nodes.
For example, in the N-Puzzle described in \cref{sec:n-puzzle},
nodes with the empty tile in a corner, touching the edge, or in the
middle have different number of neighbours.
When averaging, the most relevant average is usually with respect to the
\emph{equilibrium distribution} \citep{Edelkamp1998}.
The equilibrium distribution takes into account how likely each type
of node is to be visited.
For example, nodes with few neighbours may be less often visited
than nodes with many neighbours.
The equilibrium distribution can be empirically estimated, or
computed from the transition probabilities between node types
(see \citet{Edelkamp1998} for details).

In trees, each node only branches downward,
with connections to the level just below.
In graphs, the situation is more complex.
In general, a node may be connected to one or several nodes on the
level above, and to zero or more nodes on the same level
and the level below.
Note however, that nodes can only be connected to nodes on the
same or adjacent levels.
If $v$ and $w$ are connected, $w$ can be at most one additional
step away from the root than $v$.

\begin{definition}[Local branching factors]
  \label{def:loc-branch}
  For a given node $v$, let
  \begin{itemize}
  \item the \emph{(local) upwards branching factor $\bup(v)$} be the number of
    neighbours $w$ of $v$ such that $\lvl(w) = \lvl(v)-1$
  \item the \emph{(local) sidewards branching factor $\bside(v)$} be the number of
    neighbours $w$ of $v$ such that $\lvl(w) = \lvl(v)$
  \item the \emph{(local) downwards branching factor $\bdown(v)$} be the number of
    neighbours $w$ of $v$ such that $\lvl(w) = \lvl(v)+1$.
  \end{itemize}
  The definition is illustrated in \cref{fig:graph-node}
\end{definition}

If a node $v$ is not given as an argument,
then $\bup$, $\bside$, and $\bdown$ refer to
the \emph{average branching factors} with respect to the equilibrium distribution.
We will generally assume that the average local branching factors are
similar on all levels (except, possibly, the lowest).
\begin{figure}
  \centering
  \includegraphics{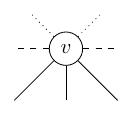}
  \caption{A node $v$ with two connections to nodes on
    the level above ($\bup(v)=2$), two connections to nodes on the same level
    ($\bside(v)=2$) and three connections to nodes on the level below ($\bdown=3$).
    By the definition of level, nodes can only be connected to the same level
    and the levels directly above and below.
  }
  \label{fig:graph-node}
\end{figure}
The local branching factors are local in the sense
that they can easily be determined by looking at a single node.
Alternative, \emph{global}  branching factors can be defined
by considering the ratio between the number of nodes directly
reachable on adjacent levels.

\begin{definition}[Global branching factors]\label{def:glo-branch}
  Let the \emph{global upward, sideward and downward branching factors}
  be defined as
  \begin{align*}
    \bgup[,l](v) &= \frac{|\{w: \lvl(w)=\lvl(v)-l-1, \dist(v,w)=l+1\}|}{|\{w: \lvl(w)=\lvl(v)-l, \dist(v,w)=l\}|}\\
    \bgside[,l](v) &= \frac{|\{w: \lvl(w)=\lvl(v), \dist(v,w)=l+1\}|}{|\{w: \lvl(w)=\lvl(v), \dist(v,w)=l\}|}\\
    \bgdown[,l](v) &= \frac{|\{w: \lvl(w)=\lvl(v)+l+1, \dist(v,w)=l+1\}|}{|\{w: \lvl(w)=\lvl(v)+l, \dist(v,w)=l\}|}
  \end{align*}
  where $v$ is an arbitrary node and $l$ is a natural number small enough
  that the denominator is defined; $\bgdir[,l](v)$ is left undefined when the denominator is 0.
\end{definition}

For example, in the graphs displayed in the \cref{fig:bfs-vs-dfs-graph},
the average local branching factor is approximately 3,
while for the root node $v_0$ the global branching factor is
$\bgdown[,2](v_0) = \text{(nodes on level 3)}/\text{(nodes on level 2)} = 4/2 = 2$ and
$\bgdown[,3](v_0)= \text{(nodes on level 4)}/\text{(nodes on level 3)} = 8/4 = 2$.

The theory will generally rely on a uniformity assumption that the choice
of $v$ and $l$ are not essential for $\bgdir[,l](v)$ as long as they are
chosen within some natural constraints.
This will allow us to drop the arguments $l$ and $v$.
First, $l$ needs to be chosen so that the denominator of
$\bgdir[,l](v)$ is not 0.
For this to be possible, $v$ must be chosen away from the top of the
tree for $\bgup$, and away from the bottom for $\bgdown$.
Finally, we also require $l\geq 2$ since
for $l=1$, the global branching factors equal the local ones.

Note that for trees with constant branching factor $\bgdown = \bdown$
and $\bgup=\bup = \bgside= \bside=1$.
In most graphs and for most directions $\dir\in\{\up,\side,\down\}$,
$\bgdir[,l] \leq \bdir$,
since some paths may ``collide'' and descendants of $v$ share children.%
\footnote{
  We expect the inequality $\bgdir[,l] \leq \bdir$ to hold generally,
  but since the average for $\bdir$ is taken with respect to
  the equilibrium distribution, a proof would be required.
}

%

\paragraph{Discounted branching factors}
Finally, we introduce the notion of a \emph{discounted} branching factor,
to account for the fact that returning to the node just arrived from is
blocked in our search methods.


\begin{definition}[Discounted branching factors]
  For $\dir\in\{\up,\side,\down\}$, let $\bdir'(v) = \bdir(v)-1$
  be the \emph{discounted branching factor in direction $\dir$}.
\end{definition}

The definition is natural, since exactly one neighbour in the direction
the search arrived from will be blocked from return.
When dropping the argument $v$, some care needs to be taken with the
equilibrium distribution.
For example, if half the nodes have a sideward neighbour, and half
the nodes have none, then $\bside=0.5$.
This would give $\bside'=-0.5$ which lacks reasonable interpretation.
Instead, when calculating $\bside'$,
the equilibrium distribution needs to be conditioned
on the fact that the node has been arrived at
from the side.
This implies that the node is the type with one sidewards neighbour.
This sidewards neighbour is now blocked, so $\bside' = 0$.
The subtlety of $\bdir'\not=\bdir-1$ is mainly important 
in graphs with widely varying types of nodes.



We summarise the uniformity assumptions we make for future reference:

\begin{assumption}[Uniformity]\label{as:uniformity}
  We assume that the graph is uniform in the sense that:
  \begin{itemize}
  \item The average branching factors $\bup$, $\bside$, and $\bdown$
    and their discounted counterparts $\bup'$, $\bside'$, and $\bdown'$
    remain the same across levels.
  \item The global branching factors are independent of the choice of $v$
    and $2\leq n\leq \lvl(v)/2$ in \cref{def:glo-branch}.
  \end{itemize}
\end{assumption}

\paragraph{Empirical estimation}

Given uniformity \cref{as:uniformity}, the parameters
$\bup$, $\bside$, $\bdown$, $\bgup$, $\bgside$, and $\bgdown$
can be estimated accurately from a (small) local sample.
When \cref{as:uniformity} is only approximately satisfied,
a larger sample may be required.

\subsection{Length to Depth}
\label{sec:length-to-depth}

\paragraph{Depth transitions}

In graphs, not all new neighbours of a node $v$ are one level below $v$.
Neither is it usually possible to tell which of the new neighbours
are above, beside, or below $v$.
This means that if we follow a path of length $n$ from the root,
we cannot generally tell which level between 0 and $n$ we are at.
However, comparing the (average) number of upwards, sidewards, and downwards
nodes, probabilistic arguments about the depth can still be made.

The direction from which we arrive to the node is blocked from return.
We therefore define the following depth transition probabilities
conditioned on the direction we reach the node from.
\begin{definition}[Depth transition probabilities]
  \label{def:trans-prob}
  Let $b = \bup + \bside + \bdown - 1$.
  Define the following conditional depth transition probabilities $p_\mathrm{arr,dir}$
  for going in direction dir after arriving from direction arr:\\
  \begin{minipage}{0.5\linewidth}
    \begin{align*}
      \pdu = \psu &= \frac{\bup}{b}\\
      \pus = \pds &= \frac{\bside}{b}\\
      \pud = \psd &= \frac{\bdown}{b}\\
    \end{align*}
  \end{minipage}
  \begin{minipage}{0.49\linewidth}
    \begin{align}
      \puu &= \frac{\bup'}{b}\nonumber\\
      \pss &= \frac{\bside'}{b}\label{eq:trans-prob}\\
      \pdd &= \frac{\bdown'}{b}\nonumber\\\nonumber
    \end{align}
  \end{minipage}
  For example, $\pdu$ is the probability for coming from a node below
  and going to one level above.
\end{definition}

The average branching factors are a good basis for the transition probabilities.

\paragraph{Depth distribution}

We are interested in finding a distribution $P_n(l)$ for the probability
of the search being at depth $d$ after having travelled $n$ steps
from the start node.

\begin{definition}[Length-to-depth distribution]
  Let $\pi = v_0, v_1,\dots,v_n$ be a random path starting from the root $v_0$
  and not visiting any node twice.
  (To be precise, the $i+1$st step of the path is made uniformly randomly
  among the neighbours of $v_i$ that are not already in the path.
  If no such neighbour exist, backtrack to the first node where a
  different choice was possible.)

  The \emph{length-to-depth distribution $P_n(l)$} is the
  probability that $\lvl(v_n)=l$.
\end{definition}

The transition probabilities \eqref{eq:trans-prob} define a Markov
chain with transition probabilities:
\[
P =
\begin{pmatrix}
  \pdu & \pds & \pdd \\
  \psu & \pss & \psd \\
  \puu & \pus & \pud
\end{pmatrix}
\]
Integrating over all possible $n$-step transition sequences
of this Markov chain gives the distribution $P_n(l)$.
An approximation of $P_n(l)$ may be obtained by finding the stationary probability
distribution $\pi = (\pup, \pside, \pdown)$ of $P$.
Roughly, $\pup$, $\pside$, and $\pdown$ are the
unconditional probabilities of the search moving upward,
sideward, and downward.
To approximate $P_n(l)$, we consider all combinations of
$n$ step paths so that the final result is $\lvl(v_n)=l$.
This gives for $l\leq n$,
\begin{equation}\label{eq:pnd}
  P_n(l) \approx 
  \sum_{\substack{u+s+d=n\\d-u=l}} {n\choose u,s,d}
  \pup^u\cdot\pside^s\cdot \pdown^{d}
\end{equation}
where $u$, $s$, and $d$ are integers representing the number of upwards,
sidewards, and downwards  number of steps the search takes.
For $l>n$, $P_n(l) = 0$.

\subsection{Depth-to-Depth}
\label{sec:depth-to-depth}

The branching factors also determine how many nodes at depth $d$
are reachable from an average node on level $l$.

\begin{definition}[Depth-to-depth counter]
  Let the \emph{depth-to-depth counter}
  \[
    K(l, d, r) = \E\left[ |\{v : \lvl(v)
      = d, \dist(v, u)\leq r\}| \;\Big|\;  \lvl(u) = l \right]
  \]
  be the average number of nodes on level $d$ reachable in at most $r$ steps
  from a node on level $l$.
  Let the \emph{non-unique depth-to-depth counter} $\bar K(l, d, r)$
  be the average number of paths of length at most $r$
  starting from a node on level $l$ and ending on level $d$.
  (The average, as usual, taken with respect to the equilibrium distribution.)
\end{definition}

To relate the depth-to-depth counters $K(l, d, r)$ and
$\bar K(l, d, r)$ to the branching factors, we introduce some extra notation:
Let $\Seq^{\leq r}(m)$ be the set of sequences $\seq=\{\dir_1,\dots,\dir_k\}$
of length $k\leq r$,
where $\dir_i\in\{\up, \side, \down\}$, $1\leq i\leq k$,
and whose number of down moves are $m$ more than
their number of up moves
$
  |\{i: \dir_i = \down\}| - |\{j: \dir_j = \up\}| = m
$
for $-D\leq m\leq D$.
If $\dir_1\not=\dir_2$, let
$\bl_{\dir_1,\dir_2} = \bl_{\dir_2}$ and $\bg_{dir_1,\dir_2} = \bg_{\dir_2}$.
Finally, we let
$\bg_{\dir,\dir} := \bl_{\dir,\dir} := \bl_{\dir}'$.
This assignment of $\bg_{\dir,\dir}$
may be justified on the grounds that $\bg_{\dir,\dir}$ is
effectively a $\bgdir[,1]$ parameter, and should therefore be equal to
its local counterpart (see discussion following \cref{def:glo-branch}).

\begin{theorem}[Depth-to-depth, general case]
  \label{th:ddc-gen}
  Given that the graph is sufficiently uniform so that the branching
  factors $\bdir$ and $\bgdir$ give a good approximation to the number
  of nodes and number of unique nodes are discovered per level,
  the depth-to-depth counters relates to the branching factors as
  \begin{equation}\label{eq:dtd}
    K(l, d, r) \approx \min\left\{\bgdown^d,
      \sum_{\seq\in\Seq^{\leq r}(d-l)}\prod_{i=0}^{|\seq|-1} \bg_{\dir_i,\dir_{i+1}}\right\}
  \end{equation}
  and
  \begin{equation}\label{eq:dtdb}
    \bar K(l, d, r)
    \approx \sum_{\seq\in\Seq^{\leq r}(d-l)}\prod_{i=0}^{|\seq|-1} \bl_{\dir_i,\dir_{i+1}}.
  \end{equation}
  Here, $\dir_0$ is the direction from which the starting node on
  level $l$ was reached (and empty sums are 0).
\end{theorem}
\todo[inline]{How make clear approximation?}
\begin{proof}
  By definition, the set $\Seq^{\leq r}(d-l)$ includes  the
  different variations of going upwards, sidewards, and
  downwards for at most $r$ steps and ending up $d-l$ steps further down.
  The average branching factors give how many options, on average,
  such a path will have.

  The unique nodes on any level $d$ cannot exceed the number
  of nodes $\bgdown^d$ on this level, which justifies the minimisation in \eqref{eq:dtd}.
  No similar restriction applies to the non-unique count in \eqref{eq:dtdb}.
\end{proof}

Note that the result is only approximate.
For example, the approximation is not perfect when $l$ and $d$
are much smaller than $r$.
In such cases, paths that initially head upward for more than
$l$ steps are not possible.
Although these paths could in principle be excluded from $\Seq^{\leq r}$,
we do not expect this to significantly change the estimate in most cases.

A more efficient approximation is possible when $\bside=0$.

\begin{corollary}[Depth-to-depth, $\bside=0$]\label{th:ddc-spec}
  In addition to the assumptions of \cref{th:ddc-gen},
  assume $\bside=0$. 
  Let $\rextra = r - |n-d|$.
  If $d-n>0$, let $\bg = \bgdown$ and $\bl=\bdown$;
  otherwise let $\bg = \bgup$ and $\bl=\bup$.
  The depth-to-depth counters relate to the branching factors as
  \begin{multline}\label{eq:K-spec2}
    K(l, d, r) \approx \min\bigg\{\bgdown^d,\;\; \bg^{|l-d|} +\\
    +\sum_{m=1}^{\floor{\rextra/2}}
    \sum_{t=1}^{m}
    {m-1 \choose t-1}
    {|l-d|+m \choose t}
    \bg^{|l-d|}(\bgdown\bgup)^{m-t}(\bdown'\bup')^t\bigg\}
  \end{multline}
  and
  \begin{multline}\label{eq:K-spec}
    \bar K(l, d, r) \approx \bl^{|l-d|} +\\
    +\sum_{m=1}^{\floor{\rextra/2}}
    \sum_{t=1}^{m}
    {m-1 \choose t-1}
    {|l-d|+m \choose t}
    \bl^{|l-d|}(\bdown\bup)^{m-t}(\bdown'\bup')^t
  \end{multline}
  when $r\geq |d-n|$
  If $r< |d-n|$, then $K(l, d, r) =\bar K(l, d, r)=0$.
\end{corollary}
The interpretation of $m$ is the number of time steps the search goes in
the ``wrong'' direction, for example heading upwards when the desired
level $d$ is below the starting level $l$.
The interpretation of $t$ is the number of times the direction switches
from upwards-to-downwards-to-upwards or vice versa.

\todo[inline]{add constant for direction}

\begin{proof}[Proof of \cref{th:ddc-spec}]
  The result follows from the more general \cref{th:ddc-gen}.
  Fixating the number of steps $m$ that the search goes in the
  ``wrong'' direction, and the number of switches $t$ between heading
  upwards and downwards, the product simplifies as
  \[
    \prod_{i=0}^{|\seq|-1} \bl_{\dir_i,\dir_{i+1}}
    = \bl^{|l-d|}(\bdown\bup)^{m-t}(\bdown'\bup')^t
  \]
  and similarly for \cref{eq:K-spec2}.
  Note that in \cref{eq:K-spec2}, the local discounted branching factors
  are used in the last factor.

  The first term in \eqref{eq:K-spec} and \eqref{eq:K-spec2}
  accounts for the special case where no direction switches are made,
  i.e.\ $t=0$.
  Then no steps can be taken in the wrong direction, so $m=0$ as well.
\end{proof}

When $\bgup\approx 1$ and $\bgdown\gg 1$, the upper bound will be dominated
by the first term of the sum\footnote{The binomial coefficients grow
subexponentially in the lower argument, ${n \choose k}\leq n^k/k!$.},
yielding the even more easily computed approximation
\begin{equation*}
  K(l, d, r) \approx  \bg^{|l-d|} +
  (|l-d|+\floor{\rextra/2})
  \bg^{|l-d|}(\bgdown\bgup)^{\floor{\rextra/2}}(\bgdown'\bgup')
\end{equation*}
and similarly for $\bar K$ and $\bup$ and $\bdown$.
\todo{maybe not keep this?}


\paragraph{Length-to-depth counter}

Combining the depth-to-depth counters $K$ and $\bar K$
with the length-to-depth distributions $P_n$ gives us the expected number
of nodes reachable on level $d$ when the DFS path length is $n$.

\begin{definition}[Length-to-depth counters]
  For a given radius of search $D'$,
  and depth-do-depth counters $\bar K$ and $K$, let the
  \emph{level-to-depth unique counter} be
  \[
    L(n, d) = \sum_{l=0}^DP_n(l)K(l, d, {D'}-n)
  \]
  and the \emph{level-to-depth non-unique counter} be
  \[
    \bar L(n, d) = \sum_{l=0}^DP_n(l)\bar K(l, d, {D'}-n)
  \]
  for a given path length $n$ and depth $d$, $0\leq d\leq D$
  .
\end{definition}

Assuming accurate depth-to-depth counters and depth distribution,
the level-to-depth counter $L(n, d)$ is the expected number of nodes
reachable on level $d$ after search length $n$, and
$\bar L(n, d)$ counters nodes with repetition
when several paths lead to the same node.

\subsection{Estimating Goal Probabilities}

By solving various instances of a search problem $G$,
we may gather data of the type
\[
  \hat p_l = \frac{\text{number of goals found on level }l}{\text{number of nodes searched on level }l}.
\]
If the level is unknown
(as it usually is when the problem is a graph and not solved completely)
the length-to-depth distribution $P_n(l)$ (\cref{sec:length-to-depth})
can be used to make an estimate of the level $l$.

In this manner, data of type $G\mapsto p_l$ may be gathered
for $0\leq l\leq D$.
Let $\phi_G$ be some \emph{features} of $G$.
The inference problem $\phi_G\mapsto \p$ may be solved with
suitable statistical or machine learning method.
In scenarios where different type of data is available,
different or more advanced estimation techniques may work better.



\section{Grammar Problems}\label{sec:grammar}

We now show how to apply the general theory of \cref{sec:collide} to
two concrete grammar problems.
In these grammar problems, the length-to-depth counters can
be derived analytically, without relying on estimated branching factors
(indeed, the branching factors are not stable in these problems).
As usual, we assume that the goal probability vector $\p$ is given.
This means that \cref{pr:dfs-cb,pr:bfs-cb} can directly be applied,
and their predictions tested (\cref{sec:experimental-verification}).
We only focus on graph search in this section.

A \emph{grammar problem} is a constructive
search problem where nodes
are strings over some finite alphabet $B$,
and the neighbourhood relation is given by a set of production rules.
\emph{Production rules} are mappings $x\to y$, $x,y\in B^*$,
defining how strings may be transformed (for details, see \citet{Hopcroft1979}).
For example, the production rule $S\to Sa$ permits the string $aSa$
to be transformed into $aSaa$.
A grammar problem is defined by a set of production rules, together with
a \emph{starting string} and a \emph{set of goal strings}.
A \emph{solution} is a sequence of production rule applications that transforms
the starting string into a goal string.
Many search problems can be formulated as grammar problems,
with string representations of states modified by production rules.
Their generality makes it \emph{computably undecidable}
whether a given grammar problem has a solution or not.
We here consider a simplified version where the search depth is artificially
limited, and goals are distributed according to a goal probability
vector $\p$.

Grammar problems exhibit two features not present in the complete tree
model.
First, it is possible for branches of the grammar tree to `die'. This
happens if no production rule is applicable to the string of the state.
Second, often the same string can be produced by different sequences
of production rules,
which means that grammar search graphs generally are not trees.

\subsection{Binary Grammar}\label{sec:bg}

The first grammar we consider has only two production rules,
both of which can be applied to any string.

\paragraph{Definition}
Let $\epsilon$ be the empty string.
The \emph{binary grammar} consists of two production rules,
$\epsilon\to a$ and $\epsilon\to b$ over the alphabet $B=\{a,b\}$.
The starting string is the empty string $\epsilon$.
A maximum depth $D$ of the search graph is imposed,
and strings on level $k$ are goals with iid probability $p_k$, $0\leq k\leq D$.
Since the left hand substring of both production rules is the empty string,
both can always be applied at any place to a given string.
The resulting graph is shown in \cref{fig:clustered}.
\begin{figure}
\centering
\includegraphics{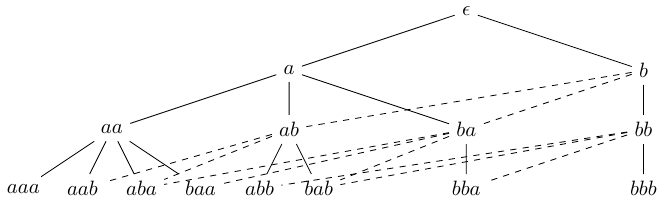}
\caption{Graph of binary grammar problem with max depth $D=3$.
Contiguous lines indicate first discovery by DFS, and dashed
lines indicate rediscoveries.
Nodes further to the right will
have more of their children previously discovered.
}
\label{fig:clustered}
\end{figure}

\paragraph{Analysis}
To get a sense of the induced search graph, the number of children
and parents of a node can be calculated by simple combinatorics.
Consider a node $v$ at level $d$. Its children are reached by either adding
an $a$ or by adding one $b$. Let $\# a$ denote the number of $a$'s in $v$,
and let $\# b$ denote the number of $b$'s in $v$.
Then $\# a+1$ distinct strings can be created by adding a $b$,
and  $\# b+1$ distinct strings can be created by adding an $a$.
In total then, $v$ will have $(\# a+1)+(\# b+1)=d+2$ children,
i.e.\ $\bdown(v)=d+2$ for any node on level $d$.
The number of parents of a node is the number of contiguous
$a^i$ and $b^j$ segments.
For example, $bbaaab$ have three segments $bb$-$aaa$-$b$ and
three parents $b\, aaa\, b$, $bb\, aa\, b$ and $bb\, aaa$.
A parent always differs from a child by the removal of one letter
from one segment,
and within a segment it is irrelevant which letter is removed.

Assuming that the production
rule $\epsilon\to a$ is always used first by DFS,
the first node on level $n$ that DFS reaches in the
binary grammar problem
is $\delta_n=a^n$ for $0\leq n\leq D$.
The following two lemmas derive expressions for \
the length-to-depth counter $\lbg$ and $\bar\lbg$
required by \cref{pr:dfs-cb}. Incidentally, the
number $A_{n,d}$ of level-$d$ $\delta_n$ explorables
(defined in \cref{sec:dfs-cb})
gets an elegant form in the binary grammar problem.

\begin{lemma}[Length-to-depth counter Binary Grammar]
\label{le:lbg}
For $n<d$,
let $\lbg(n,d) = |\{ v : \lvl(v)=d, v\in\descendants(a^n) \}|$  be the number of
nodes reachable from $a^n$, and
let $A_{n,d}=\lbg(n,d)-\lbg(n+1,d)$ be the number of descendants of $a^n$ that
are not descendants of $a^{n+1}$.
Then $\lbg(n,d)=\sum_{i=0}^{d-n}{d \choose i}$,
and $A_{n,d} = {d\choose d-n}$.
\end{lemma}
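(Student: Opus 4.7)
The plan is to characterize the set of descendants of $a^n$ combinatorially, and then reduce the counting to a standard binomial sum.

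First I would argue that the descendants of $a^n$ at level $d$ are exactly the strings in $\{a,b\}^d$ that contain at least $n$ occurrences of the letter $a$. The forward direction is easy: $a^n$ has $n$ $a$'s, and every production rule insertion adds a single letter without removing any, so after $d-n$ insertions we reach a string of length $d$ with at least $n$ $a$'s. For the reverse direction, given any length-$d$ string $w$ with at least $n$ $a$'s, I would designate (say) the leftmost $n$ $a$'s of $w$ as the ``original'' letters coming from $a^n$, and then construct an explicit derivation: insert the remaining $d-n$ letters of $w$ one at a time in their correct positions, in left-to-right order. At each step the string is a legitimate intermediate descendant of $a^n$ (since the $n$ designated $a$'s are still present in order), so $w$ is reached.

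Second, I would count the strings of length $d$ with at least $n$ $a$'s. Classifying by the number $j$ of $b$'s (equivalently $d-j$ $a$'s), the condition becomes $j \leq d-n$, and for each such $j$ there are $\binom{d}{j}$ strings (choose the positions of the $b$'s). Summing gives
\[
\lbg(n,d) \;=\; \sum_{j=0}^{d-n}\binom{d}{j} \;=\; \sum_{i=0}^{d-n}\binom{d}{i},
\]
which is the claimed formula.

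Third, the expression for $A_{n,d}$ follows by direct subtraction:
\[
A_{n,d} \;=\; \lbg(n,d) - \lbg(n+1,d) \;=\; \sum_{i=0}^{d-n}\binom{d}{i} - \sum_{i=0}^{d-n-1}\binom{d}{i} \;=\; \binom{d}{d-n}.
\]
I expect the main (minor) obstacle to be making the reachability argument in step one airtight, i.e.\ verifying that the greedy insertion procedure is consistent with the production rules of the binary grammar (each rule $\epsilon \to a$ or $\epsilon \to b$ can be applied at any position of the current string, so any single-letter insertion is a legal step); once this is clear, the counting step is essentially immediate.
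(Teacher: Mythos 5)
Your proposal is correct and follows essentially the same route as the paper: identify the level-$d$ descendants of $a^n$ with the length-$d$ binary strings having at most $d-n$ letters $b$, count them by the number $i$ of $b$'s as $\binom{d}{i}$, sum, and obtain $A_{n,d}$ by telescoping. The only difference is that you spell out both directions of the reachability characterization (in particular that every such string is actually derivable from $a^n$), a step the paper's proof asserts implicitly; this is a welcome tightening rather than a different argument.
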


\begin{proof}
The reachable nodes on level $d$ that we wish to count are $d-n$ levels
below $a^n$.
To reach this level we must add $i\leq d-n$ number of $b$'s and $d-n-i$ number of
$a$'s to $a^n$.
The number of length $d$ strings containing exactly $i$ number of $b$'s
is ${d\choose i}$
(we are choosing positions for the $b$'s non-uniquely with repetition
among $d-i+1$ possible positions).
Summing over $i$, we obtain $\lbg(n,d)=\sum_{i=0}^{d-n}{d \choose i}$,
and $A_{n,d}=\lbg(n,d)-\lbg(n+1,d) = {d\choose d-n}$.
\end{proof}

\begin{lemma}[Non-unique length-to-depth counter Binary Grammar]
  \label{le:lbgt}
  For $n<d$,
  let $\bar\lbg(n,d)$ be the non-unique length-to-depth
  counter for the Binary Grammar,
  i.e.\ the number of paths from $a^n$ to level $d$.
  Then $\bar\lbg(n,d)=\prod_{l=n}^{d-1}(l+2)$.
\end{lemma}

\begin{proof}
  As observed above, nodes on level $l$ have $l+2$ children.
  The number of paths from level $n$ to level $d$ is obtained
  by multiplying the number of options at each step.
\end{proof}

Based on these lemmas,
the expected runtimes of BFS, DFS tree search, and DFS graph search
can be calculated:

\begin{corollary}[BFS runtime on Binary Grammar problem]\label{co:bg-bfs}
The expected BFS search time $\tdfsb(\p)$ in a Binary Grammar Problem of depth
$D$ with goal probabilities $\p=[p_0,\dots,p_D]$ is
\[\tbfsb(\p) = \tbfsc(\p, \lbg).\]
\end{corollary}

\begin{corollary}[DFS graph search runtime on Binary Grammar problem]\label{co:bg-dfs}
The expected DFS search time $\tdfsb(D, \p)$ in a binary grammar problem of depth
$D$ with goal probabilities $\p=[p_0,\dots,p_D]$ is
bounded between $\tdfsbl(D, \p) := \tdfscl(D, \p, \lbg)$ and
$\tdfsbu(D, \p) := \tdfscu(D, \p, \lbg)$, and is approximately
\[\tdfsb(D, \p) := \tdfsc(D, \p, \lbg).\]
\end{corollary}

\begin{corollary}[DFS tree search runtime on Binary Grammar problem]\label{co:bg-dfst}
The expected DFS search time $\tdfsb(D, \p)$ in a binary grammar problem of depth
$D$ with goal probabilities $\p=[p_0,\dots,p_D]$ is
bounded between $\tdfsbl(D, \p) := \tdfscl(D, \p, \lbg, \bar\lbg)$ and
$\tdfsbu(D, \p) := \tdfscu(D, \p, \lbg, \bar\lbg)$, and is approximately
\[\tdfsb(D, \p) := \tdfsc(D, \p, \lbg, \bar\lbg).\]
\end{corollary}

\begin{proof}[Proof of \cref{co:bg-bfs,co:bg-dfs,co:bg-dfst}]
Direct application of
\cref{le:lbg,le:lbgt}, and \cref{pr:bfs-cb,pr:dfs-ts,pr:dfs-cb}
respectively.
\end{proof}
The estimates are plotted for a single goal level in
\cref{fig:decision-boundaries,fig:search-time}.

\subsection{Full Grammar}\label{sec:fg}

Our second grammar builds on a larger set of production
rules that can move a start symbol $S$ around, and elicit
the letters $a$ and $b$ from $S$.

\paragraph{Definition}
The \emph{full grammar problem} has alphabet $B=\{S, a, b\}$ and start string $S$.
The \emph{production rules} are $S\to\epsilon$ (with $\epsilon$
the empty string) plus the \emph{adding rules}
$S\to Sa$,
$S\to aS$,
$S\to Sb$,
$S\to bS$,
and the \emph{moving rules}
$Sa\to aS$,
$aS\to Sa$,
$Sb\to bS$, and
$bS\to Sb$.
Only $S$-less strings can be goal nodes. As usual, a maximum depth $D$
and a goal probability vector $\p=[p_0, \dots, p_D]$ are given.

\paragraph{Analysis}
For simplified analysis,
we will abuse notation the following way.
We will consider $S$-less nodes to be one level higher than they
actually are. For example, $a$ would normally be on level 2
(e.g.\ reached by the path $S\to Sa$, $S\to\epsilon$),
but we will consider it to be on level 1.
A slight modification of BFS and DFS makes them always check the $S$-less child
first (which is always child-less in turn), which means the change will only slightly
affect search time.
We will still consider $\delta_n=Sa^n$ whenever $S\to Sa$ is among the production
rules, however.

The search graph of the full grammar problem is shown in
\cref{fig:unclustered} (edges induced by moving rules are not shown).
Since there are four adding rules that can be applied to each node,
each node will have four children.
Typically, when we move further to the right in the tree, more children
will already have been discovered.

\begin{figure}
  \centering
  \includegraphics{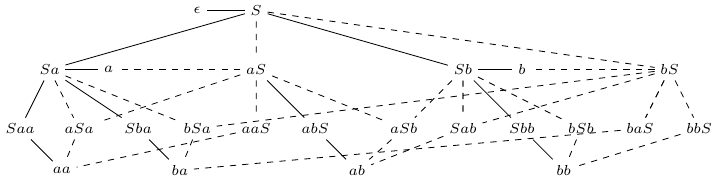}
\caption{Search graph for the Full Grammar problem until level 2.
Connections induced by moving rules are not displayed.
Contiguous lines indicate the first discovery of a child by DFS and
dashed lines indicate rediscoveries.
}
\label{fig:unclustered}
\end{figure}

The full grammar problem can be analysed  by a reduction to a
binary grammar problem with the same parameters $D$ and $\p$.
Assign to each string $v$ of the binary grammar problem the set of strings
that only differ from $v$ by (at most) an extra $S$.
We call such sets \emph{node clusters}. 
For example, $\{a, Sa, aS\}$ constitutes the node cluster corresponding
to $a$.
Due to the abusing of levels for the $S$-less strings, all members
of a cluster appear on the same level in the full grammar problem (the
level is equal to the number of $a$'s and $b$'s).
The level is also the same as the corresponding string in the
binary grammar problem.

\begin{lemma}[Length-to-depth counter Full Grammar]\label{le:lbf}
For every $n$, $d$, $n\leq d$, the length-to-depth counter $\lfg$ of the
full grammar problem is $\lfg(n, d) = (d+2)\lbg(n,d)$.
\end{lemma}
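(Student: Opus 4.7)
The plan is to establish a bijective correspondence between level-$d$ descendants of $Sa^n$ in the full grammar problem and pairs (cluster representative, position of $S$), where cluster representatives are precisely the level-$d$ descendants of $a^n$ in the binary grammar problem. Since each cluster on level $d$ has exactly $d+2$ members, the factor $(d+2)$ pops out immediately.

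First I would make the cluster structure precise: given a full-grammar string $w$ on level $d$, define its \emph{binary projection} $\pi(w)\in\{a,b\}^d$ by deleting the (at most one) occurrence of $S$. Because the alphabet forbids goal strings from containing $S$ and the level convention places $S$-less strings at depth $d$ equal to the number of non-$S$ letters, every level-$d$ node in the full grammar has projection of length $d$. Conversely, given a binary string $u$ of length $d$, the preimage $\pi^{-1}(u)$ consists of $u$ itself together with the $d+1$ strings obtained by inserting an $S$ in one of the $d+1$ available positions, giving $|\pi^{-1}(u)|=d+2$.

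Next I would verify that $\pi$ respects descendancy with respect to $Sa^n$ and $a^n$. The key observation is that the moving rules act within a cluster (they permute the position of $S$ without changing $\pi$), while the adding rules $S\to Sa$, $S\to Sb$, $S\to aS$, $S\to bS$, and the erasing rule $S\to\epsilon$ all map $\pi^{-1}(u)$ to clusters whose projection is obtained from $u$ by appending a single $a$ or $b$ somewhere — i.e., exactly the binary-grammar children of $u$ in the sense of \autoref{le:lbg}. Conversely, every binary-grammar child of $u$ can be realized: pick a representative in $\pi^{-1}(u)$ containing $S$ in the appropriate position and apply the matching adding rule. So a cluster $\pi^{-1}(u)$ is reachable from the starting cluster $\pi^{-1}(a^n)$ (which contains $Sa^n$) in the full grammar if and only if $u$ is reachable from $a^n$ in the binary grammar, and moreover \emph{every} element of the reachable cluster is reachable, because the moving rules together with $S\to\epsilon$ make all $d+2$ members of $\pi^{-1}(u)$ inter-reachable once any one of them is.

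With these two facts in hand the count is immediate: the level-$d$ descendants of $Sa^n$ partition into clusters indexed by level-$d$ descendants of $a^n$ in the binary grammar, each of size $d+2$, yielding
\[
\lfg(n,d)\;=\;\sum_{u}|\pi^{-1}(u)|\;=\;(d+2)\,\lbg(n,d).
\]
The main delicate point — and the place I would be most careful — is the claim that all $d+2$ elements of a reachable cluster are themselves reachable (not just that the cluster is reachable as a set). This relies on checking that the moving rules $Sa\leftrightarrow aS$ and $Sb\leftrightarrow bS$ are symmetric enough to slide $S$ to every position in any binary word, which in turn follows from a short induction on word length; everything else is bookkeeping on the sizes of fibers of $\pi$.
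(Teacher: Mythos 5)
Your proof is correct and follows essentially the same route as the paper's: both establish the correspondence between node clusters in the full grammar and nodes of the binary grammar, check that it respects descendancy from $Sa^n$ versus $a^n$, and multiply by the cluster size $d+2$. You are simply more explicit about the point the paper glosses over, namely that the moving rules together with $S\to\epsilon$ make every one of the $d+2$ members of a reachable cluster itself reachable.
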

\begin{proof}
$\lbg(n, d)$ counts the level $d$ descendants of $a^n$ in the binary
grammar problem (BGP), and
$\lfg(n, d)$ counts the level $d$ descendants of $Sa^n$ in the full
grammar problem (FGP). The node
$u$ is a child of $v$ in BGP iff the members of the $u$ node cluster
are descendants of $Su$.
Therefore the node clusters on level $d$ descending from $Sa^n$ in
FGP correspond to the BGP nodes descending from $a^n$.
At level $d$, each node cluster contains $d+2$ nodes.
\end{proof}

The non-unique length-to-depth counter $\bar\lfg$ can be approximated
from the local branching factors $\bup=1$, $\bside\approx 2$, $\bdown\approx 4$
as described in \cref{sec:graph-param}.
Analogously to the Binary Grammar case, the length-to-depth counters
give us the expected runtime of BFS and DFS:

\begin{corollary}[Expected BFS runtime on Full Grammar]\label{co:fg-bfs}
The expected BFS search time $\tdfsf(\p)$ in a full grammar problem of depth
$D$ with goal probabilities $\p=[p_0,\dots,p_D]$ is
\[\tbfsf(\p) := \tbfsc(\p, \lfg).\]
\end{corollary}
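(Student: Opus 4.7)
The plan is to invoke \autoref{pr:bfs-cb} directly, with the descendant counter $L$ specialised to $\lfg$. That proposition already computes the expected BFS search time $\tbfsc(\p, L)$ for any search problem of depth $D$ with iid per-level goal probabilities $\p$ and descendant counter $L$, so all that remains is to check that the full grammar problem fits into the abstract framework of \autoref{pr:bfs-cb} and that its descendant counter is the one given by \autoref{le:lbf}.

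First I would verify the structural hypotheses. The full grammar problem has iid goal probabilities $p_k$ per level, which matches the hypothesis on $\p$. Under the level-abuse convention of \autoref{sec:rg}, each $S$-less string sits on the effective level equal to its total number of $a$'s and $b$'s, and with the tie-breaking rule that BFS always checks the (childless) $S$-less child of a node first, graph-search BFS still exhausts every effective level $k$ before touching any node on effective level $k+1$. This is exactly the level-by-level exploration assumed in the derivation of $\tbfsc(\p, L)$ and in the definition of the upper subgraph $U_k$.

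Next I would identify the descendant counter. By \autoref{le:lbf} specialised to $n=0$, the number of nodes on effective level $k$ reachable from the start string $S = Sa^0$ equals $\lfg(0,k) = (k+2)\lbg(0,k)$, since each binary-grammar node on level $k$ corresponds to a cluster of $k+2$ strings of the full grammar problem on that same effective level. Hence $U_k = \sum_{i=0}^{k-1}\lfg(0,i)$ correctly counts the nodes BFS explores before reaching level $k$, the per-level probabilities $P(\Gamma_k) = 1-(1-p_k)^{\lfg(0,k)}$ and $P(F_k) = P(\Gamma_k)\prod_{i=0}^{k-1}P(\bar\Gamma_i)$ take the form prescribed by \autoref{pr:bfs-cb}, and \autoref{pr:bfs-sgl-cb} applies within each level. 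Substituting into the general formula yields $\tbfsf(\p) = \tbfsc(\p, \lfg)$ as claimed.

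The main obstacle is not computational but bookkeeping: the level-abuse convention and the modified BFS tie-breaking must not disrupt either the level-by-level sweep of BFS or the iid goal distribution per level. Once this correspondence is made explicit, the corollary is an immediate instantiation of \autoref{pr:bfs-cb} with $L = \lfg$, and the reduction via \autoref{le:lbf} is what makes the resulting expression explicit in terms of the already-computed binary-grammar counter $\lbg$.
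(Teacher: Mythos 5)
Your proposal is correct and follows exactly the route the paper takes: its proof of this corollary is literally ``direct application of \autoref{le:lbf} and \autoref{pr:bfs-cb},'' which is your instantiation of the colliding-branches BFS formula with $L=\lfg$. Your additional care in checking that the level-abuse convention and the $S$-less tie-breaking preserve the level-by-level BFS sweep is a reasonable elaboration of details the paper leaves implicit, but it does not change the argument.
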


\begin{corollary}[Expected DFS graph search runtime on Full Grammar]\label{co:fg-dfs}
The expected DFS search time $\tdfsf(D, p)$ in a full grammar problem of depth
$D$ with goal probabilities $\p=[p_0,\dots,p_D]$ is
bounded between $\tdfsfl(D,\p):=\tdfscl(D,\p,\lfg)$ and
$\tdfsfu(D,\p):=\tdfscu(D,\p,\lfg)$, and is approximately
\[\tdfsf(D,\p) := \tdfsc(D, \p, \lfg).\]
\end{corollary}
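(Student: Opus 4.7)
The plan is to mimic the proof of Corollary \ref{co:bg-dfs}: use Lemma \ref{le:lbf} to obtain the full-grammar descendant counter $\lfg$, then invoke Proposition \ref{pr:dfs-cb} with $L:=\lfg$. Since the quantities $\tdfscl$, $\tdfscu$, and $\tdfsc$ in Proposition \ref{pr:dfs-cb} depend on the search problem only through $D$, $\p$, and the descendant counter, the argument is essentially a direct substitution once the hypotheses of Proposition \ref{pr:dfs-cb} are verified.

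First I would check that the full grammar problem fits the framework of \autoref{sec:collide}: it has bounded search depth $D$, goals distributed iid by level according to $\p$, and (by standing assumption) a negligible probability that DFS encounters a goal before reaching $\delta_D$. Under the level convention that treats an $S$-less string as one level higher than its technical level, the first node on level $n$ reached by DFS is $\delta_n = Sa^n$, which is exactly the setup used to derive $\lfg$ in Lemma \ref{le:lbf}.

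Second I would substitute $L:=\lfg$ into Proposition \ref{pr:dfs-cb}. This immediately yields
\[
\tdfscl(D,\p,\lfg)\;\le\;\E[\text{DFS search time}]\;\le\;\tdfscu(D,\p,\lfg),
\]
which, by the definitions set out in the statement of the corollary, are $\tdfsfl(D,\p)$ and $\tdfsfu(D,\p)$. Taking the arithmetic mean of the bounds gives the single point estimate $\tdfsf(D,\p) = \tdfsc(D,\p,\lfg)$, in the same manner as in Corollary \ref{co:bg-dfs}.

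The main obstacle is justifying that the moving rules ($Sa \to aS$, $aS\to Sa$, etc.) do not invalidate $\lfg$ in the full grammar setting. One must argue that moving rules do not enlarge the set of strings reachable from $\delta_n$: any string obtainable from $\delta_n$ by a combination of adding and moving rules can also be obtained by an appropriate sequence of adding rules alone (insert the letters in the desired order, then optionally apply $S\to\epsilon$), so the moving rules contribute only extra edges inside the same reachable node set. Hence $\lfg$ as defined in Lemma \ref{le:lbf} continues to count the correct descendants in the full grammar, and the reduction to Proposition \ref{pr:dfs-cb} goes through. Any extra probes incurred `on the way down' to $\delta_n$ through rediscovery are absorbed into the same negligibility assumption already used in Proposition \ref{pr:dfs-cb}.
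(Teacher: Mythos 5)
Your proposal matches the paper's own proof, which is exactly the one-line ``direct application of \autoref{le:lbf} together with \autoref{pr:dfs-cb}''; your additional verification that the moving rules do not enlarge the descendant sets (and hence leave $\lfg$ valid) is a reasonable elaboration of what the paper leaves implicit. No gap.
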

\begin{proof}[Proof of \cref{co:fg-bfs,co:fg-dfs}]
Direct application of
\cref{le:lbf}, and \cref{pr:bfs-cb,pr:dfs-cb}
respectively.
\end{proof}

\cref{co:fg-bfs,co:fg-dfs,co:bg-bfs,co:bg-dfs,co:bg-dfst} show that it is
possible to estimate BFS and DFS expected runtime by
analytically deriving the length-to-depth counter.
The next section verify the predictions empirically.
Among other things, it shows that the DFS bounds can
be used to predict expected runtime reasonably well.

\section{Experimental Results}\label{sec:experimental-verification}

To verify the analytical results, we have implemented the models
of \cref{sec:sgl,sec:mgl,sec:cb,sec:grammar,sec:graph-param}
in Python 3 using the \texttt{graph-tool} package \citep{Peixoto2015}.%
\footnote{Source code for the experiments is available at \url{http://tomeveritt.se}.}

\paragraph{Gaussian Binary Tree}

To develop a concrete instance of the multiple goal level model
we consider the special case of \emph{Gaussian goal probability vectors},
with two parameters $\mu$ and $\sigma^2$.
For a given depth $D$, the goal probabilities are given by
\begin{equation*}
  p_i = \min\left\{ \frac{1}{20\sqrt{\sigma^2}}e^{(i-\mu)^2/\sigma^2},\; \frac{1}{2} \right\}.
\end{equation*}
The parameter $\mu\in [0,D]\cap\SetN$ is the \emph{goal peak}, and
the parameter $\sigma^2\in\SetR^+$ is the \emph{goal spread}.
The factor $1/20$ is arbitrary, and chosen to give an interesting dynamics
between searching depth-first and breadth-first.
No $p_i$ should be greater than $1/2$, in order to (roughly)
satisfy the assumption of \cref{pr:bfs-mgl}.
We call this model the \emph{Gaussian binary tree}.

An important feature of the Gaussian goal probabilities are that
they decay equally fast both upward and downward from the goal peak
level $\mu$. An arbitrary node situated $k$ levels above the goal peak
has the same probability of being a goal as an arbitrary
node situated $k$ levels below the peak, for any
$k\in\{0,\dots,\min(\mu, D-\mu)\}$.

\subsection{Runtimes and Decision Boundaries}

\paragraph{Expected Runtime Plots}

\begin{figure}
  \rotatebox[origin=c]{90}{expected search time}
  \begin{subfigure}{.5\textwidth}
    \centering
    \includegraphics{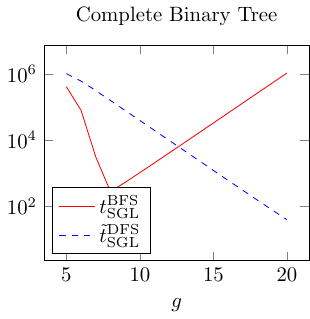}
  \end{subfigure}
  \begin{subfigure}{.5\textwidth}
    \centering
    \includegraphics{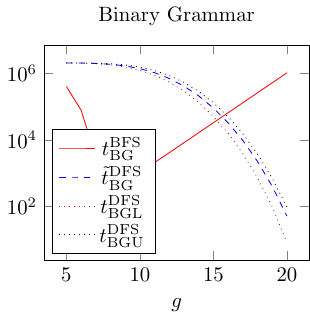}
  \end{subfigure}
  \caption{The expected search time of BFS and DFS graph search as a function of
    a single goal level $g$ with goal probability $p_g=0.05$ in a tree of depth $D=20$.
    (The part hidden by the legend is identical for both plots.)
    BFS has the advantage when the goal is in the higher regions of the graph,
    although at first the probability that no goal exists heavily influences
    both BFS and DFS search time.
    The greater connectivity of the graph in the binary grammar problem
    permits DFS to spend more time in the lower regions before backtracking,
    compared to the complete binary tree analysed in
    the previous section.
    This penalises DFS runtime when the goal is not in the very lowest regions of
    the tree. 
  }
  \label{fig:search-time}
\end{figure}

It is a natural exercise to plot the expected runtime as a
function of the involved parameters.
\cref{fig:search-time} plots the expected runtimes for a single
goal level in both a binary tree and a binary grammar.
BFS is better for goals close to the root and DFS graph search better
when the goals are farther from the root in both models, as expected.
The initially high value of BFS depends on the high likelihood
of there being no goal at all when the goal level is close to the
root and only contain a few nodes.
When there are no goals, both BFS and DFS will search the entire space.

More surprising is the fact that the crossover occurs later
in the more connected graph of the Binary Grammar.
The reason is that DFS can spend longer time in the very lowest
regions of the graph before backtracking due to the higher
connectivity (compare \cref{fig:bfs-vs-dfs,fig:bfs-vs-dfs-graph}).

\paragraph{Decision Boundaries}

\begin{figure}
  \centering
  \includegraphics{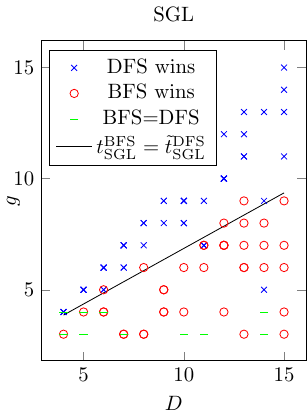}
  \includegraphics{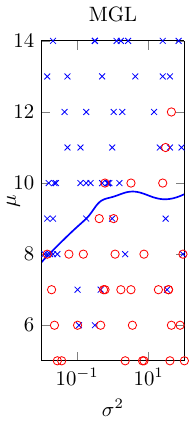}
  \includegraphics{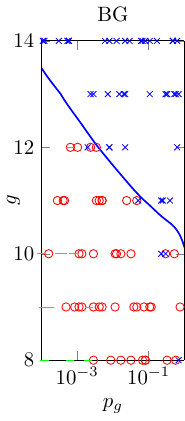}
  \caption{
    The left graph shows the decision boundary of \cref{pr:bfs-vs-dfs-sgl}
    for the single goal level tree,
    The scattered points come from 100 empirical outcomes
    of BFS and DFS graph search times according
    to the varied parameters $g\in \{3,\dots, D\}$ and $D \in \{4,\dots, 15\}$.
    The decision boundary gets $79\%$ of the winners correct.
    The middle graph shows the decision boundary for the Gaussian tree
    given by \cref{pr:dfs-mgl,pr:bfs-mgl}.
    The scattered points are based on 100 independently generated trees
    with depth $D=14$ and uniformly sampled parameters $\mu\in\{5,\dots,14\}$
    and $\log(\sigma^2)\in [-2,2]$.
    The most deciding feature is the goal peak $\mu$, but DFS also benefits
    from a smaller $\sigma^2$.
    The decision boundary gets $74\%$ of the winners correct.
    %
    The right graph shows the decision boundary predicted by
    \cref{co:bg-bfs,co:bg-dfs} for the binary grammar.
    The scattered points are based on 100 independently
    generated binary grammar problems of depth $D=14$
    with uniformly sampled (single) goal level $g\in \{8,\dots,14\}$
    and $\log(p_g)\in[-4,0]$.
    DFS benefits from a deeper goal level and higher goal probability
    compared to BFS.
    The decision boundary gets $87\%$ of the instances correct.
    %
    Most ties (green dashes) are due to no goal being present.
  }
  \label{fig:decision-boundaries}
\end{figure}

By comparing the expected runtimes of
\cref{pr:tdfs-sgl,pr:tbfs-sgl};
of \cref{pr:dfs-mgl,pr:bfs-mgl}; and
of \cref{co:bg-bfs,co:bg-dfs}, decision boundaries of which
algorithm is the better can be obtained.
\cref{fig:decision-boundaries} shows these boundaries together
with actual outcomes of which algorithm was faster on
randomly generated instances with the given parameters.

The single goal level plot shows that BFS likes goals
closer to the root, and that decision boundary given by
\cref{pr:bfs-vs-dfs-sgl} predicts the winner almost perfectly.
It only fails in instances very close to the boundary.

In the decision boundary for the tree with multiple goal levels,
we plot the decision boundary as a function of goal peak
and goal spread in the Gaussian binary tree model.
In addition to finding that BFS prefers a higher goal peak (lower $\mu$),
we find that BFS also benefits relative to DFS from a greater
spread $\sigma$.
We can explain this result in light of \cref{pr:bfs-vs-dfs-sgl}.
Roughly, a level is \emph{relevant} only if it has high enough
goal probability that there is a substantial chance the level has a goal.
For the relevant levels, a high goal probability $(\geq 0.2)$, will
make the level give the same expected search time to both BFS and DFS
if it is located midway between 0 and $D$.
For smaller goal probabilities, e.g.\ $p_i \approx 2^{-2k}$,
level $i$ will benefit BFS more than DFS if $i< D/2-k$.
Now, when the spread is low, only a single level is relevant (the mean level $\mu$)
and it has high goal probability (as much as $p_\mu=1/2$).
When the spread increases, BFS is benefited in two ways:
First, the probability $p_\mu$ decreases, which benefits BFS
according to \cref{pr:bfs-vs-dfs-sgl}.
Second, additional levels $\mu-1$ and $\mu+1$ become relevant.
As their goal probabilities are small, BFS will benefit from both of
those levels unless $\mu$ is significantly closer to $D$ than to 0.
The prediction accuracy is slightly lower than in the single
goal level case, plausibly due to
the increased random component of the goal model.

Finally, with the binary grammar, we experiment with adjusting
the goal probability and the goal level.
It can be seen that DFS clearly benefits from a higher goal
probability to a much larger extent than BFS.
Increasing the goal probability by a factor 10 shifts the
advantage about as much as shifting the goal level by 1.
It is unsurprising that DFS benefits from a high goal probability,
since when the goal probability is high, a random trajectory down
through the graph is likely to hit a goal fast.

Overall, the decision boundaries largely match empirical outcomes.

\subsection{Empirical Averages}

The data reported in \cref{tab:sgl,tab:mgl,tab:bg}
is based on an average over 1000 independently generated search
problems with depth $D=14$.
\begin{itemize}
\item The first number in each box is the empirical average,
\item the second number is the analytical estimate from previous sections, and
\item the third number is the percentage error of the analytical estimate.
\end{itemize}

For certain parameter settings, there is only a small chance
($<10^{-3}$) that there are no goals. In such circumstances, all
1000 generated search graphs typically inhabit a goal,
and so the empirical search times will be comparatively small.
However, since a tree of depth 14 has about $2^{15}\approx 3\cdot 10^{5}$ nodes
(and a search algorithm must search through all of them in case there is no goal),
the rarely occurring event of no goal can still influence the
\emph{expected} search time substantially.
To avoid this sampling problem, we have ubiquitously discarded all
instances where no goal is present,
and compared the resulting averages to the
analytical expectations \emph{conditioned on at least one goal being present}.
These modified analytical expectations are obtained by removing
the term corresponding to `no goal' and renormalising the probabilities.
Details are discussed in connections to the results above.
Since the calculation of the probability that no goal exists
and the search time when no goal exists are both uncontroversial,
there is limited reason to verify these parts experimentally.

\todo{one reviewer complains about this data missing}

\paragraph{Complete Tree}

\begin{table}
\begin{subtable}{0.5\textwidth}
\centering
\begin{tabular}{|l|r|r|r|}
\hline
$g \backslash p_g$ & 0.001 & 0.01 & 0.1 \\ \hline
\num{5} & & \num{46} & \num{40} \\
& & \num[math-rm=\mathit]{47} & \num[math-rm=\mathit]{40} \\
& & \SI[round-precision=1]{0.7}{\percent} & \SI[round-precision=1]{0.4}{\percent} \\
\hline
\num{8} & \num{369} & \num{332} & \num{264} \\
& \num[math-rm=\mathit]{378} & \num[math-rm=\mathit]{333} & \num[math-rm=\mathit]{265} \\
& \SI[round-precision=2]{2.3}{\percent} & \SI[round-precision=1]{0.3}{\percent} & \SI[round-precision=1]{0.2}{\percent} \\
\hline
\num{11} & \num{2747} & \num{2143} & \num{2056} \\
& \num[math-rm=\mathit]{2744} & \num[math-rm=\mathit]{2147} & \num[math-rm=\mathit]{2057}  \\
& \SI[round-precision=1]{0.1}{\percent} & \SI[round-precision=1]{0.2}{\percent} & \SI[round-precision=1]{0.0}{\percent} \\
\hline
\num{14} & \num{17364.820} & \num{16482.680} & \num{16392.680}  \\
& \num[math-rm=\mathit]{17383.000} & \num[math-rm=\mathit]{16483.000} & \num[math-rm=\mathit]{16393.000} \\
& \SI[round-precision=1]{0.1}{\percent} & \SI[round-precision=1]{0.0}{\percent} & \SI[round-precision=1]{0.0}{\percent} \\
\hline
\end{tabular}
\caption{BFS single goal level}
\label{tab:sgl-bfs}
\end{subtable}
\begin{subtable}{0.5\textwidth}
\centering
\begin{tabular}{|l|r|r|r|r|r|r|}
\hline
$g \backslash p_g$ & 0.001 & 0.01 & 0.1 \\ \hline
\num{5} & & \num{14678} & \num{8205} \\
& & \num[math-rm=\mathit]{14998} & \num[math-rm=\mathit]{8052} \\
& & \SI[round-precision=2]{2.1999999999999997}{\percent} & \SI[round-precision=2]{1.9}{\percent} \\
\hline
\num{8} & \num{14533} & \num{9832} & \num{1104} \\
& \num[math-rm=\mathit]{15623.370} & \num[math-rm=\mathit]{9966.700} & \num[math-rm=\mathit]{1154.000} \\
& \SI[round-precision=2]{7.5}{\percent} & \SI[round-precision=2]{1.4}{\percent} & \SI[round-precision=2]{4.5}{\percent} \\
\hline
\num{11} & \num{11200} & \num{1534} & \num{152} \\
& \num[math-rm=\mathit]{11138} & \num[math-rm=\mathit]{1586} & \num[math-rm=\mathit]{146} \\
& \SI[round-precision=1]{0.5}{\percent} & \SI[round-precision=2]{3.4000000000000004}{\percent} & \SI[round-precision=2]{4.1000000000000005}{\percent} \\
\hline
\num{14} & \num{1971} & \num{208} & \num{30} \\
& \num[math-rm=\mathit]{2000} & \num[math-rm=\mathit]{200} & \num[math-rm=\mathit]{20} \\
& \SI[round-precision=2]{1.4000000000000001}{\percent} & \SI[round-precision=2]{4.2}{\percent} & \SI[round-precision=2]{34.599999999999994}{\percent} \\
\hline
\end{tabular}
\caption{DFS single goal level}
\end{subtable}
\caption{BFS and DFS performance in the single goal level model
with depth $D=14$,
where $g$ is the goal level and $p_g$ the goal probability.
Each box contains empirical average/\emph{analytical expectation}/error percentage.
}
\label{tab:sgl}
\end{table}

The accuracy of the predictions of \cref{pr:tbfs-sgl,pr:tdfs-sgl}
are shown in \cref{tab:sgl}, and the accuracy of
\cref{pr:dfs-mgl,pr:bfs-mgl} in \cref{tab:mgl}.
The relative error is always small for BFS ($<10\%$).
For DFS the error is generally within $20\%$, except when the
search time is small ($<35$ nodes are explored),
in which case the absolute error is always small.
These boundary plots show that the analysis generally
predicts the correct BFS vs.\ DFS winner.

As discussed in \cref{sec:sgl}, our BFS results can be compared with
the worst case IDA* result $2^{g+2}$ by \citet{Korf2001}.
Comparing \citeauthor{Korf2001}'s results to a doubling
of the empirical averages in \cref{tab:sgl-bfs} still yields that
\citeauthor{Korf2001}'s predictions are 33-50\% overestimates
compared to empirical outcomes.
This is unsurprising given that \citeauthor{Korf2001}'s estimates
are intended for the worst case.
We did not find
a natural way of adapting \citeauthor{Korf2001}'s results to the
multiple goal level scenarios.

\begin{table}
\begin{subtable}{0.5\textwidth}
\centering
\begin{tabular}{|l|r|r|r|r|}
\hline
\!\!$\mu\backslash \sigma$\!\!\!\!\!\! & 0.1 & 1 & 10 & 100\\ \hline
5 & \num{37} & \num{43} & \num{90} & \num{225}\\
 & \num[math-rm=\mathit]{37} & \num[math-rm=\mathit]{41} & \num[math-rm=\mathit]{83} & \num[math-rm=\mathit]{210}\\
 & \SI[round-precision=1]{0.537056928034}{\percent}  & \SI[round-precision=2]{5.02857142857}{\percent}  & \SI[round-precision=2]{7.86838340486}{\percent}  & \SI[round-precision=2]{6.36577673138}{\percent} \\
\hline
8 & \num{261} & \num{171} & \num{119} & \num{211}\\
 & \num[math-rm=\mathit]{261} & \num[math-rm=\mathit]{173} & \num[math-rm=\mathit]{119} & \num[math-rm=\mathit]{210}\\
 & \SI[round-precision=1]{0.0}{\percent}  & \SI[round-precision=1]{0.895973935304}{\percent}  & \SI[round-precision=1]{0.167238063383}{\percent}  & \SI[round-precision=1]{0.457655107337}{\percent} \\
\hline
11 & \num{2048} & \num{952} & \num{303} & \num{249}\\
 & \num[math-rm=\mathit]{2049} & \num[math-rm=\mathit]{952} & \num[math-rm=\mathit]{304} & \num[math-rm=\mathit]{247}\\
 & \SI[round-precision=1]{0.0}{\percent}  & \SI[round-precision=1]{0.0}{\percent}  & \SI[round-precision=1]{0.345497022145}{\percent}  & \SI[round-precision=1]{0.781688447046}{\percent} \\
\hline
\!14\! & \num{16210} & \num{5159} & \num{968} & \num{332}\\
 & \!\!\num[math-rm=\mathit]{16152} & \num[math-rm=\mathit]{5136} & \num[math-rm=\mathit]{960} & \num[math-rm=\mathit]{329}\\
 & \SI[round-precision=1]{0.355450821695}{\percent}  & \SI[round-precision=1]{0.448882445518}{\percent}  & \SI[round-precision=1]{0.816710031801}{\percent}  & \SI[round-precision=1]{0.940335276092}{\percent} \\
\hline
\end{tabular}
\caption{BFS multi goal level}
\end{subtable}
\begin{subtable}{0.5\textwidth}
\centering
\begin{tabular}{|l|r|r|r|r|}
\hline
\!\!$\mu\backslash \sigma$\!\!\!\! & 0.1 & 1 & 10 & 100\\ \hline
5 & \num{5374} & \num{8572} & \num{3404} & \num{385}\\
 & \num[math-rm=\mathit]{5949} & \num[math-rm=\mathit]{10073} & \num[math-rm=\mathit]{3476} & \num[math-rm=\mathit]{379}\\
 & \SI[round-precision=2]{10.6936120409}{\percent}  & \SI[round-precision=2]{17.5159993561}{\percent}  & \SI[round-precision=2]{2.11729876263}{\percent}  & \SI[round-precision=2]{1.73651608221}{\percent} \\
\hline
8 & \num{677} & \num{1233} & \num{454} & \num{252}\\
 & \num[math-rm=\mathit]{743} & \num[math-rm=\mathit]{1259} & \num[math-rm=\mathit]{473} & \num[math-rm=\mathit]{259}\\
 & \SI[round-precision=2]{9.7884339982}{\percent}  & \SI[round-precision=2]{2.07519333344}{\percent}  & \SI[round-precision=2]{4.18408200977}{\percent}  & \SI[round-precision=2]{2.89740341989}{\percent} \\
\hline
11 & \num{97.38} & \num{168} & \num{117} & \num{210}\\
 & \num[math-rm=\mathit]{92} & \num[math-rm=\mathit]{157} & \num[math-rm=\mathit]{106} & \num[math-rm=\mathit]{211}\\
& \SI[round-precision=2]{4.54918874512}{\percent}  & \SI[round-precision=2]{6.3541170871}{\percent}  & \SI[round-precision=2]{9.10329558035}{\percent}  & \SI[round-precision=1]{0.799961906576}{\percent} \\
\hline
14 & \num{24} & \num{43} & \num{81} & \num{213}\\
 & \num[math-rm=\mathit]{11} & \num[math-rm=\mathit]{33} & \num[math-rm=\mathit]{74} & \num[math-rm=\mathit]{205}\\
 & \SI[round-precision=2]{51.5833333333}{\percent}  & \SI[round-precision=2]{24.1816505302}{\percent}  & \SI[round-precision=2]{8.91743119266}{\percent}  & \SI[round-precision=2]{4.02584027713}{\percent} \\
\hline
\end{tabular}
\caption{DFS multi goal level}
\end{subtable}
\caption{BFS and DFS performance in Gaussian binary trees with depth $D=14$.
Each box contains empirical average/\emph{analytical expectation}/error percentage.
}
\label{tab:mgl}
\end{table}

\paragraph{Grammar}
The binary grammar model of \cref{sec:bg}
serves to verify the general estimates of
\cref{pr:dfs-cb,pr:bfs-cb}.
The results are shown in \cref{tab:bg}.
The estimates for BFS are accurate
as usual
($<3\%$ error).
With few exceptions, the lower and the upper bounds $\tdfsbl$ and $\tdfsbu$
of  \cref{co:bg-dfs} for DFS differ by at most $50\%$ on the
respective sides from the true (empirical) average.
The arithmetic mean $\tdfsb$ often
give surprisingly accurate predictions ($<4\%$) except
when $\tdfsbl$ and $\tdfsbu$ leave wide margins as to the
expected search time (when $g=14$, the margin is
up to $84\%$ downwards and $125\%$ upwards).
Even then, the $\tdfsb$ error remains within $30\%$.

\begin{table}
\begin{subtable}{0.5\textwidth}
\centering
\begin{tabular}{|l|r|r|r|}
\hline
$g\backslash p_g$ & 0.001 & 0.01 & 0.1\\ \hline
\num{5} & & \num{47} & \num{41} \\
& & \num[math-rm=\mathit]{47} & \num[math-rm=\mathit]{40} \\
& & \SI[round-precision=1]{0.2}{\percent} & \SI[round-precision=2]{1.7000000000000002}{\percent} \\
\hline
\num{8} & \num{376} & \num{332} & \num{266} \\
& \num[math-rm=\mathit]{378} & \num[math-rm=\mathit]{333} & \num[math-rm=\mathit]{265} \\
& \SI[round-precision=1]{0.6}{\percent} & \SI[round-precision=1]{0.4}{\percent} & \SI[round-precision=1]{0.3}{\percent} \\
\hline
\num{11} & \num{2751} & \num{2145} & \num{2058} \\
& \num[math-rm=\mathit]{2744} & \num[math-rm=\mathit]{2147} & \num[math-rm=\mathit]{2057} \\
& \SI[round-precision=1]{0.3}{\percent} & \SI[round-precision=1]{0.1}{\percent} & \SI[round-precision=1]{0.0}{\percent} \\
\hline
\num{14} & \num{17372} & \num{16479} & \num{16394} \\
& \num[math-rm=\mathit]{17383} & \num[math-rm=\mathit]{16483} & \num[math-rm=\mathit]{16393} \\
& \SI[round-precision=1]{0.1}{\percent} & \SI[round-precision=1]{0.0}{\percent} & \SI[round-precision=1]{0.0}{\percent} \\
\hline
\end{tabular}
\caption{BFS $\tbfsb$}
\end{subtable}
\begin{subtable}{0.5\textwidth}
\centering
\begin{tabular}{|l|r|r|r|}
\hline
$g\backslash p_g$ & 0.001 & 0.01 & 0.1\\ \hline
\num{5} & & \num{30915} & \num{27837} \\
& & \num[math-rm=\mathit]{31365} & \num[math-rm=\mathit]{30186} \\
& & \SI[round-precision=2]{1.5}{\percent} & \SI[round-precision=2]{8.4}{\percent} \\
\hline
\num{8} & \num{27999} & \num{25157} & \num{15494} \\
& \num[math-rm=\mathit]{27407} & \num[math-rm=\mathit]{24421} & \num[math-rm=\mathit]{15203} \\
& \SI[round-precision=2]{2.1}{\percent} & \SI[round-precision=2]{2.9000000000000004}{\percent} & \SI[round-precision=2]{1.9}{\percent} \\
\hline
\num{11} & \num{17284} & \num{5932} & \num{1815} \\
& \num[math-rm=\mathit]{16787} & \num[math-rm=\mathit]{5806} & \num[math-rm=\mathit]{1788} \\
& \SI[round-precision=2]{2.9000000000000004}{\percent} & \SI[round-precision=2]{2.1}{\percent} & \SI[round-precision=2]{1.5}{\percent} \\
\hline
\num{14} & \num{1304} & \num{122} & \num{26} \\
& \num[math-rm=\mathit]{1522} & \num[math-rm=\mathit]{165} & \num[math-rm=\mathit]{20} \\
& \SI[round-precision=2]{16.7}{\percent} & \SI[round-precision=2]{34.8}{\percent} & \SI[round-precision=2]{21.6}{\percent} \\
\hline
\end{tabular}
\caption{Average DFS $\tdfsb$}
\end{subtable}
\begin{subtable}{0.5\textwidth}
\centering
\begin{tabular}{|l|r|r|r|}
\hline
$g\backslash p_g$ & 0.001 & 0.01 & 0.1\\ \hline
\num{5} & & \num{30915} & \num{27837} \\
& & \num[math-rm=\mathit]{30711} & \num[math-rm=\mathit]{29080} \\
& & \SI[round-precision=1]{0.7000000000000001}{\percent} & \SI[round-precision=2]{4.5}{\percent} \\
\hline
\num{8} & \num{27999} & \num{25157} & \num{15494} \\
& \num[math-rm=\mathit]{25737} & \num[math-rm=\mathit]{22151} & \num[math-rm=\mathit]{12072} \\
& \SI[round-precision=2]{8.1}{\percent} & \SI[round-precision=2]{11.899999999999999}{\percent} & \SI[round-precision=2]{22.1}{\percent} \\
\hline
\num{11} & \num{17284} & \num{5932} & \num{1815} \\
& \num[math-rm=\mathit]{14164} & \num[math-rm=\mathit]{3822} & \num[math-rm=\mathit]{919} \\
& \SI[round-precision=2]{18.099999999999998}{\percent} & \SI[round-precision=2]{35.6}{\percent} & \SI[round-precision=2]{49.4}{\percent} \\
\hline
\num{14} & \num{1304} & \num{122} & \num{26} \\
& \num[math-rm=\mathit]{809} & \num[math-rm=\mathit]{54} & \num[math-rm=\mathit]{4} \\
& \SI[round-precision=2]{38.0}{\percent} & \SI[round-precision=2]{55.7}{\percent} & \SI[round-precision=2]{84.39999999999999}{\percent} \\
\hline
\end{tabular}
\caption{Lower DFS $\tdfsbl$}
\end{subtable}
\begin{subtable}{0.5\textwidth}
\centering
\begin{tabular}{|l|r|r|r|}
\hline
$g\backslash p_g$ & 0.001 & 0.01 & 0.1\\ \hline
\num{5} & & \num{30915} & \num{27837} \\
& & \num[math-rm=\mathit]{32019} & \num[math-rm=\mathit]{31293} \\
& & \SI[round-precision=2]{3.5999999999999996}{\percent} & \SI[round-precision=2]{12.4}{\percent} \\
\hline
\num{8} & \num{27999} & \num{25157} & \num{15494} \\
& \num[math-rm=\mathit]{29075} & \num[math-rm=\mathit]{26690} & \num[math-rm=\mathit]{18335} \\
& \SI[round-precision=2]{3.8}{\percent} & \SI[round-precision=2]{6.1}{\percent} & \SI[round-precision=2]{18.3}{\percent} \\
\hline
\num{11} & \num{17283} & \num{5932} & \num{1815} \\
& \num[math-rm=\mathit]{19411} & \num[math-rm=\mathit]{7789} & \num[math-rm=\mathit]{2657} \\
& \SI[round-precision=2]{12.3}{\percent} & \SI[round-precision=2]{31.3}{\percent} & \SI[round-precision=2]{46.400000000000006}{\percent} \\
\hline
\num{14} & \num{1304} & \num{122} & \num{25} \\
& \num[math-rm=\mathit]{2236} & \num[math-rm=\mathit]{275} & \num[math-rm=\mathit]{36} \\
& \SI[round-precision=2]{71.5}{\percent} & \SI[round-precision=3]{125.29999999999998}{\percent} & \SI[round-precision=2]{41.099999999999994}{\percent} \\
\hline
\end{tabular}
\caption{Upper DFS $\tdfsbu$}
\end{subtable}
\caption{
BFS and DFS performance in binary grammars of depth $D=14$.
Empirical DFS performance is compared to the upper and lower bounds
of \cref{co:bg-dfs}, as well as their arithmetic average.
In these experiments, goals are distributed on a single goal level $g$
with goal probability $p_g$.
The BFS estimates $\tbfsb$ are highly accurate, and the averaged DFS estimates $\tdfsb$ are mostly accurate.
Each box contains empirical average/\emph{analytical expectation}/error percentage.
}
\label{tab:bg}
\end{table}

\subsection{N-Puzzle}
\label{sec:n-puzzle}

\begin{figure}
  \centering
  \vspace{-0.5cm}
  \begin{TAB}(e,2em,2em){|c:c:c|}{|c:c:c|}
    6 &  & 8\\
    3 & 4 & 7\\
    2 & 5 & 1\\
  \end{TAB}
  \caption{8-Puzzle.
    At any stage, the empty tile may be swapped
    with an adjacent tile.
    The goal is to sort the tiles with the empty tile
    at the bottom right.
  }
  \label{fig:n-puzzle}
\end{figure}

In this section, we apply the theory of \cref{sec:cb,sec:graph-param}
to the 8-Puzzle problem (\cref{fig:n-puzzle}),
estimating expected search time from a local sample.
We focus on evaluating BFS (graph search)
and DFS tree search for this problem,
as DFS graph search consistently cut itself off from significant
portions of the 8-puzzle search space
(see discussion in \cref{sec:graph-algorithms},
including \cref{fig:dfs-cut-off} on \cpageref{fig:dfs-cut-off}).

\paragraph{Local branching factors}

The 8-Puzzle appears to approximately satisfy uniformity \cref{as:uniformity},
as can be seen in \cref{fig:branching-factors}.
\begin{figure}
  \centering
  \includegraphics{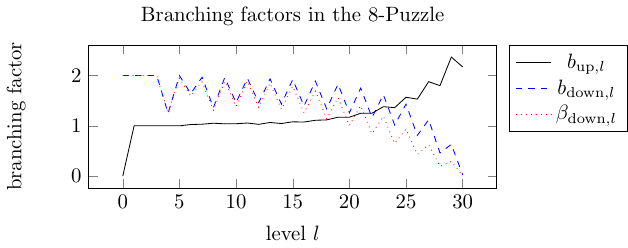}
  \caption{
    Average branching factors as a function of depth.
    Here $\bup[,l]$ is short for average the average value of
    $\bup(v)$ given that $\lvl(v)=l$,
    and similarly for $\bdown[,l]$.
    The branching factors roughly satisfy the uniformity assumption
    up until level 22.
    The majority of the nodes of the 8-puzzle are on level
    22 or above.
    Note also that the global branching factor $\bgdown$ is slightly
    lower than the local branching factor $\bdown$, as expected.
  }
  \label{fig:branching-factors}
\end{figure}
\todo{maybe change starting node?}
Running BFS up until depth 9 and using the average from levels
6 to 9, we find that
\begin{itemize}\label{list:param}
\item $\bup \approx 1.035$
\item $\bside = 0$ (due to invariants in the N-Puzzle,
  different nodes can only be reached in even and odd number of steps)
\item $\bdown \approx 1.80$ 
\item $\bgup \approx \bup$ (the data was insufficient to get a better estimate)
\item $\bgdown \approx 1.66$
\end{itemize}

Despite using levels a few steps away from the start, the
parameters vary somewhat depending on whether the empty tile
started in a corner, in the middle of an edge, or in the middle.
We use a weighted average according to the distribution of a randomly
sampled problem, with the middle edge and corner cases having
relative weight 4 each, and the middle case having relative weight 1.

The branching factors are core to our theory.
They allow us to approximate the length-to-depth distribution $P_n(d)$
for the probability at being at depth $d$ after $n$ steps.
The correspondence between our approximation \eqref{eq:pnd} of
$P_n(d)$ on page \pageref{eq:pnd} and the empirical
distribution of search depths is shown in \cref{fig:depth}.

\begin{figure}
  \centering
  \begin{subfigure}[l]{0.49\linewidth}
    \includegraphics{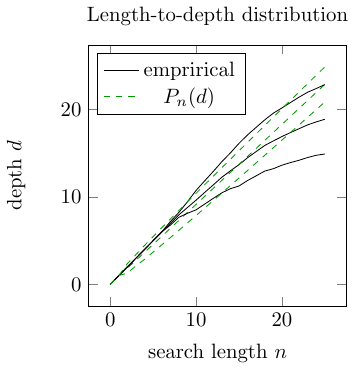}
  \end{subfigure}
  \begin{subfigure}[r]{0.49\linewidth}
    \includegraphics{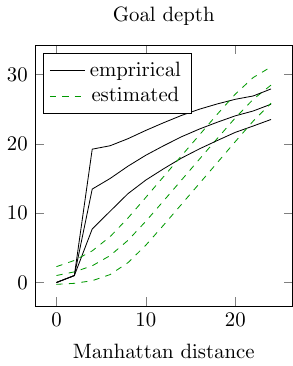}
  \end{subfigure}
  \caption{
    Left, the length-to-depth distribution $P_n(d)$ matched to the
    empirical depth-distribution.
    Right, the estimated goal distribution compared to the empirical goal
    distribution.
    All distributions are shown
    together with one standard deviation above and below.
    $P_n(d$ approximately matches the empirical distribution
    until about level 20, where the branching factor estimates
    ceases to be valid.
    The goal distribution only roughly matches the truth.
  }
  \label{fig:depth}
  \label{fig:goal-dist}
\end{figure}
\todo{maybe change starting node?}

\paragraph{Goal probability estimates}
A natural problem feature of N-Puzzle instances is the
Manhattan distance $\mh(v_0, v^*)$ between the starting node $v_0$
and the goal node $v^*$ \citep{Russell2010}.
An N-puzzle configuration can be represented with the coordinates for
the different tiles,
$v = \langle (x_0,y_0), (x_1,y_1),\dots,(x_n,y_n) \rangle$
where $(x_i,y_i)$ is the coordinates of tile $i$ and $i=0$ represents
the empty tile.
The Manhattan distance is then
\[
  \mh(v, u) = \sum_{i=0}^N |x_i^u-x_i^v| + |y_i^u-y_i^v|.
\]
Note that $\mh(v_0,v^*)$ needs to be divided by 2 in order
to be an admissible heuristic.

Investigating the correlation between the Manhattan distance $\mh$
and the actual distance $\dist$, we find that
$\E[\dist(v, u)\mid \mh(v, u) =m]\approx 1.5m$
and
$\Std(\dist(v, u)\mid \mh(v, u) =m)\approx 3.5$.
This gives us a mean goal level $\mu = 1.5\mh(v_0, v^*)$
and standard deviation $\sigma=3.5$.
We use a Gaussian-inspired goal probability vector $\pnp$
with $\pnpi_i = c\cdot e^{-\frac{(i-\mu)^2}{2\cdot \sigma^2}}/\bgdown^i$
where $c = 1/\sum_{i=-\infty}^{\infty}e^{-\frac{(i-\mu)^2}{2\cdot\sigma^2}}$ is a normalising constant
and $\bgdown^i$ is the expected number of nodes on level $i$ based on the
global branching factor $\bgdown$.

The theoretical goal distribution is matched against the true goal
distribution in \cref{fig:goal-dist}.

\paragraph{Search time estimates}

\begin{figure}
  \centering
  \includegraphics{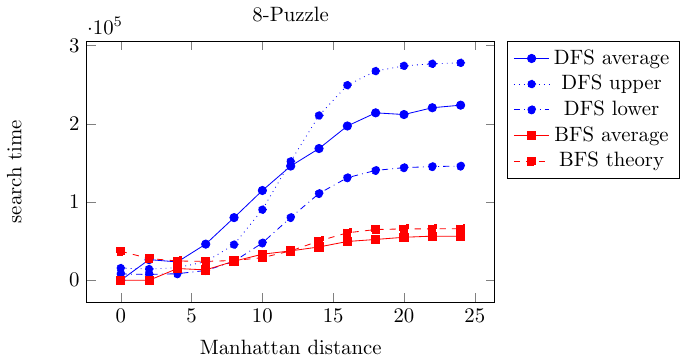}
  \caption{
    8-Puzzle search times for DFS tree search and BFS.
    The empirical averages are displayed together with the theoretical bounds for DFS,
    and the theoretical estimate for BFS.
    As expected, DFS tree search expands more nodes than BFS.
    Overall, there is a strong match between theory and practice, with average DFS
    search times generally being contained within the bounds, and BFS search times
    closely following their theoretical estimate.
    The empirical averages are based on a 100 runs per Manhattan distance.
  }
  \label{fig:onegoal}
\end{figure}


\begin{figure}
  \centering
  \includegraphics{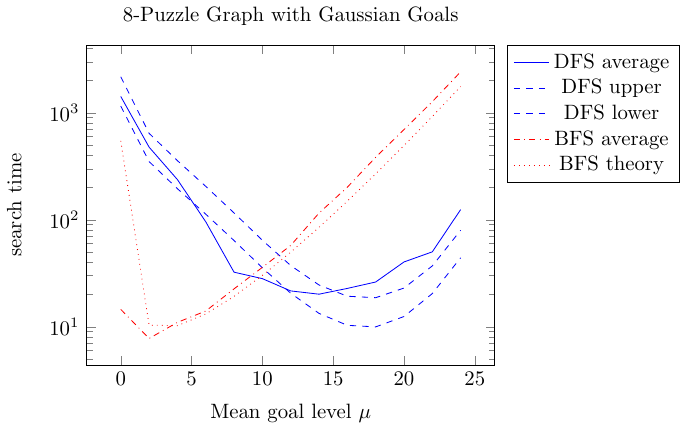}
  \caption{
    Search times for DFS tree search and BFS in an 8-Puzzle graph
    with nodes distributed according to the Gaussian
    goal probability vector
    $p_i = \min\left\{1, c\cdot e^{(i-\mu)^2/(2\sigma^2)} \right\}$
    where $c = \sum_{i=-\infty}^{\infty}e^{(i-\mu)^2/(2\sigma^2)}$ is a normalising constant.
    BFS is better than DFS for mean goal$\mu$ between 1 and 9,
    and DFS is better for $\mu$ between 10 and 20.
    The theoretical bounds slightly overestimate the search time
    of DFS for lower $\mu$, and slightly underestimates DFS
    search time for higher $\mu$,
    possibly as a result of the branching factors estimates
    being based on the middle levels of the graph.
    The empirical averages are based on a 100 runs per mean goal level.
  }
  \label{fig:50goals}
\end{figure}

\begin{figure}
  \centering
  \includegraphics{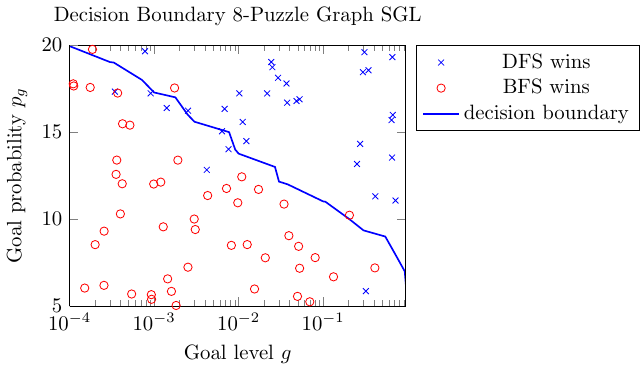}
  \caption{
    Empirical outcomes together with the theoretical decision boundary
    between DFS tree search and BFS in a modified 8-puzzle graph with
    a single goal level $g$ with goal probability $p_g$.
    As expected, DFS benefits by lower levels with higher goal probability.
    The decision boundary classifies $90\%$ of the points correctly.
  }
  \label{fig:n-puzzle-dec}
\end{figure}

We compare the search time estimates based on the above parameters
and the theory developed in \cref{sec:cb,sec:graph-param} with empirical
averages.
The results are displayed in \cref{fig:onegoal}.
Our averages are based on 100 randomly sampled problems of each occurring
Manhattan distance.
To avoid the changing dynamics of the lowest levels (see \cref{fig:branching-factors}),
we set the radius of search to 20.

As can be seen in \cref{fig:onegoal}, our theoretical
model predicts expected search time reasonably accurately.
We find it encouraging that our methods allow us to predict
the search time of especially DFS so well.
The theoretical estimates are off slightly for the levels where
the goal distribution is inaccurate (\cref{fig:goal-dist}).
To separate the sources of error, we also investigate two
8-Puzzle search problems with artificially sampled goals.
In these problems, we also take the opportunity to increase the
number of goals to give DFS a better chance in comparison.
In the first problem,
we used a Gaussian goal probability vector (\cref{fig:50goals}).
In the second problem, we used single goal level with varying
goal probability (\cref{fig:n-puzzle-dec}).

As expected, DFS beats BFS when many goals are located far from the start,
which is the case in the Gaussian model in \cref{fig:50goals}
with high $\mu$.
In the single goal level model high $g$ means goals located
far from the root, and high $p_g$ means high chance of random walking
into one.
The points of DFS takeover are well predicted by our theory
(\cref{fig:50goals,fig:n-puzzle-dec}).
In the original 8-puzzle, DFS struggles to random walk into the
single goal, and always needs to explore a substantial portion of
the graph in order to find a goal (note that \cref{fig:onegoal}
is not a logplot, as opposed to \cref{fig:50goals}).
Unsurprisingly, BFS graph search is virtually always faster than
DFS tree search in this setting.

\section{Adapting Results to Heuristic Search}
\label{sec:heuristic}

In many situations, heuristic search methods like A* or heuristic DFS
are better options than the uninformed methods of BFS and DFS discussed
in this paper.
In this section, we discuss how our results in previous
sections can be generalised to heuristic search.

\begin{definition}[Heuristic levels]
  Let $g: S\to\SetR$ be a \emph{consistent heuristic function},\footnote{
    A heuristic function is \emph{consistent} if it is admissible
    and satisfies the triangle inequality. See \citet[p.~95]{Russell2010}
    for details.
  }
  and let $f(v)\geq \dist(v_0,v)$ be the length of the current search
  path reaching $v$. Let $h(v) = f(v) + g(v)$.
  We define two generalisations of \cref{def:level}:
  Let the \emph{$g$-level $l$} be the set of nodes with $g(v)=l$,
  and let the \emph{$h$-level $l$} be the set of nodes with $h(v)=l$.
\end{definition}

A popular method for heuristic search is A*, which can be seen
as a generalisation of BFS.
The main difference between BFS and A* is that while BFS expands
the search graph according to levels, A* expands the graph according to
$h$-levels.
In analysing iterative deepening A* in trees, \citet{Korf2001} has
argued that the downwards branching factors remain the
same when considering the considering the tree layered by $g$-levels
instead of levels, and that the goal probability vector is shifted
by a constant $k$ depending on the heuristic
(so $p_g$ for BFS is $p_{g-k}$ for A*).
Other researchers prefer to model the effect of the heuristic as
reducing the branching factor \citep[p.~111]{Russell2010}.
It is an empirical question which model works best in our case.
Investigating this constitutes a promising line of future work.

\begin{algorithm}
  \begin{algorithmic}
    \State path $\gets$ empty list
    \State \Call{DFS-tree-rec}{$N$, $C$, start node, path, radius, $g$}
    \State
    \Function{Heuristic-DFS-rec}{$N$, $C$, $u$, path, radius, $g$}
    \State path.append($u$)
    \If{$C(u)$}
    \Return $u$
    \EndIf
    \If{length(path) $<$ radius}
    \State ranked-neighbours $\gets$ sort($N(u)\setminus$path, $g$)\Comment{with low $g$ first}
    \For{$v$ in ranked-neighbours}
    \State\Call{DFS-tree-rec}{$N$, $C$, $v$, path, radius, $g$}
    \EndFor
    \EndIf
    \EndFunction
  \end{algorithmic}
  \caption{Heuristic DFS tree search}
  \label{alg:heuristic-dfs}
\end{algorithm}

\begin{figure}
  \centering
  \includegraphics{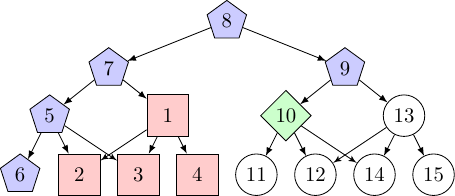}
  \caption{Heuristic DFS may be seen as a DFS search that follows $g$-levels instead
    of levels,
    and that possibly ``backtracks'' up above the initial $g$-level that it started at.}
  \label{fig:heuristic-dfs}
\end{figure}

Heuristically guided versions of DFS include Beam search and
heuristic DFS (see \cref{alg:heuristic-dfs}).
While A* follows $h$-levels, it is most natural to understand heuristic
DFS to follow $g$-levels
(see \cref{fig:heuristic-dfs}).
Heuristic DFS starts at some intermediate $g$-level.
Once it exhausts nodes on $g$-levels below it (assuming it does not
find a goal there), it may find its way to higher $g$-levels than it
started at.
This may be seen as a generalised notion of backtracking.
Extending our theory of DFS search time to heuristic DFS
would involve finding a theory for $g$-level branching factors,
and making the generalised notion of backtracking precise.

\section{Summary and Outlook}
\label{sec:conclusions}

Search and optimisation problems appear in different flavors
throughout the field of artificial intelligence;
in planning, problem solving, games, and learning.
Therefore even minor improvements to search performance
can potentially lead to gains in many aspects of intelligent
systems.
It is even possible to equate intelligence with (Bayesian expectimax)
optimisation performance \citep{Legg2007}.

\paragraph{Summary}
In this paper we have derived analytical
results for expected runtime of BFS and DFS.
\cref{sec:sgl,sec:mgl}
focused on BFS and DFS \emph{tree search} where explored nodes
were not remembered.
A vector $\p=[p_1,\dots,p_D]$ described \emph{a priori} goal probabilities
for the different levels of the tree.
This concrete but general model of goal distribution allowed us
to calculate approximate closed-form expression of both BFS and DFS
average runtime.
Earlier studies have only addressed \emph{worst case} runtimes:
For example \citet{Knuth1975} and followers for DFS;
\citet{Korf2001} and followers for IDA*,
a linear space version of BFS.

\cref{sec:cb} generalised the model of \cref{sec:sgl,sec:mgl}
to non-tree graphs.
In addition to the goal probability vector $\p$, the graph search
analysis required additional structural information in the form of a
length-to-depth counter $L$,
which was inferred from branching factors in \cref{sec:graph-param}.
The DFS graph search estimates also took the form of less precise bounds.
The analysis of \cref{sec:cb} does not supersede the analysis in \cref{sec:sgl,sec:mgl},
as the bounds of \cref{sec:cb} become uninformative when the graph
is a tree.
The analytical results are generally consistent with empirical outcomes.

In \cref{sec:heuristic} we also outlined how our results can be
extended to heuristic search.

\paragraph{The value of expected search time}

Several applications are naturally directed to maximising
expected utility, including games and reinforcement learning.
In such contexts, average performance is often more important
than worst case performance.
Indeed, in our model, worst case performance is always
$2^{D+1}$ for both BFS and DFS since it is not \emph{a priori} necessary that a
goal exists.
Our expected runtime estimates are much more informative.

Being able to estimate expected search time for BFS and DFS
is valuable for several reasons.
First, and most obvious, it can be used for allocating resources,
and in deciding whether a problem is approachable with BFS or DFS
at all.
Second, expected search time can guide the choice of
algorithm, and the choice of graph representation.
Choosing the best algorithm and the best representation
can improve performance substantially.
Third, the results also offer theoretical insight into BFS
and DFS.
As BFS and DFS are opposites, and in a sense are the most fundamental
ways to search, we have high hopes that our results and
techniques can be useful both in the construction of
new search algorithms, and in the analysis of existing ones.
For example, A* and IDA* may be viewed as generalisations of BFS,
and Beam Search
and Greedy Best-First as generalisations of DFS.
We find the DFS tree search results for graphs
developed in \cref{sec:cb,sec:graph-param} especially promising,
and believe they may find use outside the domain considered in this paper.

\section*{Acknowledgements}
Thanks to David Johnston and Aaron Stockdill
for proof reading drafts of this paper.


\begin{thebibliography}{}

\bibitem[Aarts and Lenstra, 2003]{Aarts2003}
Aarts, E. and Lenstra, J.~K., editors (2003).
\newblock {\em {Local search in combinatorial optimization}}.

\bibitem[{Arbelaez Rodriguez}, 2011]{ArbelaezRodriguez2011}
{Arbelaez Rodriguez}, A. (2011).
\newblock {\em {Learning During Search}}.
\newblock Phd thesis, University of Paris-Sud.

\bibitem[Burke et~al., 2003]{Burke2003}
Burke, E., Hart, E., Kendall, G., Newall, J., Ross, P., and Schulenburg, S.
  (2003).
\newblock {Hyper Heuristics: an emerging direction in modern search
  technology}.
\newblock In Glover, F. and Kochenberger, G.~A., editors, {\em Handbook of
  Metaheuristics}, pages 457--474. Springer.

\bibitem[Burke et~al., 2013]{Burke2013}
Burke, E.~K., Gendreau, M., Hyde, M., Kendall, G., {Ochoa 1{\~{a}}}, G., Zcan,
  E., and Qu, R. (2013).
\newblock {Hyper-heuristics: a survey of the state of the art}.
\newblock {\em Journal of the Operational Research Society}, 64:1695--1724.

\bibitem[Chen, 1992]{Chen1992}
Chen, P.~C. (1992).
\newblock {Heuristic Sampling: A Method for Predicting the Performance of Tree
  Searching Programs}.
\newblock {\em SIAM Journal on Computing}, 21(2):295--315.

\bibitem[Dejong and Mooney, 1986]{Dejong1986}
Dejong, G. and Mooney, R. (1986).
\newblock {Explanation-based Learning: An Alternative View}.
\newblock {\em Machine Learning}, 1(2):145--176.

\bibitem[Domshlak et~al., 2012]{domshlak2012}
Domshlak, C., Karpas, E., and Markovitch, S. (2012).
\newblock Online speedup learning for optimal planning.
\newblock {\em Journal of Artificial Intelligence Research}, 44:709--755.

\bibitem[Edelkamp and Korf, 1998]{Edelkamp1998}
Edelkamp, S. and Korf, R.~E. (1998).
\newblock {The Branching Factor of Regular Search Spaces}.
\newblock In {\em AAAI}, pages 299--304.

\bibitem[Edelkamp and Schr{\"{o}}dl, 2012]{Edelkamp2012}
Edelkamp, S. and Schr{\"{o}}dl, S. (2012).
\newblock {\em {Heuristic Search}}.
\newblock Morgan Kaufmann Publishers Inc.

\bibitem[Etzioni, 1993]{Etzioni1993}
Etzioni, O. (1993).
\newblock {Acquiring search-control knowledge via static analysis}.
\newblock {\em Artificial Intelligence}, 62:255--301.

\bibitem[Everitt and Hutter, 2015a]{Everitt2015a}
Everitt, T. and Hutter, M. (2015a).
\newblock {Analytical Results on the BFS vs. DFS Algorithm Selection Problem.
  Part I: Tree Search}.
\newblock In {\em 28th Australasian Joint Conference on Artificial
  Intelligence}, pages 157--165.

\bibitem[Everitt and Hutter, 2015b]{Everitt2015b}
Everitt, T. and Hutter, M. (2015b).
\newblock {Analytical Results on the BFS vs. DFS Algorithm Selection Problem.
  Part II: Graph Search}.
\newblock In {\em 28th Australasian Joint Conference on Artificial
  Intelligence}, pages 166--178.

\bibitem[Fink, 1998]{Fink1998}
Fink, E. (1998).
\newblock {How to Solve It Automatically: Selection Among Problem-Solving
  Methods}.
\newblock In {\em Proceedings of the Fourth International Conference on
  Artificial Intelligence Planning Systems}, pages 128--136.

\bibitem[Haim and Walsh, 2008]{Haim2008}
Haim, S. and Walsh, T. (2008).
\newblock {Online estimation of SAT solving runtime}.
\newblock In {\em Theory and Applications of Satisfiability Testing}, volume
  4996 LNCS, pages 133--138. Springer.

\bibitem[Hoos, 2012]{Hoos2012}
Hoos, H.~H. (2012).
\newblock {Programming by optimization}.
\newblock {\em Communications of the ACM}, 55(2):70.

\bibitem[Hopcroft and Ullman, 1979]{Hopcroft1979}
Hopcroft, J.~E. and Ullman, J.~D. (1979).
\newblock {\em {Introduction to automata theory, languages, and computation}}.
\newblock Addison-Weasly.

\bibitem[Hutter et~al., 2014]{Hutter2014}
Hutter, F., Xu, L., Hoos, H.~H., and Leyton-Brown, K. (2014).
\newblock {Algorithm runtime prediction: Methods {\&} evaluation}.
\newblock {\em Artificial Intelligence}, 206(1):79--111.

\bibitem[Kilby et~al., 2006]{Kilby2006}
Kilby, P., Slaney, J., Thi{\'{e}}baux, S., and Walsh, T. (2006).
\newblock {Estimating Search Tree Size}.
\newblock In {\em Proc. of the 21st National Conf. of Artificial Intelligence,
  AAAI, Menlo Park.}

\bibitem[Knuth, 1975]{Knuth1975}
Knuth, D.~E. (1975).
\newblock {Estimating the efficiency of backtrack programs}.
\newblock {\em Mathematics of Computation}, 29(129):122--122.

\bibitem[Korf et~al., 2001]{Korf2001}
Korf, R.~E., Reid, M., and Edelkamp, S. (2001).
\newblock {Time complexity of iterative-deepening-A*}.
\newblock {\em Artificial Intelligence}, 129(1-2):199--218.

\bibitem[Kotthoff, 2014]{Kotthoff2014}
Kotthoff, L. (2014).
\newblock {Algorithm Selection for Combinatorial Search Problems: A Survey}.
\newblock {\em AI Magazine}, pages 1--17.

\bibitem[Leckie and Zukerman, 1998]{Leckie1998}
Leckie, C. and Zukerman, I. (1998).
\newblock {Inductive learning of search control rules for planning}.
\newblock {\em Artificial Intelligence}, 101:63--98.

\bibitem[Legg and Hutter, 2007]{Legg2007}
Legg, S. and Hutter, M. (2007).
\newblock {Universal Intelligence}.
\newblock {\em Minds {\&} Machines}, 17(4):391--444.

\bibitem[Lelis, 2013]{Lelis2013}
Lelis, L. (2013).
\newblock {\em {Cluster-and-Conquer: a Paradigm for Solving State-Space
  Problems}}.
\newblock Phd thesis, University of Alberta.

\bibitem[Minton, 1988]{Minton1988}
Minton, S. (1988).
\newblock {\em {Learning Search Control Knowledge: An Explanation-Based
  Approach}}.
\newblock Kluwer Academic Publishers.

\bibitem[Minton, 1990]{Minton1990}
Minton, S. (1990).
\newblock {Quantitative results concerning the utility of explanation-based
  learning}.
\newblock {\em Artificial Intelligence}, 42(1990):363--391.

\bibitem[Mitchell et~al., 1986]{Mitchell1986}
Mitchell, T., Keller, R., and Kedar-CabeUi, S. (1986).
\newblock {Explanation-based generalization: A unifying view}.
\newblock {\em Machine Learning}, 1(1):47--80.

\bibitem[Nilsson, 1971]{Nilsson1971}
Nilsson, N.~J. (1971).
\newblock {\em {Problem-Solving Methods in Artificial Intelligence}}.
\newblock McGraw-Hill Pub.

\bibitem[Pearl, 1984]{Pearl1984}
Pearl, J. (1984).
\newblock {\em {Heuristics: Intelligent Search Strategies for Computer Problem
  Solving}}.
\newblock Addison-Wesley.

\bibitem[Peixoto, 2015]{Peixoto2015}
Peixoto, T.~P. (2015).
\newblock {The graph-tool python library}.
\newblock {\em figshare}.

\bibitem[Purdom, 1978]{Purdom1978}
Purdom, P.~W. (1978).
\newblock {Tree Size by Partial Backtracking}.
\newblock {\em SIAM Journal on Computing}, 7(4):481--491.

\bibitem[Rice, 1975]{Rice1975}
Rice, J.~R. (1975).
\newblock {The algorithm selection problem}.
\newblock {\em Advances in Computers}, 15:65--117.

\bibitem[Rokicki and Kociemba, 2013]{Rokicki2013}
Rokicki, T. and Kociemba, H. (2013).
\newblock {The diameter of the rubik’s cube group is twenty}.
\newblock {\em SIAM Journal on Discrete Mathematics}, 27(2):1082--1105.

\bibitem[Ross et~al., 2002]{Ross2002}
Ross, P., Schulenburg, S., Marin-Blazquez, J.~G., and Hart, E. (2002).
\newblock {Hyper-heuristics: learning to combine simple heuristics in
  bin-packing problems.}
\newblock In {\em GECCO'02}.

\bibitem[Russell and Norvig, 2010]{Russell2010}
Russell, S.~J. and Norvig, P. (2010).
\newblock {\em {Artificial intelligence: a modern approach}}.
\newblock Prentice Hall, third edition.

\bibitem[Smith-Miles et~al., 2014]{Smith-Miles2014}
Smith-Miles, K., Baatar, D., Wreford, B., and Lewis, R. (2014).
\newblock {Towards objective measures of algorithm performance across instance
  space}.
\newblock {\em Computers {\&} Operations Research}, 45:12--24.

\bibitem[Smith-Miles and Lopes, 2012]{Smith-Miles2012}
Smith-Miles, K. and Lopes, L. (2012).
\newblock {Measuring instance difficulty for combinatorial optimization
  problems}.
\newblock {\em Computers and Operations Research}, 39(5):875--889.

\bibitem[Thayer et~al., 2011]{thayer2011}
Thayer, J.~T., Dionne, A.~J., and Ruml, W. (2011).
\newblock Learning inadmissible heuristics during search.
\newblock In {\em ICAPS}.

\bibitem[Thompson, 2011]{Thompson2011}
Thompson, C. (2011).
\newblock {\em {Metareasoning about propagators for constraint satisfaction}}.
\newblock Phd thesis, University of Saskatchewan.

\bibitem[Tolpin et~al., 2013]{tolpin2013}
Tolpin, D., Beja, T., Shimony, S.~E., Felner, A., and Karpas, E. (2013).
\newblock Towards rational deployment of multiple heuristics in a*.
\newblock In {\em Sixth Annual Symposium on Combinatorial Search}.

\bibitem[Tolpin et~al., 2014]{tolpin2014}
Tolpin, D., Betzalel, O., Felner, A., and Shimony, S.~E. (2014).
\newblock Rational deployment of multiple heuristics in ida.
\newblock {\em arXiv preprint arXiv:1411.6593}.

\bibitem[Zahavi et~al., 2010]{Zahavi2010}
Zahavi, U., Felner, A., Burch, N., and Holte, R.~C. (2010).
\newblock {Predicting the performance of IDA* using conditional distributions}.
\newblock {\em Journal of Artificial Intelligence Research}, 37:41--83.

\end{thebibliography}

\pagebreak
\appendix

\section{List of notation}
\label{sec:list-of-notation}

\begin{longtable}{lp{0.85\textwidth}}
$P$
    & Probability\\
$X$, $Y$
    & Random variables\\
$\E[\,\cdot\,]$
    & Expectation of a random variable \\
$\tc(p,m)$
    & Expectation of a truncated geometric variable with parameters $p$ and $m$\\
$O$
    & Big-O notation\\
$\mathit{EC}$
    & Edge cost\\
$h$
    & Heuristic function\\
$g$
    & Accumulated path cost from start node \\
$Q$
    & Objective function\\
$D$
    & Maximum search depth/level\\
$D'$
    & Radius of search (maximum path length DFS search)\\
$p_g$
    & Goal probability at a single goal level $g$\\
$p_k$
    & Goal probability for a level $k$\\
$q_k$
    & $1-p_k$\\
$\p$
    & Vector of probabilities for multiple goal levels\\
$\mu, \sigma^2$
    & Goal peak and goal spread in Gaussian binary tree\\
$\Gamma$
    & Probability that a goal exists\\
$\Gamma_k$
    & Probability that level $k$ has a goal\\
$F_k$
    & Probability that level $k$ has the first goal\\
$\tbfss$, $\tdfss$
    & Expected BFS search time and approximate expected DFS search time
    in a complete tree with a single goal level\\
$\tbfsm$, $\tdfsm$
    & Expected BFS search time and approximate expected DFS search time
    in a complete tree with multiple goal levels\\
$\tbfsc$, $\tdfsc$
    & Expected BFS search time and approximate expected DFS search time
    in a graph with colliding branches\\
$\tbfsb$, $\tdfsb$
    & Expected BFS search time and approximate expected DFS search time
    in the binary grammar problem\\
$\tbfsf$, $\tdfsf$
    & Expected BFS search time and approximate expected DFS search time
      in the full grammar problem\\
  $\dist$
    & Distance (shortest path between two nodes)\\
  $\lvl$
    & Level (distance from start node)\\
$\delta_n$
    & The first node on level $n$ reached by DFS\\
  $K(l, d, r)$, $\bar K(l, d, r)$
    & Depth-to-depth counters, counting the number of (unique and non-unique)
      level $d$ descendants
      are reachable from level $l$ in at most $r$ steps\\
$L(n,d)$, $\bar L(n, d)$
    & Length-to-depth counters, counting the number of (unique and non-unique)
      level $d$ descendants are
      reachable after an average $n$ step path (i.e.\ from $\delta_n$)\\
$\lfg$, $\lbg$
    & Length-to-depth counts for the binary grammar problem
      and the full grammar problem\\
  $l, d$
    & Level/depth in graph\\
  $n$
  & Path length for search in graph\\
$A_{n,d}$
    & Number of nodes reachable from $\delta_n$ not reachable from $\delta_{n+1}$\\
$S_n$
    & Descendants of $\delta_n$\\
$T_n$
    & Descendants of $\delta_n$ that are not descendants of $\delta_{n+1}$\\
$U_n$
    & The number of nodes above level $n$.\\
$\tau_n$
    & The probability that $T_n$ contains a goal (\cref{le:tn-goal})\\
$\phi_n$
    & The probability that $T_n$ inhabits the first goal\\
$b$
    & Branching factor trees\\
$\bl_{\dir}$, $\bgdir$
    & Local and global branching factors in graphs for $\dir\in\{\up,\side,\down\}$\\
$p_{\dir}$
    & Average probability of moving on level in direction $\dir$\\
$p_{\dir_1,\dir_2}$
    & Probability of moving one level in direction $\dir_2$ given came
      from direction $\dir_2$\\
  $\pnp$
  & Goal probability vector for the 8-puzzle\\
  $P_n(d)$
    & Probability at being at depth $d$ after travelling $n$ steps\\
$\epsilon$
    & Empty string\\
\end{longtable}

\end{document}